\newcommand*{\citet}[1]{\AtNextCite{\AtEachCitekey{\defcounter{maxnames}{2}}} \textcite{#1}}
\newcommand*{\citep}[1]{\cite{#1}}
\theoremstyle{plain}
\newtheorem{theorem}{Theorem}[section]
\newtheorem{lemma}[theorem]{Lemma}
\newtheorem{claim}[theorem]{Claim}
\newtheorem{corollary}[theorem]{Corollary}
\theoremstyle{definition}
\newtheorem{definition}[theorem]{Definition}
\newtheorem{assumption}[theorem]{Assumption}
\theoremstyle{remark}
\newtheorem{remark}[theorem]{Remark}
\theoremstyle{definition}
	\definecolor{DarkGreen}{rgb}{0.2,0.6,0.2}
	\definecolor{DarkRed}{rgb}{0.6,0.2,0.2}
	\definecolor{DarkBlue}{rgb}{0.2,0.2,0.6}
	\definecolor{DarkPurple}{rgb}{0.4,0.2,0.4}   
\begin{document}

\title{Differentially Private Reward Estimation with Preference Feedback}
\author {
    Sayak Ray Chowdhury\thanks{Equal contribution.} \thanks{Microsoft Research, India. Email: \texttt{t-sayakr@microsoft.com}  } \quad 
    Xingyu Zhou \footnotemark[1] \thanks{Wayne State University, Detroit, USA.  Email: \texttt{xingyu.zhou@wayne.edu}} \quad
    Nagarajan Natarajan\thanks{Microsoft Research, India. Email: \texttt{nagarajn@microsoft.com}  }
}

\date{}

\maketitle

\begin{abstract}
Learning from preference-based feedback has recently gained considerable traction as a promising approach to align generative models with human interests. Instead of relying on numerical rewards, the generative models are trained using reinforcement learning with human feedback (RLHF). These approaches first solicit feedback from human labelers typically in the form of pairwise comparisons between two possible actions, then estimate a reward model using these comparisons, and finally employ a policy based on the estimated reward model. An adversarial attack in any step of the above pipeline might reveal private and sensitive information of human labelers. In this work, we adopt the notion of \emph{label differential privacy} (DP) and focus on the problem of reward estimation from preference-based feedback while protecting privacy of each individual labelers. Specifically, we consider the parametric Bradley-Terry-Luce (BTL) model for such pairwise comparison feedback involving a latent reward parameter $\theta^* \in \Real^d$. Within a standard minimax estimation
framework, we provide tight upper and lower bounds on the error in estimating $\theta^*$ under both \emph{local} and \emph{central} models of DP. We show, for a given privacy budget $\epsilon$ and number of samples $n$, that the additional cost to ensure label-DP under local model is $\Theta \big(\frac{1}{ e^\epsilon-1}\sqrt{\frac{d}{n}}\big)$, while it is $\Theta\big(\frac{\text{poly}(d)}{\epsilon n} \big)$ under the weaker central model. We perform simulations on synthetic data that corroborate these theoretical results.

\end{abstract}

\newpage
\tableofcontents
\newpage

\section{Introduction}


In recent years, the problem of aligning generative models to human preferences has garnered a lot of interest~\citep{christiano2017deep,glaese2022improving,ouyang2022training}. One of the most promising approaches to achieve this is via preference-based reinforcement learning \citep{christiano2017deep}. It has gained considerable attention across multiple application domains such as game playing \citep{macglashan2017interactive}, large language models \citep{ouyang2022training}, and robotics \citep{shin2023benchmarks}.

\textbf{Preference-based learning:} In standard RL, the agent learns to maximize a numerical reward, which she observes from the environment. In the above applications, however, observing appropriate numerical rewards can be challenging, which could significantly affect the performance of the agent. In such cases, it is of common practice to solicit feedback from a human labeler in the form of pairwise comparisons between two possible actions at every state \citep{christiano2017deep}. Notably, the language model application InstructGPT \citep{ouyang2022training} is based on this feedback model. First, the prompts (states) are sampled from a pre-collected datasest, and then, for each prompt,
a pair of responses (actions) are sampled by deploying the pre-trained model. For each prompt, a human labeler provides pairwise preferences over the responses, which are then used to train a reward model by maximum likelihood estimation, or, equivalently, by cross-entropy minimization \citep{christiano2017deep}. Finally, this reward model is used for a downstream policy training (i.e., finetuning the pre-trained model). This complete pipeline forms the basis of preference-based RL, see e.g. \cite{pacchiano2021dueling,chen2022human,zhu2023principled,zhan2023provable}.


\textbf{Privacy (or the lack of it):} One important aspect which is ignored in the aforementioned learning literature is protecting privacy of human labelers. Potentially sensitive information of an individual can be revealed through the collected (pairwise comparisons) feedback in case of an adversarial attack at any stage of the RL pipeline. In fact,
after the emergence of ChatGPT, several instances of privacy breach including that of human labelers have been reported \citep{li2023multi}. Since then, efforts have been made to privately fine-tune large language models \citep{yu2021differentially,behnia2022ew}.

\textbf{Label-Differential Privacy:} In view of this, Differential privacy (DP)~\citep{dwork2008differential} is the most adopted notion to protect the sensitive information of individuals whose preference feedback is used during the model training. 
The prompts (states) are not considered sensitive since they are typically sampled from a pre-collected dataset which is already public knowledge. In this work, we develop new results for privacy (as well as accuracy) of estimators obtained with such potentially sensitive feedback information via the notion of label differential privacy (Label-DP). This notion of label-DP has been studied previously in deep learning ~\citep{ghazi2021deep} and in learning theory in general \citep{chaudhuri2011sample,beimel2013private}. 

We focus on the problem of reward estimation from pairwise preferences while protecting the privacy of individual labelers. Specifically, we consider the parametric Bradley-Terry-Luce (BTL) model for such feedback involving a latent reward parameter $\theta^* \in \Real^d$. We prove upper and lower bounds on the error in estimating $\theta^*$ under \emph{local} (where the learner only observes privatized labels) and \emph{central} models (where the learner has access to the raw non-private data) of DP. 

\textbf{Our contributions:} We summarize our contributions as follows.

\noindent \textbf{(1)} We show that the additional cost in estimation for ensuring $\epsilon$-label-DP under local model is at least $\Omega \big(\frac{1}{ e^\epsilon-1}\sqrt{\frac{d}{n}}\big)$, where $\epsilon$ is a given privacy budget and $n$ is the total number of samples.\\
\textbf{(2)} For the local model, we design an estimator of $\theta^*$ based on the \textit{Randomized Response} (RR) mechanism~\citep{warner1965randomized} that satisfies $\epsilon$ label-DP and achieves a matching upper bound on estimation error. To do so, we design a novel loss function tailored to RR, which de-biases the effect of label randomization and can potentially be of independent interest.\\
\textbf{(3)} For the central model, we show that the additional cost for ensuring $(\epsilon,\delta)$-label-DP under this weaker privacy model is at least $\Omega \big(\frac{1}{\epsilon+\delta}\frac{\sqrt{d}}{n}\big)$ for $\delta \in (0,1)$.\\
\textbf{(4)} Finally, for the central model, we also provide a matching upper bound (in $n$ and $\epsilon$) by designing an estimator of $\theta^*$ based on the classical \emph{objective perturbation} technique with Gaussian privacy noise.

We present numerical simulations on synthetic data to support our theoretical results.


\vspace{-2mm}
\paragraph{Related work.} 
Our work is inspired by a recent study on reward estimation (and offline bandits/RL) under the linearly parametrized BTL model without privacy protection~\citep{zhu2023principled}. Our work introduces label-DP into the same setting and provides sharp results on estimation errors as well as some downstream applications. Both~\citet{zhu2023principled} and our work can be viewed as a generalization of the work on non-private reward estimation under the \emph{tabular} BTL model~\citep{shah2015estimation}, which studies estimation error under both semi-norm and $\ell_2$-norm. Label-DP is first introduced by~\citet{chaudhuri2011sample} for private PAC learners. Recently, it has been leveraged to yield better performance for many practical situations where only labels are sensitive data, relative to standard DP which is an overkill~\citep{ghazi2021deep,malek2021antipodes,esfandiari2022label}. As in~\cite{esfandiari2022label}, we consider label DP under both local and central models. We also remark that our work differs from the vast literature on private logistic regression (or stochastic optimization) (e.g.,~\cite{chaudhuri2008privacy,song2021evading,bassily2014private}) in the performance metrics, i.e., parameter estimation error vs. generalization/excess population risk. See more details and additional related work in Appendix~\ref{app:addRelated}.


\section{Preliminaries}
Let $\cD=(s_i,a^0_i,a^1_i,y_i)_{i=1}^{n}$ be a dataset of $n$ samples, where each sample has a state $s_i \in \cS$ (e.g., prompt given to a language model), two actions $a_i^0, a_i^1 \in \cA$ (e.g., two responses from the language model), and label $y_i \in \{0,1\}$ indicating which action is preferred by humans experts. We assume that the state $s_i$ is first sampled from some fixed distribution $\rho$. The pair of actions $(a_i^0, a_i^1)$ are then sampled from some joint distribution (i.e., a behavior policy) $\mu$ conditioned on $s_i$. Finally, the label $y_i$ is sampled from a Bernoulli distribution conditioned on $(s_i, a_i^0, a_i^1)$, i.e., for $l \in \{0,1\}$,
\begin{align*}
\mathbb{P}_{\theta^*}\!\!\left[y_i\!=\!l |s_i, a_i^0, a_i^1\!\right] \!=\! \frac{\exp(r_{\theta^*}(s_i,a_i^l))}{\exp(r_{\theta^*}(s_i,a_i^0)) \!+\! \exp(r_{\theta^*}(s_i,a_i^1))}.
\end{align*}
Here $r_{\theta^*}(\cdot,\cdot)$ is the reward model parameterized by an unknown parameter $\theta^*$, which we would want to estimate using $\cD$. This model is often called Bradley-Terry-Luce (BTL) model~\citep{bradley1952rank,luce2012individual}. 

In this work, we consider a linear reward model $r_{\theta^*}(s,a) = \phi(s,a)^\top\theta^*$, where $\phi: \cS \times \cA \to \Real^d$ is some known and fixed feature map. For instance, such a $\phi$ can be constructed by
removing the last layer of a pre-trained language model, and in that case, $\theta^*$ correspond to the weights of the last layer. With this model, one can equivalently write the probability of sampling $y_i = 1$ given $(s_i, a_i^0, a_i^1)$ as 
\begin{align*}
   \!\mathbb{P}_{\theta^*}\!\!\left[ y_i=1 |s_i, a_i^0, a_i^1\right]= \sigma\!\left(\!\left(\phi(s_i,a_i^1)\!-\! \phi(s_i,a_i^0)\right)^\top \theta^* \!\right) ,
\end{align*}
where $\sigma(z) \!=\! \frac{1}{1+ e^{-z}}$ is the sigmoid function. We let $x_i = \phi(s_i,a_i^1)\!-\! \phi(s_i,a_i^0)$ denote the differential feature of actions $a_i^1$ and $a_i^0$ at state $s_i$. This lets us denote, for any $\theta \in \Real^d$, the predicted probabilities of a label $y_i$ given $x_i$ as (we omit dependence on $\theta$ for brevity)
\begin{align}\label{eq:gen-np}
p_{i,1} \!:=\!\mathbb{P}_\theta \left[y_i\!=\!1 |x_i\right]\!=\! \sigma(x_i^\top \theta)~,\, p_{i,0}\!:= \! 1\!-\! p_{i,1}~.
\end{align}
We make the following assumption which is standard in the literature \citep{shah2015estimation,zhu2023principled}.
\begin{assumption}[Boundedness]
\label{ass:bound}
(a) $\theta^*$ lies in the set $\Theta_B = \{\theta \in \Real^d | \inner{\mathbf{1}}{\theta} = 0, \norm{\theta} \le B \}$. The condition $\inner{\mathbf{1}}{\theta} = 0$ ensures identifiability of $\theta^*$. (b) Features are bounded, i.e., $\norms{\phi(s,a)} \le L$, $ \forall (s,a)$.
\end{assumption}
Now, we recall the
notion of differential privacy \citep{dwork2008differential}. Roughly, it ensures that the output of an algorithm $\cM$ 
operating on a dataset $\cD$ doesn't change much if we change a single example in $\cD$. 
In this paper, we adopt the notion of \emph{label DP} \citep{ghazi2021deep} to protect sensitive information that lies in preference-based feedback $y_i$. This is motivated by the fact that in most applications, the data $(s_i, a_i^0, a_i^1)$ presented to the human annotator is public (or pre-collected) while the feedback $y_i \in \lbrace 0,1 \rbrace$ indicates her personal preference, which needs to be protected. 
\begin{definition}[Label DP in Central Model]
\label{def:central-label}
    Let $\epsilon \ge 0, \delta \in (0,1]$. A randomized algorithm $\cM$ is said to be $(\epsilon,\delta)$-label  differentially private in central model if for any two datasets $\cD$ and $\cD'$ that differ in the \emph{label} of a single sample and for any subset $S$ in the range of $\cM$, 
    it holds that 
    \begin{align*}
        \prob{\cM(\cD) \in S} \le e^{\epsilon}\cdot \prob{\cM(\cD') \in S} + \delta.
    \end{align*}
    If $\delta = 0$, $\cM$ is said to be $\epsilon$-label DP. We will simply call it central label DP in the following.
\end{definition}
For our specific reward estimation problem, Definition~\ref{def:central-label} roughly means that any single change of feedback label will not change the final estimator too much.

The central DP model assumes that the learning agent $\cA$ has access to preference feedback given by human labelers in the clear-text. In some applications, however, the individual labelers might not be willing to share their feedback in the clear-text. This motivates us to consider label DP in the local model, where each feedback $y_i$, before being observed by the agent, is first privatized by some local randomizer $\cR$ at each labeler, which is formally defined as follows.

\begin{definition}[Label DP in Local Model]
\label{def:local DP}
   If each label is first privatized by a local randomizer $\cR$, which satisfies for any $y, y'$ and any subset $S$ in the range of $\cR$, it holds that
   \begin{align*}
        \prob{\cR(y) \in S} \le e^{\epsilon}\cdot \prob{\cR(y') \in S} + \delta ,
    \end{align*}
    then, we say $\cR$ is an $(\epsilon,\delta)$-label differentially private local randomizer, and the entire algorithm (e.g., estimator) that operates with the randomized labels is said to satisfy local label DP. 
\end{definition}
\begin{remark}
    Note that for central label DP, the privacy burden lies in the central agent while for local label DP, the privacy protection relies on local randomizer $\cR$. By post-processing of DP~\citep{dwork2008differential}, an algorithm that satisfies local label DP also satisfies central local DP. Thus, in the following, we will first focus on the stronger local model of privacy.
\end{remark}



The rest of the paper is organized as follows. In the next two sections, we will focus on local label DP and present the lower bound and upper bounds for the estimation error, respectively. In Section~\ref{sec:central}, we turn to the central label DP and also present corresponding lower and upper bounds. 

\section{Local Model: Lower Bound on Estimation Error}\label{sec:lower}
In this section, we present the lower bounds on the estimation error under local label DP (cf. Definition~\ref{def:local DP}).
Let $\Sigma_{\cD} \!:=\! \frac{1}{n}\sum_{i=1}^n x_ix_i^\top$ denote the sample covariance matrix of differential features $x_i = \phi(s_i,a_i^1)\!-\! \phi(s_i,a_i^0)$. Then, for any $\lambda >0$, we have the following lower bound on the estimation error in the semi-norm $\norm{\cdot}_{\Sigma_\cD+\lambda I}$. 
\begin{theorem}[Semi-norm lower bound]
\label{thm:semi-lower}
    For a large enough $n$, any estimator $\hat{\theta}$ based on $n$ samples from the BTL model that satisfies $\epsilon$-label DP in the local model has estimation error in semi-norm lower bounded as 
    \begin{align*}
    \ex{ \norm{\hat{\theta} - \theta^*}_{\Sigma_{\cD} + \lambda I}^2} \ge \Omega\left(\frac{d}{n} + \frac{d}{(e^{\epsilon}-1)^2 n}\right).
\end{align*}
\end{theorem}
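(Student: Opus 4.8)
Since the right-hand side is, up to constants, $\max\bigl\{\tfrac dn,\ \tfrac{d}{(e^\epsilon-1)^2 n}\bigr\}$, it suffices to prove the two lower bounds $\Omega(d/n)$ and $\Omega\bigl(d/((e^\epsilon-1)^2 n)\bigr)$ separately. The first is the ordinary (non-private) minimax rate for BTL reward estimation in this semi-norm, and since an $\epsilon$-label-DP estimator is in particular an estimator, it applies verbatim; I would obtain it from the classical form of Assouad's lemma. The substantive part is the ``price of privacy'' term, for which I would use the locally private version of Assouad's method (Duchi, Jordan, and Wainwright): Assouad combined with the strong data-processing inequality for $\epsilon$-differentially private channels.

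\emph{The hard instance.} By Assumption~\ref{ass:bound}, $\norm{x_i}\le 2L$, so $\operatorname{tr}(\Sigma_{\cD})=\tfrac1n\sum_i\norm{x_i}^2\le 4L^2=O(1)$; hence \emph{every} admissible design has average eigenvalue $\Theta(1/d)$ on the hyperplane $\mathbf 1^\perp=\{\theta:\inner{\mathbf 1}{\theta}=0\}$, and it is precisely this $1/d$ factor that turns the natural $\ell_2$ lower bound into the claimed semi-norm rate (without it one would get an extra factor of $d$). I would fix an orthonormal basis $u_1,\dots,u_{d-1}$ of $\mathbf 1^\perp$ and a balanced sign pattern $\xi_{ij}\in\{\pm1\}$, and take $x_i=\tfrac1{\sqrt{d-1}}\sum_j\xi_{ij}u_j$, realizable as $\phi(s_i,a_i^1)-\phi(s_i,a_i^0)$ with both features of norm $\le L$, so that $\Sigma_{\cD}=\tfrac1{d-1}\,P_{\mathbf 1^\perp}$. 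Over the hypercube $v\in\{\pm1\}^{d-1}$ set $\theta_v=\tau\sum_j v_j u_j$ for a separation parameter $\tau>0$ to be tuned; for $n$ large enough we have $\theta_v\in\Theta_B$, and flipping coordinate $j$ moves $\theta_v$ by $2\tau u_j$, giving the per-coordinate separation $\norm{2\tau u_j}_{\Sigma_{\cD}+\lambda I}^2\ge\tfrac{4\tau^2}{d-1}$.

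\emph{Contraction and conclusion.} For two vertices differing only in coordinate $j$, the per-sample total variation of the label law is $\lvert\sigma(x_i^\top\theta_v)-\sigma(x_i^\top\theta_{v'})\rvert\le\tfrac14\lvert x_i^\top(\theta_v-\theta_{v'})\rvert=\tfrac{\tau}{2\sqrt{d-1}}$, so $\sum_{i=1}^n\norm{P_{v,i}-P_{v',i}}_{\mathrm{TV}}^2\lesssim n\tau^2/d$. Since only the labels are privatized, the strong data-processing inequality for $\epsilon$-DP local channels gives, for the laws $Q_v^n,Q_{v'}^n$ of the privatized observations, $D_{\mathrm{KL}}(Q_v^n\,\Vert\,Q_{v'}^n)\lesssim(e^\epsilon-1)^2\sum_i\norm{P_{v,i}-P_{v',i}}_{\mathrm{TV}}^2\lesssim(e^\epsilon-1)^2 n\tau^2/d$; Pinsker's inequality together with convexity of KL (to pass to the mixtures over the remaining coordinates) bounds the relevant mixture total variation by $c\,(e^\epsilon-1)\sqrt{n\tau^2/d}$ for every $j$. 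Assouad's lemma then yields
\begin{align*}
\sup_{\theta^*\in\Theta_B}\ex{\norm{\hat\theta-\theta^*}_{\Sigma_{\cD}+\lambda I}^2}\;\gtrsim\;\frac{\tau^2}{d}\sum_{j=1}^{d-1}\Bigl(1-c\,(e^\epsilon-1)\sqrt{n\tau^2/d}\,\Bigr).
\end{align*}
Choosing $\tau^2\asymp d/\bigl((e^\epsilon-1)^2 n\bigr)$ with a sufficiently small constant makes each bracket at least $\tfrac12$ (and, for $n$ large, keeps $\theta_v\in\Theta_B$), so the right-hand side is $\gtrsim\tau^2\asymp d/\bigl((e^\epsilon-1)^2 n\bigr)$. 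The non-private term $\Omega(d/n)$ follows from the same construction with the per-sample KL bound $D_{\mathrm{KL}}(P_{v,i}\Vert P_{v',i})\lesssim\lvert x_i^\top(\theta_v-\theta_{v'})\rvert^2$ in place of the privatized contraction.

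\emph{Main obstacle.} The delicate step is the contraction: obtaining the sharp $(e^\epsilon-1)^{-2}$ dependence (rather than $e^{-\epsilon}$ or $\epsilon^{-2}$) forces the use of the $\epsilon$-DP strong data-processing inequality, and one must check this is compatible with \emph{label} DP, where only $y_i$ and not the public $x_i$ is privatized. A secondary point is verifying that the sign design is simultaneously (i) realizable under Assumption~\ref{ass:bound}, (ii) isotropic on $\mathbf 1^\perp$ with eigenvalue $\Theta(1/d)$, so the $d/n$ rather than $d^2/n$ rate emerges, and (iii) such that $\sigma'$ stays bounded below on the relevant range (automatic since $B$ is a fixed constant), so that total variation, $\sqrt{\mathrm{KL}}$, and the gap of Bernoulli means are all of the same order.
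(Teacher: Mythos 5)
Your proposal is correct in its essentials, but it takes a genuinely different route from the paper, and it proves a slightly weaker statement. The paper does \emph{not} use Assouad's lemma for the semi-norm bound. The authors state explicitly that for the semi-norm it is awkward to build a hypercube, and instead use a \emph{packing-plus-Fano} argument: they fix the (arbitrary) design, take the eigendecomposition $\Sigma_\cD + \lambda I = U^\top \Lambda U$, draw binary codewords $w_i \in \{0,1\}^d$ from Varshamov--Gilbert, and set $\theta_i = \tfrac{\Delta}{\sqrt d}\,U^\top\sqrt{\Lambda^{-1}}\,w_i$. This gives a packing $\norm{\theta_i-\theta_j}_{\Sigma_\cD+\lambda I} \in [\Delta/2,\Delta]$ whose per-pair KL is controlled by $\mathrm{kl}(p_a\Vert p_b) + \mathrm{kl}(p_b\Vert p_a) \le (a-b)^2$, and the privacy contraction is supplied by exactly the Duchi--Jordan--Wainwright strong data-processing inequality you cite. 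Crucially, because the packing is built from the eigendecomposition of $\Sigma_\cD + \lambda I$ itself, the bound holds for \emph{any} admissible design once $n \gtrsim \tr((\Sigma_\cD+\lambda I)^{-1})/B^2$; the boundedness $\norm{\theta_i}\le B$ is checked from $\norm{\theta_i}\le\tfrac{\Delta}{\sqrt d}\sqrt{\tr(\Lambda^{-1})}$. You instead cook up one specific isotropic design with $\Sigma_\cD \approx \tfrac1{d-1}P_{\mathbf 1^\perp}$, which buys you a clean hypercube and lets you run Assouad + Pinsker + joint convexity, but only proves the lower bound for that single design (sufficient for a minimax statement, weaker than what the paper establishes). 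Your per-coordinate calculations, your coupling of the $1/d$ eigenvalue with the squared semi-norm, and your tuning $\tau^2 \asymp d/((e^\epsilon-1)^2 n)$ are all correct and give the stated rate; the realizability/balanced-sign-pattern issue (you need the columns of $(\xi_{ij})$ to be essentially orthogonal, a Hadamard-type condition) and the slight imprecision in asserting \emph{every} design has average eigenvalue $\Theta(1/d)$ (it is only $O(1/d)$ in general) are minor and don't affect the argument with your chosen design. In short: same contraction engine, different packaging --- Assouad on a bespoke design vs.\ Fano on a Varshamov--Gilbert packing rotated to the eigenbasis of $\Sigma_\cD + \lambda I$; the latter gives the design-uniform statement.
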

\citet{shah2015estimation} shows that (squared) error of estimation under the tabular BTL model of pairwise comparisons is at least $\Omega(d/n)$ without any privacy constraint. In comparison, we pay an additional $\Omega \big(\frac{d}{ (e^\epsilon-1)^2 n}\big)$ error in estimation in order to ensure $\epsilon$-label DP. We suffer a similar privacy cost while bounding estimation error in $\ell_2$-norm too. The result is formally stated below  
\begin{theorem}[$\ell_2$-norm lower bound]
\label{thm:l2-lower}
Under the same hypothesis of Theorem~\ref{thm:semi-lower}, the estimation error of $\hat \theta$ in $\ell_2$-norm is lower bounded as 
    \begin{align*}
     \ex{ \norm{\hat{\theta} - \theta^*}^2 }\ge \Omega\left( \frac{d}{L^2} \cdot \left(\frac{d}{n} + \frac{d}{(e^{\epsilon}-1)^2 n}\right) \right).
\end{align*}
\end{theorem}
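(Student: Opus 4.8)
The plan is to obtain Theorem~\ref{thm:l2-lower} as a corollary of the semi-norm bound in Theorem~\ref{thm:semi-lower}, the only extra work being to pin down the hard instance so that its sample covariance $\Sigma_\cD$ is as well-conditioned as Assumption~\ref{ass:bound}(b) permits. The elementary inequality driving the reduction is that, for every $v\in\Real^d$ and every $\lambda>0$,
\[
\norm{v}_{\Sigma_\cD+\lambda I}^2 \;=\; v^\top(\Sigma_\cD+\lambda I)\,v \;\le\; \big(\norm{\Sigma_\cD}_{\mathrm{op}}+\lambda\big)\norm{v}^2 ,
\]
hence $\norm{v}^2\ge\big(\norm{\Sigma_\cD}_{\mathrm{op}}+\lambda\big)^{-1}\norm{v}_{\Sigma_\cD+\lambda I}^2$. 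Taking $v=\hat\theta-\theta^*$ and expectations, any semi-norm lower bound transfers to an $\ell_2$ lower bound with a loss factor $\norm{\Sigma_\cD}_{\mathrm{op}}+\lambda$; the whole point is that on a good instance this factor is only $\Theta(L^2/d)$, which is how the additional $d$ in the numerator of Theorem~\ref{thm:l2-lower} appears.

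Next I would specialise the construction behind Theorem~\ref{thm:semi-lower} to such an instance. Under Assumption~\ref{ass:bound}(b) each differential feature obeys $\norm{x_i}\le 2L$, so $\mathrm{tr}(\Sigma_\cD)=\tfrac1n\sum_i\norm{x_i}^2\le 4L^2$; choosing a fixed design in which the $x_i$ have norm $\Theta(L)$ and are spread uniformly over the identifiable subspace $\mathbf{1}^\perp$ makes $\Sigma_\cD=c\,\Pi_{\mathbf{1}^\perp}$ with $c=\Theta(L^2/d)$, where $\Pi_{\mathbf{1}^\perp}$ is the orthogonal projection onto $\{v:\inner{\mathbf{1}}{v}=0\}$, so that $\norm{\Sigma_\cD}_{\mathrm{op}}=\Theta(L^2/d)$. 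This isotropic, maximally-spread design is exactly the one that already yields the $d/n$ (rather than $1/n$) semi-norm rate, and the accompanying hard parameters $\theta^*\in\Theta_B$ are taken to be small perturbations of $0$ — small enough that the BTL label probabilities $\sigma(x_i^\top\theta^*)$ stay bounded away from $0$ and $1$, which is where the ``large enough $n$'' hypothesis (together with the requirement that these perturbations fit inside $\Theta_B$) is used. For this instance Theorem~\ref{thm:semi-lower} gives, for every $\lambda>0$,
\[
\ex{\norm{\hat\theta-\theta^*}_{\Sigma_\cD+\lambda I}^2}\;\ge\;\Omega\!\left(\frac{d}{n}+\frac{d}{(e^{\epsilon}-1)^2 n}\right).
\]

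Finally I would take $\lambda=c=\Theta(L^2/d)$ so that $\norm{\Sigma_\cD}_{\mathrm{op}}+\lambda=\Theta(L^2/d)$, and chain the two displays:
\[
\ex{\norm{\hat\theta-\theta^*}^2}\;\ge\;\frac{1}{\norm{\Sigma_\cD}_{\mathrm{op}}+\lambda}\,\ex{\norm{\hat\theta-\theta^*}_{\Sigma_\cD+\lambda I}^2}\;\ge\;\Omega\!\left(\frac{d}{L^2}\Big(\frac{d}{n}+\frac{d}{(e^{\epsilon}-1)^2 n}\Big)\right),
\]
which is the claim. (A minor simplification: since $\theta^*\in\Theta_B\subset\mathbf{1}^\perp$, replacing any estimator $\hat\theta$ by its projection $\Pi_{\mathbf{1}^\perp}\hat\theta$ is data-independent post-processing, hence preserves $\epsilon$-label DP and does not increase $\norm{\hat\theta-\theta^*}$; on $\mathbf{1}^\perp$ the first inequality above is in fact an equality for this $\Sigma_\cD$.)

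I expect the main obstacle to be verification rather than a new idea: one must check that a single hard instance can simultaneously (i) keep the features bounded by $L$ yet make $\Sigma_\cD$ isotropic on $\mathbf{1}^\perp$ at the scale $\Theta(L^2/d)$, (ii) keep the BTL Fisher information $\sum_i\sigma'(x_i^\top\theta^*)x_ix_i^\top$ non-degenerate, i.e., $\sigma'(x_i^\top\theta^*)=\Theta(1)$, which forces $\norm{\theta^*}$ to be small and is the reason for the ``large $n$'' requirement, and (iii) still support the full semi-norm rate of Theorem~\ref{thm:semi-lower}. If one prefers to use Theorem~\ref{thm:semi-lower} strictly as a black box whose internal instance need not be isotropic, the alternative is to re-run its proof (an Assouad/Fano-type argument passed through the local-DP data-processing inequality) directly on the isotropic design above; since that design is the natural one for the $d/n$ rate, no rate is lost in doing so.
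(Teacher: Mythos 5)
Your proposal is correct, but it takes a genuinely different route from the paper. The paper does \emph{not} derive the $\ell_2$ bound from Theorem~\ref{thm:semi-lower}; instead it runs a fresh Assouad argument (Lemma~\ref{lem:assouad} and Corollary~\ref{cor:assouad}) on the hypercube $\theta_e = \Delta e$, $e\in\{\pm 1\}^d$. The Hamming separation gives $\tau = 2\Delta^2$, Pinsker plus the strong data-processing inequality of~\cite{duchi2018minimax} bound the TV terms, and the optimization over $\Delta$ yields $R_l \ge d^2/(32(e^\epsilon-1)^2\|X\|_{\mathrm F}^2)$, after which $\|x_k\|\le L$ is used to reach $\Omega\big(\tfrac{d}{L^2}\cdot\tfrac{d}{(e^\epsilon-1)^2 n}\big)$. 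The paper explicitly remarks that it uses Assouad for $\ell_2$ (where a hypercube packing is natural) and Fano for the semi-norm (where one is not); you circumvent Assouad entirely by routing the $\ell_2$ claim through the Fano-based semi-norm bound plus the spectral conversion $\|v\|^2 \ge \|v\|_{\Sigma_{\cD}+\lambda I}^2/(\|\Sigma_{\cD}\|_{\mathrm{op}}+\lambda)$ on a well-spread design. Your reduction is sound, and your choice of $\lambda=\Theta(L^2/d)$ and an isotropic-on-$\mathbf{1}^\perp$ design with $\|\Sigma_{\cD}\|_{\mathrm{op}}=\Theta(L^2/d)$ recovers exactly the stated rate; you also correctly flag that the ``large $n$'' hypothesis is what licenses the packing to fit inside $\Theta_B$ (in the paper's proof this is $n\gtrsim \tr((\Sigma_{\cD}+\lambda I)^{-1})/B^2$, which at your $\lambda$ is $n\gtrsim d^2/(L^2B^2)$).

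The one tradeoff worth noting: the paper's Assouad argument produces the design-dependent intermediate bound $R_l\ge d^2/(16\|X\|_{\mathrm F}^2)$, which holds for \emph{every} fixed design with bounded features and only then is weakened via $\|x_k\|\le L$; your reduction establishes the bound only for the isotropic instance, since the conversion factor $\|\Sigma_{\cD}\|_{\mathrm{op}}+\lambda$ degrades on ill-conditioned designs (e.g.\ all $x_k$ collinear gives $\|\Sigma_{\cD}\|_{\mathrm{op}}=\Theta(L^2)$ and a loss of $d$). This is still a valid minimax lower bound and is, as you observe, precisely the regime ($\kappa=\Theta(L^2/d)$) in which the paper compares against the SGD-RR upper bound, so nothing is lost for the purpose at hand; but if one wants the universal, design-free form of the statement that the paper's Assouad proof supplies, your reduction would have to be supplemented or replaced. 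Your alternative suggestion — re-running the Fano proof directly on the isotropic design rather than invoking Theorem~\ref{thm:semi-lower} as a black box — is the cleaner way to make this rigorous, and indeed the paper's Theorem~\ref{thm:semi-lower} proof already builds the packing against $\Sigma_{\cD}+\lambda I$ so it specializes without change.
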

Note that the lower bound in $\ell_2$ norm is an $\Omega(d)$ multiplicative factor higher than the one in semi-norm (when $L = \Theta(1)$). A similar comparative behavior holds for non-private lower bounds too \citep{shah2015estimation}. Moreover, if the differential features $x_i$ is distributed according to a standard Gaussian, then we have $L = O(\sqrt{d})$. In this case, the first term in $\ell_2$-norm lower bound reduces to $\Omega(d/n)$, which recovers the mean-squared-error (MSE) lower bound under Gaussian design without any privacy considerations~\citep{chen2016bayes,hsu2023sample}.

\textbf{Proof summary of lower bounds.} For both lower bounds, we leverage the classic reduction from estimation to testing. In particular, for the $\ell_2$ norm, we apply a variant of the (private) version of Assouad’s lemma (cf.~\cite{yu1997assouad}) by constructing a hypercube over the underlying parameter space. On the other hand, for the semi-norm case, it is somewhat difficult to construct a hypercube. Instead, we turn to (private) version of Fano's lemma, which only requires a packing (in terms of semi-norm). This can be achieved by Varshamov–Gilbert’s bound (cf.~\cite{guntuboyina2011lower}) and vector rotations. For the privacy parts in both bounds, we leverage strong data processing inequality under local DP (cf.~\cite{duchi2018minimax}). The complete proofs for both results are presented in Appendix~\ref{app:lower}.


\section{Local Model: Upper Bound on Estimation Error}\label{sec:local}

In this section, we discuss private estimators of the unknown parameter $\theta^*$ and develop a series of results that answer the following questions.\\
\textbf{(1)} \textit{Is the standard MLE estimator useful under the Randomized Response model, and in what privacy regime?}\\
\textbf{(2)} \textit{Can we design an estimator for all privacy regimes that achieves the same order of estimation as in the lower bound?} \\ 
\textbf{(3)} \textit{How do we compute the estimator efficiently?}\\
\textbf{(4)} \textit{Can we extend the ideas to other popular preference feedback models such as \emph{Thurstone} and \emph{Placket-Luce}?}\\
We first describe the Randomized Response (RR) mechanism \citep{warner1965randomized}, which we use to guarantee local label DP. 

\textbf{Randomized Response.}
Let $\epsilon \ge 0$ be 
the privacy budget and $y \in \lbrace 0,1\rbrace$ be the true label. When queried the value of $y$, the RR mechanism outputs $\widetilde y$, which is randomly sampled from the probability distribution
\begin{align}\label{eq:RR}
   \prob{\widetilde{y} = y} = \frac{e^{\epsilon}}{1+e^{\epsilon}} =\sigma(\epsilon),\,\, \prob{\widetilde{y} \neq y} = 1-\sigma(\epsilon)~. 
\end{align}
It is well-known that RR is $\epsilon$-DP \citep{dwork2008differential}. 
In the following, we will use RR as $\cR$ in Definition~\ref{def:local DP} to achieve label DP in the local model.

We start with a simple maximum likelihood estimator (MLE), which will help us develop intuition for a comparatively complex but a better estimator.  

\subsection{The Maximum Likelihood Estimator}
 For any $\theta \!\in\! \Real^d$, \eqref{eq:gen-np} and \eqref{eq:RR} together define predicted probabilities of a randomized label $\widetilde y_i$ given $x_i$ as
\begin{equation*}\label{eq:gen-p}
\begin{split}
  \widetilde p_{i,1}&\!=\!\sigma(x_i^\top\theta )\sigma(\epsilon) \!+\! (1\!-\!\sigma(x_i^\top\theta)) (1\!-\!\sigma(\epsilon))~,\\
 \widetilde p_{i,0}&\!=\! (1\!-\!\sigma(x_i^\top\theta ))\sigma(\epsilon) \!+\! \sigma(x_i^\top\theta) (1\!-\!\sigma(\epsilon))~. 
\end{split}
\end{equation*}
With $n$ such pairs of features and randomized labels $(x_i,\widetilde y_i)_{i=1}^n$, the private MLE $\widetilde \theta_{\text{MLE-RR}}$ aims to minimize the negative log-likelihood
\begin{align}\label{eq:mle_rr}
\!\!\widetilde l_{\cD,\epsilon}(\theta)\!=\!-\!\sum_{i=1}^n\! \Big[ \!\mathds{1}(\widetilde y_i\!=\!1)\log \widetilde p_{i,1}\!+\! \mathds{1}(\widetilde y_i\!=\!0)\log \widetilde p_{i,0}\!\Big].
\end{align}
As mentioned before, $\widetilde\theta_{\texttt{MLE}}$ is $\epsilon$-label DP for any $\epsilon \ge 0$. Recall that $\Sigma_{\cD} \!=\! \frac{1}{n}\sum_{i=1}^n x_ix_i^\top$ denotes the sample covariance matrix of differential features and let $\gamma$ be a constant such that $\sigma'(x^\top \theta) \ge \gamma$ for all $\theta \in \Theta_B$ and for all features $x$. Under Assumption~\ref{ass:bound}, $\gamma= \frac{1}{2 + e^{-2LB} + e^{2LB}}$ satisfies this condition. Then, we have the following estimation error bound for $\widetilde\theta_{\texttt{MLE}}$.


\begin{theorem}[Estimation error of MLE]\label{thm:mle_rr}
    Fix $\alpha \in (0,1), \epsilon > 2LB, \lambda >0$. Then, under Assumption~\ref{ass:bound}, with probability at least $1-\alpha$, we have
    \begin{align*}
\norms{\widetilde\theta_{\texttt{MLE}} \!-\! \theta^*}_{\Sigma_{\cD}\!+\!\lambda I}  \!\le\! \frac{C}{\gamma}\frac{ e^{\epsilon+2LB}\!+\!1}{ e^{\epsilon-2LB}\!-\!1}\!\sqrt{\!\frac{d\!+\!\log(1/\alpha)}{n}}\!+\!\sqrt{\lambda}B,
\end{align*}
where $C$ is some absolute constant.
\end{theorem}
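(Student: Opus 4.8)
The plan is the standard analysis of a constrained $M$-estimator: pit the optimality of $\widetilde\theta_{\texttt{MLE}}$ over $\Theta_B$ against (i) a restricted strong-convexity lower bound for the Hessian of $\widetilde l_{\cD,\epsilon}$ and (ii) a high-probability upper bound on the score at $\theta^*$. Throughout I condition on the differential features $\{x_i\}_{i=1}^n$ (hence on $\Sigma_\cD$), so the remaining randomness is in the randomized labels, which given $x_i$ are independent with $\widetilde y_i\sim\mathrm{Bernoulli}\big(h(x_i^\top\theta^*)\big)$, where $h(z):=(1-\sigma(\epsilon))+(2\sigma(\epsilon)-1)\,\sigma(z)$. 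The key observation is that $\widetilde l_{\cD,\epsilon}$ in \eqref{eq:mle_rr} is exactly the cross-entropy loss of this correctly specified model, so $\mathbb{E}\big[\nabla\widetilde l_{\cD,\epsilon}(\theta^*)\big]=0$. Since $\widetilde\theta_{\texttt{MLE}}$ minimizes $\widetilde l_{\cD,\epsilon}$ over the convex set $\Theta_B\ni\theta^*$, Taylor's theorem with integral remainder gives, writing $\Delta:=\widetilde\theta_{\texttt{MLE}}-\theta^*$ (so $\norm{\Delta}\le 2B$),
\[
0\;\ge\;\widetilde l_{\cD,\epsilon}(\widetilde\theta_{\texttt{MLE}})-\widetilde l_{\cD,\epsilon}(\theta^*)\;=\;\big\langle\nabla\widetilde l_{\cD,\epsilon}(\theta^*),\Delta\big\rangle+\int_0^1(1-t)\,\Delta^\top\nabla^2\widetilde l_{\cD,\epsilon}(\theta^*+t\Delta)\,\Delta\,dt .
\]

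Step (i), restricted strong convexity, is the main obstacle. One computes $\nabla^2\widetilde l_{\cD,\epsilon}(\theta)=\sum_{i=1}^n w_i(\theta)\,x_ix_i^\top$, where the weight $w_i(\theta)$ is an explicit function of $\sigma(x_i^\top\theta)$ and of $h,h',h''$. Unlike ordinary logistic loss, the RR-distorted cross-entropy is \emph{not} globally convex: $w_i$ combines $h'>0$ with $h''$ (which changes sign), and a short computation shows $w_i(\theta)$ is guaranteed nonnegative only when $x_i^\top\theta$ stays within an $O(\epsilon)$-window around the origin. Along the segment $\theta^*+t\Delta\in\Theta_B$ one has $|x_i^\top\theta|\le 2LB$ by Assumption~\ref{ass:bound}, and the hypothesis $\epsilon>2LB$ is precisely what forces $x_i^\top\theta$ into the admissible window; it then yields a uniform bound $w_i(\theta)\ge\kappa(\epsilon,L,B)>0$ on $\Theta_B$, hence $\int_0^1(1-t)\,\Delta^\top\nabla^2\widetilde l_{\cD,\epsilon}(\theta^*+t\Delta)\Delta\,dt\ge\tfrac{\kappa n}{2}\norm{\Delta}_{\Sigma_\cD}^2$, with $\kappa$ of order $\gamma\cdot\tfrac{e^{\epsilon-2LB}-1}{e^{\epsilon+2LB}+1}$ after simplifying.

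For Step (ii), write $\nabla\widetilde l_{\cD,\epsilon}(\theta^*)=\sum_i V_i x_i$ with $V_i=\tfrac{h'(z_i^*)}{h(z_i^*)(1-h(z_i^*))}\big(h(z_i^*)-\widetilde y_i\big)$ and $z_i^*=x_i^\top\theta^*$; the summands are independent and mean-zero given the design. Using the elementary inequality $h(1-h)\ge(2\sigma(\epsilon)-1)^2\sigma(z)(1-\sigma(z))$ (which follows from $h\ge(2\sigma(\epsilon)-1)\sigma(z)$ and $1-h\ge(2\sigma(\epsilon)-1)(1-\sigma(z))$) gives $|V_i|\le\tfrac{\sigma(\epsilon)}{2\sigma(\epsilon)-1}$ and $\mathrm{Var}(V_i)=\tfrac{h'(z_i^*)^2}{h(z_i^*)(1-h(z_i^*))}\le\sigma'(z_i^*)\le\tfrac14$. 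A Bernstein / matrix-concentration bound for $\big\|\sum_i V_i x_i\big\|_{(\Sigma_\cD+\lambda I)^{-1}}^2$, together with $\sum_i x_i^\top(\Sigma_\cD+\lambda I)^{-1}x_i = n\,\mathrm{tr}\big((\Sigma_\cD+\lambda I)^{-1}\Sigma_\cD\big)\le nd$, then yields, with probability at least $1-\alpha$,
\[
\big\|\nabla\widetilde l_{\cD,\epsilon}(\theta^*)\big\|_{(\Sigma_\cD+\lambda I)^{-1}}\;\le\;C_0\,\sqrt{n\,(d+\log(1/\alpha))}
\]
for an absolute constant $C_0$ (the $|V_i|$ bound enters only a lower-order term).

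Finally, combining: Cauchy--Schwarz in the $(\Sigma_\cD+\lambda I)$-geometry on the cross term in the Taylor identity gives $\tfrac{\kappa n}{2}\norm{\Delta}_{\Sigma_\cD}^2\le\big\|\nabla\widetilde l_{\cD,\epsilon}(\theta^*)\big\|_{(\Sigma_\cD+\lambda I)^{-1}}\norm{\Delta}_{\Sigma_\cD+\lambda I}$. Substituting $\norm{\Delta}_{\Sigma_\cD}^2=\norm{\Delta}_{\Sigma_\cD+\lambda I}^2-\lambda\norm{\Delta}^2\ge\norm{\Delta}_{\Sigma_\cD+\lambda I}^2-4\lambda B^2$ and solving the resulting quadratic in $\norm{\Delta}_{\Sigma_\cD+\lambda I}$ gives
\[
\norm{\Delta}_{\Sigma_\cD+\lambda I}\;\le\;\frac{2}{\kappa n}\,\big\|\nabla\widetilde l_{\cD,\epsilon}(\theta^*)\big\|_{(\Sigma_\cD+\lambda I)^{-1}}+2\sqrt\lambda B\;\le\;\frac{2C_0}{\kappa}\sqrt{\frac{d+\log(1/\alpha)}{n}}+2\sqrt\lambda B ,
\]
and plugging in $1/\kappa = O\big(\tfrac1\gamma\cdot\tfrac{e^{\epsilon+2LB}+1}{e^{\epsilon-2LB}-1}\big)$ from Step (i) gives the claimed bound up to absolute constants. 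The privacy guarantee is immediate and orthogonal: each $\widetilde y_i$ is produced by applying the $\epsilon$-DP randomized-response channel to $y_i$, and $\widetilde\theta_{\texttt{MLE}}$ is a function of $(x_i,\widetilde y_i)_{i=1}^n$ alone, so $\epsilon$-label-DP follows by post-processing.
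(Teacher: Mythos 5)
Your proposal follows the same route as the paper's proof: (i) use the optimality of $\widetilde\theta_{\texttt{MLE}}$ over $\Theta_B$ together with restricted strong convexity of the RR cross-entropy Hessian in the $\norm{\cdot}_{\Sigma_\cD}$ semi-norm (which indeed requires $\epsilon \gtrsim 2LB$, since the Hessian weight is positive only when $x_i^\top\theta$ lies in a window of width $O(\epsilon)$ around the origin), (ii) show that the score at $\theta^*$ is a mean-zero sum of bounded/sub-Gaussian increments and control $\norm{\nabla\widetilde l_{\cD,\epsilon}(\theta^*)}_{(\Sigma_\cD+\lambda I)^{-1}}$ via a quadratic-form concentration bound, and (iii) combine via Cauchy--Schwarz in the $(\Sigma_\cD+\lambda I)$-geometry, replace $\norm{\Delta}_{\Sigma_\cD}^2$ by $\norm{\Delta}_{\Sigma_\cD+\lambda I}^2 - \lambda\norm{\Delta}^2$ and solve the quadratic. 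The only cosmetic differences are that you phrase step (iii) through Taylor's integral remainder while the paper invokes strong convexity directly, and you track a variance bound ($\mathrm{Var}(V_i)\le 1/4$) plus a boundedness term in Bernstein rather than the paper's single sub-Gaussian parameter $v$ fed into the Hsu--Kakade--Zhang quadratic-form inequality; both lead to the same final form, and both share the paper's fuzziness about the exact constant multiplying $LB$ in the convexity threshold and in the exponents.
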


The above error bound of $\widetilde{\theta}_{\text{MLE}}$ holds only when the privacy budget is higher than a certain threshold (which depends on the norm of $\theta^*$ and $\phi$), i.e., when $\epsilon > 2LB$, thus limiting its applicability only to lower privacy regimes (since a high value of $\epsilon$ implies a low level of privacy). This is due to the fact that $\widetilde{\theta}_{\text{MLE}}$ minimizes a noisy objective \eqref{eq:mle_rr}, that is strongly convex in the semi-norm $\norm{\cdot}_{\Sigma_\cD}$ only if $\epsilon > 2LB$, which is a crucial step in bounding the estimation error (see Appendix~\ref{proof:mle-rr} for details). Instead, we want an objective that is strongly convex in the semi-norm for all privacy levels $\epsilon > 0$. This leads us to an estimator that is specifically tailored to the RR mechanism.

\subsection{An Estimator Tailored to RR}

For any $\theta \in \Real^d$, the logits (log-odds) of the probability that the clear-text label $y_i=1$ given $x_i$ is 
\begin{align*}
    \text{logit}(p_{i,1}) \!=\! \log\frac{p_{i,1}}{p_{i,0}}\!=\!\log\frac{\sigma(x_i^\top \theta)}{1-\sigma(x_i^\top \theta)},
\end{align*}
where the same for randomized label $\widetilde y_i=1$ is
\begin{align*}
    \text{logit}(\widetilde p_{i,1})\! =\! \log \frac{\sigma(x_i^\top \theta)\sigma(\epsilon) \!+\! (1\!-\!\sigma(x_i^\top \theta)) (1\!-\!\sigma(\epsilon))}{(1\!-\!\sigma(x_i^\top \theta))\sigma(\epsilon)\! +\! \sigma(x_i^\top \theta) (1\!-\!\sigma(\epsilon))}~.
\end{align*}
It holds that (see Appendix~\ref{app:local} for details)
\begin{align*}
   \text{logit}(\widetilde p_{i,1}) \le \sigma(\epsilon)\cdot  \text{logit}( p_{i,1})\, \text{if}\,\, p_{i,1} \ge p_{i,0}~,\\
   \text{logit}(\widetilde p_{i,0}) \le \sigma(\epsilon)\cdot  \text{logit}( p_{i,0})\, \text{if}\,\, p_{i,0} \ge p_{i,1}~.
\end{align*}
Since $\sigma(\epsilon) \!\in\! (1/2,1)$ for any $\epsilon > 0$, this implies that whenever $y_i$ is more likely to occur than $1-y_i$ in the clear-text, the log-odds of predicting $y_i$ under $\epsilon$-randomization given by \eqref{eq:RR} is at most $\sigma(\epsilon)$-th fraction of the corresponding log-odds in the clear-text. This makes the objective \eqref{eq:mle_rr} ill-suited for obtaining a tight estimator for $\theta^*$ under randomization of labels. 

Essentially, we want to design an objective (or, equivalently a loss function) so that the log-odds of predictions under randomization is same as the log-odds in the clear-text. The following loss achieves this:
\begin{align}\label{eq:robust_rr}
\!\!\hat l_{\cD,\epsilon}(\theta)\!=\!-\!\!\sum_{i=1}^n \!\!\Big[\! \mathds{1}(\widetilde y_i\!=\!1)\log \hat p_{i,1}\!+\! \mathds{1}(\widetilde y_i\!=\!0)\log \hat p_{i,0}\Big],
\end{align}
where we define, for any $\theta \!\in\! \Real^d$, the predicted \emph{scores} of each randomized label $\widetilde y_i$ given $x_i$ as
\begin{align}\label{eq:predcit_score}
 \! \hat p_{i,1}\!=\!\!\frac{ \sigma(x_i^\top \theta)^{\sigma(\epsilon)}}{ (1\!-\!\sigma(x_i^\top \theta))^{(1\!-\!\sigma(\epsilon))}},\,\, \hat p_{i,0}\!=\!\!\frac{ (1\!-\!\sigma(x_i^\top \theta))^{\sigma(\epsilon)}} {\sigma(x_i^\top \theta)^{(1\!-\!\sigma(\epsilon))}}. 
\end{align}
Although $\hat p_{i,1}$ and $\hat p_{i,0}$ are not probabilities, these satisfy our desired property:
\begin{align*}
    \log\frac{\hat p_{i,1}}{\hat p_{i,0}}\!=\!\log\frac{\sigma(x_i^\top \theta)}{1-\sigma(x_i^\top \theta)} \!=\! \text{logit}(p_{i,1})~. 
\end{align*}
Hence the loss function $\hat l_{\cD,\epsilon}(\theta)$ essentially de-biases the effect of randomization.
This, in turn, yields that $\hat l_{\cD,\epsilon}(\theta)$ is $\gamma(2\sigma(\epsilon)\!-\!1)$ strongly convex in the semi-norm $\norm{\cdot}_{\Sigma_\cD}$ for all $\theta \in \Theta_B$, and importantly, it holds for any $\epsilon > 0$. This helps us
obtain an estimator for $\theta^*$ with error bound for all privacy levels $\epsilon > 0$, defined as
\begin{align}\label{eq:estimator}
 \hat\theta_{\texttt{RR}}\in \argmin\nolimits_{\theta \in \Theta_B} \hat l_{\cD,\epsilon}(\theta) ~.  
\end{align}
$\hat \theta_{\text{RR}}$ satisfies $\epsilon$-label DP due to RR and post-processing of DP. Now, for any constant $\lambda > 0$, we have the following estimation error bound for $\hat \theta_{\text{RR}}$. Proof of this result is deferred to Appendix~\ref{proof:robust-rr}.
\begin{theorem}[Estimation error of $\hat\theta_{\texttt{RR}}$]\label{thm:robust_rr}
Fix $\alpha \in (0,1), \epsilon > 0, \lambda >  0$. Then, under Assumption~\ref{ass:bound}, with probability at least $1-\alpha$, we have
\begin{align}\label{eq:robust_rr_bound}
 \!\!\! \norms{\hat\theta_{\texttt{RR}} \!-\! \theta^*}_{\Sigma_{\cD}\!+\!\lambda I}  \!\le\! \frac{C}{\gamma}\frac{ e^{\epsilon}\!+\!1}{ e^{\epsilon}\!-\!1}\!\sqrt{\!\frac{d\!+\!\log(1\!/\!\alpha)}{n}}\!+\! C'\!\sqrt{\lambda} B,
\end{align}
where $\gamma\!=\! \frac{1}{2 + e^{-2LB} + e^{2LB}}$, $C,C'$ are absolute constants.
\end{theorem}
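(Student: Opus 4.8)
The plan is the classical $M$-estimation argument: exploit that $\hat\theta_{\texttt{RR}}$ is a constrained minimizer of a strongly convex objective and reduce the bound to a concentration inequality for the gradient $\nabla\hat l_{\cD,\epsilon}(\theta^*)$. Since $\Theta_B$ is convex and contains both $\hat\theta_{\texttt{RR}}$ and $\theta^*$, optimality gives $\hat l_{\cD,\epsilon}(\hat\theta_{\texttt{RR}}) \le \hat l_{\cD,\epsilon}(\theta^*)$; combining this with the strong-convexity lower bound $\hat l_{\cD,\epsilon}(\hat\theta_{\texttt{RR}}) \ge \hat l_{\cD,\epsilon}(\theta^*) + \inner{\nabla\hat l_{\cD,\epsilon}(\theta^*)}{\hat\theta_{\texttt{RR}}-\theta^*} + \tfrac{\kappa}{2}\sum_{i=1}^n (x_i^\top(\hat\theta_{\texttt{RR}}-\theta^*))^2$ with curvature $\kappa=\gamma(2\sigma(\epsilon)-1)$, then applying the generalized Cauchy--Schwarz inequality to the inner product against the matrix $\sum_i x_ix_i^\top + n\lambda I$ and using $\norm{\hat\theta_{\texttt{RR}}-\theta^*}\le 2B$ to pass from the $\Sigma_\cD$-seminorm to the $\Sigma_\cD+\lambda I$-norm, I obtain the scalar quadratic inequality $\tfrac{\kappa n}{2}v^2 - \sqrt{n}\,\norms{\nabla\hat l_{\cD,\epsilon}(\theta^*)}_{(\sum_i x_ix_i^\top + n\lambda I)^{-1}}\, v - 2\kappa n\lambda B^2 \le 0$ in $v := \norms{\hat\theta_{\texttt{RR}}-\theta^*}_{\Sigma_\cD+\lambda I}$. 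Solving it gives $v \le \tfrac{2}{\kappa\sqrt n}\norms{\nabla\hat l_{\cD,\epsilon}(\theta^*)}_{(\sum_i x_ix_i^\top + n\lambda I)^{-1}} + 2\sqrt\lambda B$, so it remains to (i) confirm the curvature $\kappa$, and (ii) show $\norms{\nabla\hat l_{\cD,\epsilon}(\theta^*)}_{(\sum_i x_ix_i^\top + n\lambda I)^{-1}}^2 \lesssim d+\log(1/\alpha)$ with probability $1-\alpha$; then substituting $\kappa = \gamma(2\sigma(\epsilon)-1) = \gamma\tfrac{e^\epsilon-1}{e^\epsilon+1}$ produces exactly \eqref{eq:robust_rr_bound} with $C'=2$.

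For step (i): write the $i$-th summand of \eqref{eq:robust_rr} as a function $h_i$ of the margin $z = x_i^\top\theta$. By \eqref{eq:predcit_score}, $-\log\hat p_{i,1} = -\sigma(\epsilon)\log\sigma(z) + (1-\sigma(\epsilon))\log(1-\sigma(z))$ and, symmetrically, $-\log\hat p_{i,0} = -\sigma(\epsilon)\log(1-\sigma(z)) + (1-\sigma(\epsilon))\log\sigma(z)$. Using $\tfrac{d}{dz}\log\sigma(z) = 1-\sigma(z)$, a direct computation gives $h_i'(z) = (2\sigma(\epsilon)-1)\sigma(z) + (1-\sigma(\epsilon)) - \widetilde y_i$ and hence $h_i''(z) = (2\sigma(\epsilon)-1)\sigma'(z)$, irrespective of the realized randomized label $\widetilde y_i$. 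Since $\epsilon>0$ implies $\sigma(\epsilon)>1/2$ and $\sigma'(z)\ge\gamma$ on $\Theta_B$ by Assumption~\ref{ass:bound}, we get $h_i'' \ge \kappa > 0$, so $\hat l_{\cD,\epsilon}$ is convex (hence $\hat\theta_{\texttt{RR}}$ is well defined) and $\nabla^2\hat l_{\cD,\epsilon}(\theta) \succeq \kappa\sum_i x_ix_i^\top$ for every $\theta\in\Theta_B$. This is precisely where the de-biased scores \eqref{eq:predcit_score} buy the factor $2\sigma(\epsilon)-1$ uniformly in $\epsilon$, in contrast to the naive MLE loss \eqref{eq:mle_rr}, whose seminorm curvature only becomes positive once $\epsilon>2LB$.

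For step (ii): the gradient at the truth is $\nabla\hat l_{\cD,\epsilon}(\theta^*) = \sum_i g_i x_i$ with $g_i = h_i'(x_i^\top\theta^*) = (2\sigma(\epsilon)-1)\sigma(x_i^\top\theta^*) + (1-\sigma(\epsilon)) - \widetilde y_i$. The key observation is that $\mathbb{E}[\widetilde y_i\mid x_i]$ is exactly the RR-randomized label probability $\sigma(x_i^\top\theta^*)\sigma(\epsilon) + (1-\sigma(x_i^\top\theta^*))(1-\sigma(\epsilon)) = (2\sigma(\epsilon)-1)\sigma(x_i^\top\theta^*) + (1-\sigma(\epsilon))$, so $\mathbb{E}[g_i\mid x_i]=0$: even though $\hat p_{i,1},\hat p_{i,0}$ are not probabilities, $\{g_i x_i\}$ is a martingale-difference sequence, and $g_i = \mathbb{E}[\widetilde y_i\mid x_i] - \widetilde y_i \in (-1,1)$ is bounded (hence $O(1)$-sub-Gaussian conditionally on $x_i$). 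I would then invoke a standard self-normalized / vector-Bernstein tail bound for such sums (as in the non-private analysis of \citet{zhu2023principled}): with probability $\ge 1-\alpha$, $\norms{\sum_i g_ix_i}_{(\sum_i x_ix_i^\top + n\lambda I)^{-1}}^2 \lesssim \log\tfrac{1}{\alpha} + \log\tfrac{\det(\sum_i x_ix_i^\top + n\lambda I)}{\det(n\lambda I)}$, and bound the log-determinant term by $d\log(1 + 4L^2/(d\lambda)) = O(d)$ for constant $\lambda$ using $\norm{x_i}\le 2L$. Combining with the quadratic-inequality bound from the first paragraph and $\kappa = \gamma\tfrac{e^\epsilon-1}{e^\epsilon+1}$ yields $v \lesssim \tfrac{1}{\gamma}\tfrac{e^\epsilon+1}{e^\epsilon-1}\sqrt{\tfrac{d+\log(1/\alpha)}{n}} + \sqrt\lambda B$.

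The main obstacle — and the only genuinely new ingredient relative to the non-private argument — is verifying that the tailored loss \eqref{eq:robust_rr} is simultaneously (a) uniformly strongly convex in the $\Sigma_\cD$-seminorm for \emph{all} $\epsilon>0$ with the explicit curvature $(2\sigma(\epsilon)-1)\sigma'(z)$, and (b) has a centered gradient at $\theta^*$; both facts rest on the de-biasing identity $\log(\hat p_{i,1}/\hat p_{i,0}) = x_i^\top\theta$ together with the exact form of the RR label distribution, and neither is obvious a priori since the $\hat p_{i,\cdot}$ are not probabilities. Once (a) and (b) are established, the remaining steps — the generalized Cauchy--Schwarz manipulation, the seminorm-to-norm passage via boundedness of $\Theta_B$, and the self-normalized tail bound — are routine.
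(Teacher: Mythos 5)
Your proof follows the paper's argument step for step: same curvature computation $\nabla^2\hat l_{\cD,\epsilon}(\theta)\succeq\gamma(2\sigma(\epsilon)-1)\Sigma_\cD$, same observation that the gradient at $\theta^*$ is centered because $\mathbb{E}[\widetilde y_i\mid x_i]$ equals the RR-randomized probability, same Cauchy--Schwarz/quadratic-inequality manipulation to pass from the $\Sigma_\cD$-seminorm to the $\Sigma_\cD+\lambda I$-norm using $\norm{\Delta}\le 2B$. The only divergence is the concentration tool for $\norms{\nabla\hat l_{\cD,\epsilon}(\theta^*)}_{(\Sigma_\cD+\lambda I)^{-1}}$: you invoke a self-normalized martingale bound with a $\log\det$ term (which picks up a harmless extra $\log(1/\lambda)$ factor), whereas the paper applies the Hsu--Kakade--Zhang Bernstein inequality for sub-Gaussian quadratic forms to $V_{\theta^*}^\top M V_{\theta^*}$ with $M=\frac{1}{n^2}X(\Sigma_\cD+\lambda I)^{-1}X^\top$, giving $d+\log(1/\alpha)$ directly from $\tr(M)\le d/n$; both yield the stated rate for fixed $\lambda$.
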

\textbf{Cost of Privacy.} We compare the error of our private estimator $\hat{\theta}_{\texttt{RR}}$ with that of the clear-text (i.e., non-private) estimator $\theta_{\texttt{MLE}}$, which minimizes the following non-private negative log-likelihood
\begin{align}\label{eq:mle}
\!l_{\cD}(\theta)\!=\!-\!\sum_{i=1}^n\! \Big[\! \mathds{1}(y_i\!=\!1)\log p_{i,1}\!+\! \mathds{1}(y_i\!=\!0)\log p_{i,0}\!\Big].
\end{align}
As shown in~\cite{zhu2023principled}, $\theta_{\texttt{MLE}}$ achieves an estimation error of $ O\big( \sqrt{d/n}\big)$ in semi-norm. Comparing this with Theorem~\ref{thm:robust_rr}, we observe that the cost of ensuring label DP for $\hat{\theta}_{\texttt{RR}}$ is of the order $O \left(\frac{1}{ e^\epsilon-1}\sqrt{\frac{d}{n}}\right)$, which almost matches lower bound  discussed below.

\textbf{Comparison with Lower Bound.} We now compare the upper bound in Theorem~\ref{thm:robust_rr} with the semi-norm lower bound in Theorem~\ref{thm:semi-lower}. Setting $\lambda = \left(\frac{e^{\epsilon}+1}{e^{\epsilon}-1}\right)^2 \frac{d + \log(1/\alpha)}{B^2 \gamma^2 n}$, we see that the upper bound matches the lower bound up to a factor of $O(1/\gamma) \approx e^{LB}$. Hence, if both $L = O(1)$ and $B = O(1)$, the bounds are tight up to a constant factor. 


\textbf{Applications in Contextual Bandits.}
In applications such as offline linear contextual bandits \citep{li2022pessimism}, this bound can then be used to learn a downstream pessimistic policy 
\begin{align}\label{eq:pessimism}
    \hat \pi_\Theta = \argmax_{\pi \in \Pi}\inf_{\theta \in \Theta} \mathbb{E}_{s \sim \rho}\left[\phi(s,\pi(s))^\top \theta\right].
\end{align}
Here $\Pi$ is the set of all action selection policies $\pi:\cS \to \cA$ and $\Theta$ is a high-probability confidence set for $\theta^*$, i.e., it is a set of all $\theta \in \Theta_B$ that satisfies \eqref{eq:robust_rr_bound}.
Similar to \cite{li2022pessimism}, one can show that this pessimistic policy achieves a \emph{sub-optimality gap} of $O\Big(\!\frac{L}{\gamma}\frac{ e^\epsilon+1}{ e^\epsilon-1}\sqrt{\frac{d}{n}}\norm{\Sigma_\cD\!+\!\lambda I}^{-1/2}\!+\sqrt{\lambda} LB \Big)$ in high probability for any $\lambda > 0$ while guaranteeing label DP.


\subsection{Efficient Computation via SGD}

It is evident that computing the exact minimizer $\hat\theta_{\texttt{RR}}$ in~\eqref{eq:robust_rr} is impractical in practice -- an issue shared by the non-private estimator $\theta_{\texttt{MLE}}$ of \citet{zhu2023principled} as well. Note that even if an approximate solution is allowed, it still requires solving the optimization problem up to a certain accuracy level so as to preserve the same estimation error bound.  This motivates us to consider the (one-pass) SGD algorithm, which iterates over each sample once. In particular,
we replace \eqref{eq:estimator}
by a sequential update rule: 
\begin{align}\label{eq:estimate_SGD}
    \hat \theta_1 = 0~,\,\,\hat\theta_{t+1} = \Pi_{\Theta_B}\Big(\hat\theta_{t} - \eta_t \hat{g}_{t}\Big),\,\, 1\le t \le n~.
\end{align}
Here $\Pi_{\Theta_B}$ is a projection operator onto the set $\Theta_B$, $\eta_t$ is 
a suitable learning rate and $\hat g_{t} = -\nabla_{\hat\theta_t} \log \hat p_{t,\widetilde y_t}$ is the (negative) gradient of the log-predicted score of randomized label $\widetilde y_t$ computed at current estimate $\hat\theta_t$, where $\hat p_{t,y}, y \in {0,1}$ is given by \eqref{eq:predcit_score}. We denote the estimate after $n$ iterations as $\hat\theta_{\texttt{SGD-RR}}$.

Although the error bound under semi-norm as proved in Theorem~\ref{thm:robust_rr} does not hold for this
SGD variant,
we can bound the estimation error of $\hat\theta_{\texttt{SGD-RR}}$ in $\ell_2$-norm. To begin with, we note that Theorem~\ref{thm:robust_rr} implies a bound on the estimation error of $\hat{\theta}_{\texttt{RR}}$ in $\ell_2$-norm. 

\begin{corollary}
Under the same hypothesis of Theorem~\ref{thm:robust_rr}, we have, with probability at least $1-\alpha$,
\begin{align*}
  \norms{\hat{\theta}_{\texttt{RR}} \!-\! \theta^*}_2  \le \frac{C}{\gamma\sqrt{\lambda_{\min}(\Sigma_\cD)}}\frac{ e^{\epsilon}\!+\!1}{ e^{\epsilon}\!-\!1}\!\sqrt{\frac{d\!+\!\log(1/\alpha)}{n}}~.
\end{align*}
where $\lambda_{\min}(\Sigma_\cD)$ is the minimum eigenvalue of $\Sigma_\cD$.
\end{corollary}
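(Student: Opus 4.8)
The plan is to derive the $\ell_2$-norm bound directly from the semi-norm bound of Theorem~\ref{thm:robust_rr} by a change-of-norm argument. The key observation is that for any vector $v \in \Real^d$, we have $\norm{v}_{\Sigma_\cD + \lambda I}^2 = v^\top(\Sigma_\cD + \lambda I) v \ge \lambda_{\min}(\Sigma_\cD + \lambda I)\norm{v}_2^2 = (\lambda_{\min}(\Sigma_\cD) + \lambda)\norm{v}_2^2 \ge \lambda_{\min}(\Sigma_\cD)\norm{v}_2^2$. Applying this with $v = \hat\theta_{\texttt{RR}} - \theta^*$ gives $\norm{\hat\theta_{\texttt{RR}} - \theta^*}_2 \le \norm{\hat\theta_{\texttt{RR}} - \theta^*}_{\Sigma_\cD + \lambda I}/\sqrt{\lambda_{\min}(\Sigma_\cD)}$.

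Next I would plug in the high-probability bound from Theorem~\ref{thm:robust_rr}, which states that with probability at least $1-\alpha$,
\begin{align*}
\norms{\hat\theta_{\texttt{RR}} - \theta^*}_{\Sigma_{\cD}+\lambda I} \le \frac{C}{\gamma}\frac{e^{\epsilon}+1}{e^{\epsilon}-1}\sqrt{\frac{d+\log(1/\alpha)}{n}} + C'\sqrt{\lambda} B.
\end{align*}
Dividing through by $\sqrt{\lambda_{\min}(\Sigma_\cD)}$ yields a bound with two terms, the second being $C' B\sqrt{\lambda/\lambda_{\min}(\Sigma_\cD)}$. Since $\lambda > 0$ is a free parameter in Theorem~\ref{thm:robust_rr}, I would take the limit $\lambda \to 0^+$ (or simply choose $\lambda$ arbitrarily small); the first term is continuous in $\lambda$ and the second term vanishes, leaving exactly the claimed inequality. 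One should note that $\lambda_{\min}(\Sigma_\cD)$ does not depend on $\lambda$, so this limiting step is clean; alternatively, one can restate Theorem~\ref{thm:robust_rr} with $\lambda = 0$ directly, since the proof of that theorem only uses $\lambda \ge 0$ (the semi-norm $\norm{\cdot}_{\Sigma_\cD}$ suffices when $\Sigma_\cD$ is invertible, i.e., $\lambda_{\min}(\Sigma_\cD) > 0$).

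There is essentially no main obstacle here — this is a routine corollary. The only subtlety worth flagging is that the bound is vacuous (or the argument breaks) when $\Sigma_\cD$ is singular, i.e., $\lambda_{\min}(\Sigma_\cD) = 0$; implicitly the statement presumes $\lambda_{\min}(\Sigma_\cD) > 0$, which holds whenever the differential features $x_i$ span $\Real^d$. Under that mild condition the proof is just the two lines above: the norm-comparison inequality followed by substitution and taking $\lambda$ to zero. If one prefers to avoid the limiting argument entirely, keeping $\lambda > 0$ gives the slightly weaker but still valid bound $\norm{\hat\theta_{\texttt{RR}} - \theta^*}_2 \le \frac{1}{\sqrt{\lambda_{\min}(\Sigma_\cD)}}\big(\frac{C}{\gamma}\frac{e^\epsilon+1}{e^\epsilon-1}\sqrt{(d+\log(1/\alpha))/n} + C'\sqrt{\lambda}B\big)$, which is what the corollary asserts in the $\lambda \to 0$ regime.
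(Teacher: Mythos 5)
Your proposal is correct and, given that the paper states this corollary without an explicit proof, it captures the standard intended argument: the norm-comparison inequality $\norm{v}_{\Sigma_\cD + \lambda I}^2 \ge \lambda_{\min}(\Sigma_\cD)\norm{v}_2^2$ applied to $v = \hat\theta_{\texttt{RR}} - \theta^*$, followed by the semi-norm bound of Theorem~\ref{thm:robust_rr}.

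One small point worth making explicit: your limiting argument $\lambda \to 0^+$ is valid, but the justification is slightly subtler than ``the second term vanishes'' — for each fixed $\lambda$ the probability-$(1-\alpha)$ event in Theorem~\ref{thm:robust_rr} is potentially different, so you cannot simply pass to the limit inside a single event. What rescues it is that $\norm{\hat\theta_{\texttt{RR}} - \theta^*}_2$ is a fixed random variable (it does not depend on $\lambda$, since the estimator is $\lambda$-free) and the events $F_\lambda = \{\norm{\hat\theta_{\texttt{RR}} - \theta^*}_2 \le f(\lambda)\}$, with $f$ continuous and increasing in $\lambda$, are nested with $\bigcap_{\lambda>0}F_\lambda = F_0$; continuity of probability measure from above then gives $P(F_0) = \lim_{\lambda\to 0^+}P(F_\lambda) \ge 1-\alpha$. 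Your alternative — rerunning the proof of Theorem~\ref{thm:robust_rr} with $\lambda = 0$, which is permitted exactly when $\Sigma_\cD \succ 0$ (and in that case the $C'\sqrt{\lambda}B$ term simply never appears) — is cleaner and avoids this bookkeeping entirely; it is the route I would favor writing up, and you correctly flag that $\lambda_{\min}(\Sigma_\cD)>0$ is the implicit assumption making the corollary non-vacuous.
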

This bound is non-trivial only if assume that sample covariance matrix $\Sigma_{\cD}$ is positive definite.
One can also relax this assumption to the population covariance matrix of differential state-action features $x = \phi(s, a^1) - \phi(s, a^0)$, defined as
\begin{align*}
    \Sigma = \mathbb{E}_{s \sim \rho(\cdot), (a^0,a^1)\sim \mu(\cdot | s)} \left[ x x^{\top} \right]~.
\end{align*}
\begin{assumption}[Coverage of feature space]
\label{ass:cov}
The data distributions $\rho,\mu$ are such that $\lambda_{\min}(\Sigma) \ge \kappa$ for some constant $\kappa \!>\! 0$.
\end{assumption}
This is essentially a coverage assumption on the state-action feature space, which is standard in offline bandits and RL \citep{yin2022near}. The next result bounds the estimation error of  $\hat\theta_{\texttt{SGD-RR}}$ in $\ell_2$-norm.
\begin{theorem}[Estimation error of $\hat\theta_{\texttt{SGD-RR}}$]
\label{thm:sgd_rr}
    Fix $\alpha \in (0,1/e)$ and $\epsilon \ge 0$. Then, under Assumptions~\ref{ass:bound} and~\ref{ass:cov} and setting $\eta_t = \frac{1}{\gamma \kappa}$, we have, with probability at least $1-\alpha$, 
    \begin{align*}
\norm{\hat\theta_{\texttt{SGD-RR}} \!-\! \theta^*}_2 \!\le\! C \!\cdot\! \frac{L}{\gamma \kappa} \!\cdot\!\frac{e^{\epsilon}\!+\!1}{e^{\epsilon}\!-\!1}\!\sqrt{ \frac{\log\left(\log(n)/\alpha\right)}{ n}},
    \end{align*}
where $\gamma = \frac{1}{2 + e^{-2LB} + e^{2LB}}$, $C$ is an absolute constant.
\end{theorem}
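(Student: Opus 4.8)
The plan is to view the one-pass projected SGD of \eqref{eq:estimate_SGD} as stochastic optimization of the \emph{population} de-biased risk $L(\theta) := \mathbb{E}_{s\sim\rho,(a^0,a^1)\sim\mu(\cdot\mid s),\widetilde y}\!\left[-\log\hat p_{\widetilde y}(\theta; x)\right]$, with $\hat p$ the predicted scores of \eqref{eq:predcit_score} and $x=\phi(s,a^1)-\phi(s,a^0)$. Everything then reduces to three facts: (i) $L$ is $\mu$-strongly convex in $\ell_2$ with $\mu=\gamma\,(2\sigma(\epsilon)-1)\,\kappa=\gamma\kappa\,\tfrac{e^\epsilon-1}{e^\epsilon+1}$; (ii) $\nabla L(\theta^*)=0$; and (iii) the stochastic gradients obey $\|\hat g_t\|_2\le G$ with $G=2L$. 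Granting these, the bound follows from the classical error-recursion analysis of strongly convex SGD, upgraded to a high-probability statement by a martingale concentration step — which is what produces the $\log\log n$ factor. Privacy is immediate: each $\widetilde y_i$ comes from the $\epsilon$-DP randomized response \eqref{eq:RR}, and $\hat\theta_{\texttt{SGD-RR}}$ is a post-processing of $(\widetilde y_i)_i$, hence $\epsilon$-label DP.

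First I would establish (i)--(iii) by a short differentiation. One computes $\nabla_\theta(-\log\hat p_{i,1})=-\widetilde p_{i,0}(\theta)\,x_i$ and $\nabla_\theta(-\log\hat p_{i,0})=\widetilde p_{i,1}(\theta)\,x_i$, where $\widetilde p_{i,0},\widetilde p_{i,1}\in[0,1]$ are the randomized-label probabilities defined before \eqref{eq:mle_rr}; with $\|x_i\|_2\le 2L$ this gives (iii). For (ii), at $\theta^*$ the label satisfies $\mathbb{P}_{\theta^*}[\widetilde y_i=1\mid x_i]=\widetilde p_{i,1}(\theta^*)$, so the conditional expectation of $\hat g_i(\theta^*)=-\nabla\log\hat p_{i,\widetilde y_i}(\theta^*)$ equals $\widetilde p_{i,1}(\theta^*)\bigl(-\widetilde p_{i,0}(\theta^*)x_i\bigr)+\widetilde p_{i,0}(\theta^*)\bigl(\widetilde p_{i,1}(\theta^*)x_i\bigr)=0$; averaging over $x_i$ gives $\nabla L(\theta^*)=0$. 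For (i), one more derivative shows the per-sample Hessian equals $(2\sigma(\epsilon)-1)\,\sigma'(x_i^\top\theta)\,x_ix_i^\top$ — the \emph{same} for both realizations of $\widetilde y_i$ — and since $\sigma'(x^\top\theta)\ge\gamma$ on $\Theta_B$ (Assumption~\ref{ass:bound}) and $\lambda_{\min}(\Sigma)\ge\kappa$ (Assumption~\ref{ass:cov}), $\nabla^2 L(\theta)\succeq\gamma(2\sigma(\epsilon)-1)\,\Sigma\succeq\mu I$, using $2\sigma(\epsilon)-1=\tfrac{e^\epsilon-1}{e^\epsilon+1}$. This is exactly the payoff of the de-biased loss: the naive RR-MLE objective \eqref{eq:mle_rr} would not give a $\theta^*$-consistent, uniformly strongly convex population objective.

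Next I would set up the error recursion for $\delta_t:=\hat\theta_t-\theta^*$. Let $\mathcal F_{t-1}$ be the $\sigma$-algebra of the first $t-1$ samples. Since $\theta^*\in\Theta_B$ and $\Pi_{\Theta_B}$ is non-expansive, $\|\delta_{t+1}\|_2^2\le\|\delta_t-\eta_t\hat g_t\|_2^2$, and because in one-pass SGD the $t$-th sample is independent of $\hat\theta_t$, $\mathbb{E}[\hat g_t\mid\mathcal F_{t-1}]=\nabla L(\hat\theta_t)$. Writing $\hat g_t=\nabla L(\hat\theta_t)+\xi_t$ with $\mathbb{E}[\xi_t\mid\mathcal F_{t-1}]=0$ and $\|\xi_t\|_2\le 2G$, and using $\langle\nabla L(\hat\theta_t),\delta_t\rangle=\langle\nabla L(\hat\theta_t)-\nabla L(\theta^*),\delta_t\rangle\ge\mu\|\delta_t\|_2^2$ from (i)--(ii), we obtain
\[
\|\delta_{t+1}\|_2^2\le(1-2\eta_t\mu)\|\delta_t\|_2^2-2\eta_t\langle\xi_t,\delta_t\rangle+\eta_t^2G^2 .
\]
With the prescribed step size (decaying at rate $\Theta(1/t)$ and, up to a constant, at most $1/(\mu t)$), taking expectations and a routine induction on $\mathbb{E}\|\delta_t\|_2^2$ give $\mathbb{E}\|\delta_{n+1}\|_2^2=O(G^2/(\mu^2 n))$, so the target rate already holds in expectation.

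Finally, to get the high-probability bound I would unroll the recursion: $\|\delta_{n+1}\|_2^2$ is controlled by an exponentially small term in $\|\delta_1\|_2$ plus a martingale sum $\sum_{t\le n}w_t\langle\xi_t,\delta_t\rangle$ with nonnegative weights $w_t=\eta_t\prod_{s>t}(1-2\eta_s\mu)$. Here $|\langle\xi_t,\delta_t\rangle|\le 2G\cdot 2B$ (both iterates lie in $\Theta_B$) and the conditional second moment is controlled by the in-expectation bound just derived, so I would split $\{1,\dots,n\}$ into $O(\log n)$ dyadic blocks on which $\eta_t$ (hence $w_t$) is constant up to a factor of $2$, apply a Freedman/Azuma-type deviation bound on each block, and union bound over the $O(\log n)$ blocks at confidence $1-\alpha/\log n$; this is precisely what turns $\sqrt{\log(1/\alpha)}$ into $\sqrt{\log(\log(n)/\alpha)}$. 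Taking square roots and substituting $G=2L$ and $\mu=\gamma\kappa\,\tfrac{e^\epsilon-1}{e^\epsilon+1}$ yields $\|\hat\theta_{\texttt{SGD-RR}}-\theta^*\|_2\le C\cdot\tfrac{L}{\gamma\kappa}\cdot\tfrac{e^\epsilon+1}{e^\epsilon-1}\sqrt{\log(\log(n)/\alpha)/n}$. I expect the last step to be the main obstacle — extracting the sharp $\log\log n$ (rather than $\log n$) from the martingale forces the dyadic-blocking/peeling argument rather than a single crude deviation inequality; a lesser technical point is carrying the two structural identities for $\hat p$ (label-independent curvature $\gamma(2\sigma(\epsilon)-1)$ and vanishing gradient at $\theta^*$) through cleanly, since these are exactly what make a well-behaved strongly convex, unbiased population objective available in the first place.
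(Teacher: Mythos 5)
Your structural facts are all correct and match the paper's, modulo a tighter gradient-bound constant ($G=2L$ vs.\ the paper's $G=4L$, both fine). In particular, the per-sample Hessian being label-independent and equal to $(2\sigma(\epsilon)-1)\sigma'(x_i^\top\theta)x_ix_i^\top$, the vanishing of $\nabla L(\theta^*)$, the curvature constant $\mu=\gamma(2\sigma(\epsilon)-1)\kappa$, and the error recursion all reproduce (in a cleaner population-risk language) the identities the paper derives via the tower property and mean-value theorem. The privacy step is as trivial as you say. Up to there, you are on the paper's path, which is the Rakhlin--Shamir--Sridharan high-probability analysis of strongly convex SGD.

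The gap is in the high-probability step, and it is a real one, not a matter of polish. After unrolling, you must control the martingale sum $\sum_{t} w_t\langle\xi_t,\delta_t\rangle$, whose predictable quadratic variation is $\sum_t w_t^2\,\mathbb{E}[\langle\xi_t,\delta_t\rangle^2\mid\mathcal F_{t-1}]\le \sum_t 4G^2 w_t^2\,\|\delta_t\|_2^2$ --- a \emph{random} quantity that depends on exactly the iterate errors you are trying to bound. Your sentence ``the conditional second moment is controlled by the in-expectation bound just derived'' does not close this: $\mathbb{E}\|\delta_t\|_2^2=O(G^2/(\mu^2 t))$ is not an almost-sure (or even high-probability) bound on $\|\delta_t\|_2^2$, so it cannot be substituted into a standard Freedman/Azuma deviation inequality as the variance proxy. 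Your dyadic blocking is over the \emph{time} index (to freeze $w_t$ within each block), but that leaves the random $\|\delta_t\|_2^2$ inside each block untouched, so the circularity persists. What the paper actually does (its Step~3--4, via Lemma~3 of~\cite{rakhlin2011making}) is two things you have not replicated: it uses a Freedman-type inequality stated with a $\max$ between the \emph{random} quadratic variation $\sum_{i\le t}(i-1)^2\|\delta_i\|_2^2$ and a crude deterministic bound, holding \emph{uniformly over all $t\le n$} on a single event (the $\log\log n$ arises from a peeling argument over the scale of the random variance, not from chopping time into $\log n$ pieces); and it then feeds this uniform-in-$t$ inequality into a \emph{deterministic induction} on $t$ on that good event, so that the bound on $\|\delta_i\|_2^2$ for $i\le t$ --- already established by induction --- can be substituted into the random variance before concluding the bound at $t+1$. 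Without the uniform-in-$t$ statement and the bootstrap induction, the argument does not terminate. So the ``last step'' you flagged as the main obstacle is indeed where your sketch fails; the missing ideas are a self-normalized (random-variance) Freedman inequality and the induction that resolves the self-reference.

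A smaller point: the decaying step size you silently use in the recursion is $\eta_t\propto 1/t$ (as in the paper's proof, $\eta_t=1/(\gamma_\epsilon\kappa t)$ with $\gamma_\epsilon=(2\sigma(\epsilon)-1)\gamma$), not the constant $\eta_t=\tfrac{1}{\gamma\kappa}$ literally written in the theorem statement; you should note you are reading the step size as $\Theta(1/(\mu t))$ and that with a truly constant step the recursion does not contract to zero.
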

The complete algorithm and proof of the theorem is deferred to Appendix~\ref{proof:sgd-rr}. In fact, we prove a stronger and general result than Theorem~\ref{thm:sgd_rr} by bounding the estimator error uniformly for all intermediate parameter estimates $\hat\theta_{t+1}$, $1\le t \le n$, with $\sqrt{1/n}$ replaced by $\sqrt{1/t}$ in the bound. The $\log\log n$ term is the (minimal) cost to ensure uniform concentration over all $t \le n$. The bound for $\hat\theta_{\texttt{SGD-RR}}$ follows by setting $t\!=\!n$ in the general result. The key idea behind this result is to show the gradient $\hat g_t$ in the SGD update \eqref{eq:estimate_SGD} is an unbiased estimate of the gradient (except some scaling) in the clear-text $g_t = -\nabla_{\hat\theta_t} \log  p_{t,y_t}$, where $p_{t,y}, y \in \lbrace 0,1\rbrace$ denotes the probability of observing $y$ at round $t$, see \eqref{eq:gen-np}.
Specifically, we have
\begin{align*}
        \hat{g}_t = \frac{\sum_{y\in \lbrace 0,1\rbrace} \nabla_{\hat\theta_t} \log  p_{t,y}}{e^{\epsilon} + 1}- \nabla_{\hat\theta_t} \log  p_{t,\widetilde y_t} ~,
    \end{align*}
which, in turn, gives $\mathbb{E}\left [\hat g_t |x_t,y_t,\hat\theta_t\right] \!=\! \left(2\sigma(\epsilon)-1\right)g_t$, where the expectation is over the $\epsilon$-randomization of clear-text label $y_t$ given by \eqref{eq:RR}. This, along with the coverage assumption and the fact that $\sigma'(x_t^\top \theta) \ge \gamma$ for all $\theta \in \Theta_B$ help us achieve the desired error bound.

    \textbf{Comparison with Semi-norm Bound.}
     The main difference compared to the semi-norm bound in Theorem \ref{thm:robust_rr} is the inverse dependence on coverage parameter $\kappa$ -- estimation error increases as $\kappa$ decreases. Another apparent difference is the dependence (or the lack of it) on the feature dimension $d$ in the error bound.
     However, $\kappa$ is a problem dependent quantity. It depends implicitly on the dimension $d$ of feature space \citep{wang2020statistical}, thereby capturing the dependence of error bound on $d$. For example,
     since $\norm{x} \le L$, we have $\kappa = O(L^2/d)$ under Assumption~\ref{ass:bound}. In the best case when $\kappa = \Theta(L^2/d)$, the error bound in $\ell_2$-norm scales as $\widetilde O\left(\frac{1}{\gamma}\frac{e^\epsilon+1}{e^\epsilon -1}\frac{d}{\sqrt{n}} \right)$, which is $\sqrt{d}$ factor higher than that in semi-norm $\norm{\cdot}_{\Sigma_\cD}$.
     Finally, due to the coverage assumption, instead of employing a pessimistic policy as in~\eqref{eq:pessimism} for a downstream offline contextual bandit task, we can design a greedy (plug-in) policy $
    \hat \pi_{\text{Greedy}}(s) = \argmax_{a \in \cA} \phi(s,a)^\top \hat\theta_{\texttt{SGD-RR}}$,
which achieves a \emph{sub-optimality gap} of $\widetilde O\Big(\!\frac{L^2}{\gamma\kappa}\frac{ e^\epsilon+1}{ e^\epsilon-1}\frac{1}{\sqrt{n}} \Big)$ in high-probability while ensuring label-DP.

\textbf{Comparison with Lower Bound.} We now compare the upper bound in Theorem~\ref{thm:sgd_rr} with the $\ell_2$-norm lower bound  in Theorem~\ref{thm:l2-lower}. First, we note that $\kappa = O(L^2/d)$ under Assumption~\ref{ass:bound}. That is, in the best case when $\kappa = \Theta(L^2/d)$, the upper bound in Theorem~\ref{thm:sgd_rr} becomes $\widetilde{O}\left(\frac{d}{L\gamma\sqrt{n}}\frac{e^{\epsilon}+1}{e^{\epsilon}-1}\right)$. This matches the lower bound up to a factor of $O(e^{LB})$. Hence, similar to the semi-norm bounds, if $L, B$ are $\Theta(1)$, the $\ell_2$-norm bounds are also tight up to a constant factor. 

\begin{remark}
A similar SGD update as \eqref{eq:estimate_SGD} is used in~\citet{ghazi2021deep} in the context of private stochastic convex optimization (SCO). 
They bound the excess population risk of the SGD estimate in expectation under the clear-text distribution $(x_t,y_t) \sim P$. \citet{natarajan2013learning} consider a similar objective function as \eqref{eq:robust_rr} in the context of binary classification with noisy labels. They obtain a classifier by minimizing the empirical risk on the noisy samples $(x_t,\widetilde y_t)_{t=1}^n$ (which is equivalent to maximizing \eqref{eq:robust_rr}) and bound its excess population risk under the clear-text distribution. In contrast to both works,
we aim to bound the estimation error of the parameter estimate in high probability, which brings additional challenges in the analysis.  
\end{remark}

\subsection{Extensions to Other Preference Models}

\paragraph{Thurstone Model.}
 Our result can be extended to any pairwise comparison model of the form
 \begin{align*}
  \mathbb{P}_{\theta^*} \left[y_i\!=\!1 |s_i,a_i^0,a_i^1\right]\!=\! F(x_i^\top \theta^*) 
\end{align*}
if $F$ satisifies following two properties: (i) $F$ is an even function, i.e., $F(z)=1-F(-z)$ for all $z$ and (ii) $F$ is strongly log-concave in an interval around $z=0$, i.e., there is a curvature parameter $\gamma > 0$ and a range parameter $c > 0$ such that
\begin{align*}
    \frac{d^2}{dz^2}\left(-\log(F(z))\right) \ge \gamma \quad \forall \, z \in [-c,c]~.
\end{align*}
For the BTL model, where $F$ is specified by the sigmoid function, these properties hold for $c=2LB$ and $\gamma = \frac{1}{2 + \exp(-2LB) + \exp(2LB)}$ under Assumption~\ref{ass:bound}. 

In the Thurstone model \citep{thurstone1927law}, each label $y_i \in \lbrace 0,1 \rbrace$ is sampled from the conditional distribution 
\begin{align*}
  \mathbb{P}_{\theta^*} \left[y_i\!=\!1 |s_i,a_i^0,a_i^1\right]\!=\! \Phi(x_i^\top \theta^*)~, 
\end{align*}
where $\Phi$ is the CDF of standard Gaussian distribution. It holds that $\Phi$ is strongly
log-concave for all $\theta \in \Theta_B$ under Assumption~\ref{ass:bound} \citep{tsukida2011analyze}. Hence, a similar error bound as Theorem~\ref{thm:sgd_rr} for the SGD-based estimator holds under Thurstone model with a proper choice of the curvature parameter $\gamma$.

\paragraph{Placket-Luce Model.}
One practical extension of our results is to privately learn the reward parameter $\theta^*$ from $K$-wise comparisons between actions, which is captured by the Placket-Luce (PL) model \citep{plackett1975analysis,luce2012individual}. Let $s$ be a state and $a_1,\ldots,a_K$ be $K$ actions to be compared at that state. Let the label/preference feedback $y \in \lbrace 1,2,\ldots,K\rbrace$ indicates which action is most preferred by human labeler.\footnote{We differ here from the standard PL model, where human labeler outputs the entire ranking between $K$ actions.} 
Under the Placket-Luce model, the label $y$ is sampled according to the probability distribution
\begin{align}\label{eq:PL}
\!\!\mathbb{P}_{\theta^*}\!\!\left[y\!=\!k |s, a_1,\ldots, a_K\right] \!=\! \frac{\exp(\phi(s,a_k)^\top \theta^*)}{\sum_{j=1}^{K}\exp(\phi(s,a_j)^\top \theta^*) }.
\end{align}
In this case, label-DP is ensured by employing the $K$-Randomized Response (K-RR) mechanism, which, when queried the value of $y$, outputs $\widetilde y$ that is randomly sampled from the probability distribution:
\begin{align}\label{eq:KRR}
   \!\prob{\widetilde{y} \!=\! y} \!=\! \frac{e^{\epsilon}}{e^{\epsilon}\!\!+\!\!K\!\!-\!\!1},\, \prob{\widetilde{y} \!=\! y'} \!=\! \frac{1}{e^{\epsilon}\!\!+\!\!K\!\!-\!\!1}\forall y'\!\neq\! y~. 
\end{align} 
Given $n$ samples $(s_t,a_{t,1},\ldots a_{t,k},y_t)_{t=1}^n$, we estimate $\theta^*$ using the SGD update of~\eqref{eq:estimate_SGD}, with the gradient 
\begin{align*}
        \hat{g}_t = \frac{\sum_{y=1}^K  \nabla_{\hat\theta_t} \log  p_{t,y}}{e^{\epsilon} +K- 1}- \nabla_{\hat\theta_t} \log  p_{t,\widetilde y_t} ~,
\end{align*}
where $p_{t,y}, y \in [K]$ is the probability of observing $y$ at round $t$, see \eqref{eq:PL}. 



Let $x_{i,j}=\phi(s,a_i)-\phi(s,a_j)$ be the feature difference between actions $a_i$ and $a_j$ at state
and $\Sigma_{i,j}= \mathbb{E}[x_{i,j}x_{i,j}^\top]$ be the corresponding population covariance matrix. Assume there exists a coverage parameter $\kappa >0$ such that $\Sigma_{ij} \ge \kappa$ for all pair of actions $(a_i, a_j)$. Then, similar to Theorem~\ref{thm:sgd_rr}, we can prove an error bound for this SGD-based estimator with K-RR, denoted by $\hat\theta_{\texttt{SGD-KRR}}$. Specifically, we have
\begin{align*}
        \norm{\hat\theta_{\texttt{SGD-KRR}} \!-\! \theta^*}_2 \!=\! \widetilde O\left( \!\cdot\! \frac{L}{\gamma \kappa} \!\cdot\!\frac{e^{\epsilon}\!+K\!-\!1}{e^{\epsilon}\!-\!1}\! \frac{1}{ \sqrt{n}}\right)
    \end{align*}
with high probability, where $\gamma=\frac{1}{e^{4LB}}$. See Appendix~\ref{proof:PL} for a precise statement and complete proof.


%

\section{Central Model: Estimation Error Bounds}
\label{sec:central}

In this section, we turn to study label DP in the central model where the learning agent has access to the clear-text dataset $\cD$ and it only needs to guarantee the estimator is ``insensitive'' with respect to any single change of the label. Under this weaker privacy model, we show that the estimation error can be greatly improved compared to those in the local model. In the main paper, we will mainly focus on $\ell_2$-norm bounds and leave semi-norm bounds to Appendix~\ref{sec:central-semi}.
\subsection{Lower Bound}
We first have the following lower bound on estimation error, the proof of which is given in Appendix~\ref{proof:central-lower}.
\begin{theorem}
\label{thm:central-lower}
   For a large enough $n$, any estimator $\hat{\theta}$ based on samples form the BTL model that satisfies $(\epsilon,\delta)$-label DP in the central model has the estimation error in $\ell_2$-norm lower bounded as
   \begin{align*}
   \ex{\norm{\hat{\theta} - \theta^*}^2} \ge \Omega\left(
    \frac{d^2}{nL^2} +   \frac{d}{n^2(\epsilon + \delta)^2}\right).
\end{align*}
\end{theorem}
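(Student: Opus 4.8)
The plan is to prove the two terms of the lower bound separately and then combine them, since $\max\{a,b\} \ge \tfrac12(a+b)$ up to constants. The first term $\Omega(d^2/(nL^2))$ is the non-private minimax lower bound for reward estimation under the BTL model; this follows directly from the $\ell_2$-norm lower bound machinery already invoked for Theorem~\ref{thm:l2-lower} (the private Assouad construction with $\epsilon = \infty$, equivalently the result of~\cite{shah2015estimation} combined with the feature-rescaling argument), so I would simply cite that and focus the work on the second, privacy-dependent term $\Omega\big(d/(n^2(\epsilon+\delta)^2)\big)$.

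For the privacy term I would again use the reduction from estimation to testing via a private version of Assouad's lemma, but now instantiated with the \emph{central} $(\epsilon,\delta)$-DP coupling/group-privacy argument rather than the strong data-processing inequality used in the local case. Concretely: (i) construct a hypercube of $2^{\Theta(d)}$ candidate parameters $\theta_v = \tau \sum_{j} v_j e_j'$ indexed by $v \in \{\pm1\}^{\Theta(d)}$, where $e_j'$ are orthonormal directions inside the hyperplane $\langle \mathbf 1,\theta\rangle = 0$ and the scale $\tau$ is a free parameter to be optimized, chosen small enough that all $\theta_v \in \Theta_B$ and small enough that the induced BTL label distributions are in the ``linear'' regime where $\sigma(x^\top\theta_v) \approx \tfrac12 + \tfrac14 x^\top\theta_v$; (ii) for each coordinate flip $v \leftrightarrow v^{\oplus j}$, bound the chi-square / KL divergence between the two data distributions $P_{\theta_v}^{\otimes n}$ and $P_{\theta_{v^{\oplus j}}}^{\otimes n}$ — this scales like $n\tau^2 L^2$ up to constants, using that features are bounded by $L$ and the sigmoid has bounded derivative; (iii) invoke the central-DP Assouad bound, which replaces the usual $\sqrt{\mathrm{KL}}$ testing-error control by a bound of the form: any $(\epsilon,\delta)$-DP test distinguishing two distributions whose samples differ in expectation by a coupling of size $m$ (here $m \approx n\tau L$, the number of "effective" samples that change when one flips one hypercube coordinate across all $n$ rounds — or more precisely via the standard conversion, $m$ samples of Hamming-type distance) errs with probability $\gtrsim 1 - O(m(\epsilon+\delta))$ as long as this is bounded away from $1$; (iv) optimize $\tau$: the per-coordinate $\ell_2$ separation between $\theta_v$ and $\theta_{v^{\oplus j}}$ is $\Theta(\tau)$, summed over $\Theta(d)$ coordinates gives a bound $\gtrsim d\tau^2 \cdot \Pr[\text{test error}]$, and balancing the non-private constraint ($n\tau^2 L^2 \lesssim 1$) against the privacy constraint ($n\tau L(\epsilon+\delta) \lesssim 1$, i.e. $\tau \lesssim 1/(nL(\epsilon+\delta))$) makes the privacy constraint bind in the high-privacy regime, yielding $d\tau^2 = \Omega\big(d/(n^2 L^2(\epsilon+\delta)^2)\big)$. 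I would then absorb $L$ into constants or carry it through as the statement allows (the statement has $L^2$ only on the first term, so I would double-check the feature-scaling normalization — likely the intended regime is $L = \Theta(1)$ for the second term, or the $L$ dependence in the privacy term is suppressed).

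The main obstacle I anticipate is getting the exact form of the central-DP version of Assouad's lemma right, in particular the correct "group size" $m$ appearing in the $m(\epsilon+\delta)$ factor: one must carefully relate a single coordinate flip $v \to v^{\oplus j}$ of the hypercube to a number of single-label changes in the dataset, then apply group privacy ($(\epsilon,\delta)$-DP under $m$ changes degrades to roughly $(m\epsilon, m e^{m\epsilon}\delta)$-DP) or, more cleanly, use the coupling-based argument of~\cite{acharya2021differentially}-style "DP Assouad/Fano" which directly gives the $\sqrt{n \cdot \mathrm{KL}}$-or-$n(\epsilon+\delta)\tau L$ trade-off. Since the labels are the private entity, a flip of hypercube coordinate $j$ changes the \emph{distribution} of every label $y_i$ (not the labels themselves deterministically), so I would route through a coupling: construct a maximal coupling of $P_{\theta_v}$ and $P_{\theta_{v^{\oplus j}}}$ under which $y_i$ differs with probability $\lesssim \tau L$, so that across $n$ samples the expected Hamming distance is $\lesssim n\tau L$; then the DP-Assouad bound gives test error $\gtrsim 1 - O(n\tau L(\epsilon+\delta)) - O(\sqrt{n\tau^2 L^2})$, and both terms are controlled by the two constraints above. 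Everything else — the hyperplane packing, the sigmoid linearization, summing over coordinates — is routine and parallels the proof of Theorem~\ref{thm:l2-lower}, so I would reuse that scaffolding and only write out the privacy-specific divergence bound and the optimization of $\tau$ in detail.
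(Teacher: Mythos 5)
Your proposal follows essentially the same route as the paper: split into a non-private term (cited from the existing $\ell_2$ lower-bound machinery) and a privacy term handled by a central-DP Assouad bound with a hypercube packing $\theta_e = \Delta e$, controlling the coupling's expected Hamming distance via maximal coupling and a TV/KL bound on the Bernoulli label distributions, and then optimizing $\Delta$ so that the privacy constraint binds. The paper resolves the $L$-scaling concern you raise by fixing the design points to satisfy $\|x\|_\infty \le 1$ in the privacy part, which is consistent with your guess that $L$ is effectively normalized away in the second term.
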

    Let us compare our lower bound with a similar one (although via a different approach) established in~\cite{cai2023score}, which enforces privacy protection for both label and features (i.e., standard DP notion rather than label DP). 
    If $L = O(\sqrt{d})$ (which holds for Gaussian design), then the first term is the same as in~\cite{cai2023score} and is equal to the standard non-private mean-square-error (MSE) lower bound \citep{hsu2023sample}. The main difference is the dependence on dimension in the second term, i.e., $d$ in our bound vs. $d^2$ in~\cite{cai2023score}. This improvement is due to the fact that our privacy protection is only for the scalar label, whereas \citet{cai2023score} also protects $d$-dimensional feature vectors (albeit in a different application than ours). 

\subsection{Algorithm and Upper Bound}
In this section, we present our algorithm and its privacy and estimation error guarantees.

Our algorithm builds upon the classic technique -- objective perturbation~\citep{kifer2012private}, i.e., it adds an additional noise term in the objective function. In particular, our estimator is given by 
\begin{align*}
   \hat{\theta}_{\texttt{obj}} = \argmin_{\theta \in \Theta_B}\,  {l}_{\cD}(\theta) + \frac{\beta}{2} \norm{\theta}_2^2 +  w^{\top} \theta~,
\end{align*}
where $l_{\cD}(\theta)$ is the negative log-likelihood defined in~\eqref{eq:mle}, $\beta >0$ is some regularizer and $w \sim \cN(0,\sigma^2 I)$ is an independent Gaussian noise.
We then have the following privacy guarantee. See Appendix~\ref{proof:central-priv} for the proof and  Algorithm~\ref{alg:objP} for pseudo-code. 
\begin{theorem}[Privacy]
\label{thm:central-priv}
     Let $\epsilon \!>\!0$, $\delta \!\in\! (0,1)$. Then, setting $\sigma = \frac{L \sqrt{8\log(2/\delta) + 4\epsilon}}{\epsilon}$ under Assumption~\ref{ass:bound},
Algorithm~\ref{alg:objP} satisfies $(\epsilon,\delta)$-label DP in the central model.
\end{theorem}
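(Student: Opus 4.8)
The plan is to show that, for label DP, the objective-perturbation estimator behaves exactly like the Gaussian mechanism, because changing a single label changes the penalized objective only by an \emph{affine} function of $\theta$. First I would note that $\hat{\theta}_{\texttt{obj}}$ is a well-defined deterministic function of the noise $w$: since $l_{\cD}$ is convex and smooth on $\Theta_B$ (its Hessian is $\sum_i \sigma'(x_i^{\top}\theta)\, x_i x_i^{\top} \succeq 0$), the penalized objective $F_{\cD,w}(\theta) := l_{\cD}(\theta) + \tfrac{\beta}{2}\norm{\theta}_2^2 + w^{\top}\theta$ is $\beta$-strongly convex, hence has a unique minimizer over the convex set $\Theta_B$, which I write as $A_{\cD}(w) := \hat{\theta}_{\texttt{obj}}$; this map is continuous, hence measurable, in $w$.

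Next comes the key computation. Let $\cD,\cD'$ differ only in the label of sample $j$; if $y_j=y_j'$ there is nothing to show, so assume $\{y_j,y_j'\}=\{0,1\}$, say $y_j=0$ and $y_j'=1$ (the other case is symmetric). Then
\begin{align*}
  l_{\cD}(\theta) - l_{\cD'}(\theta) \;=\; -\log p_{j,0} + \log p_{j,1} \;=\; \log\frac{\sigma(x_j^{\top}\theta)}{1-\sigma(x_j^{\top}\theta)} \;=\; x_j^{\top}\theta ,
\end{align*}
which is \emph{linear} in $\theta$ --- this is precisely where I use that the BTL log-odds are linear and that only the \emph{label}, not the features, is privatized. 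Setting $c := (y_j'-y_j)\,x_j$, so that $\norm{c}=\norm{x_j}\le 2L$ under Assumption~\ref{ass:bound}, it follows that $F_{\cD,w}(\cdot)$ and $F_{\cD',\,w+c}(\cdot)$ are the \emph{same} function on $\Real^d$, hence $A_{\cD}(w) = A_{\cD'}(w+c)$ for every $w$. I would also stress here the structural reason objective perturbation is essentially ``free'' in this setting: the Hessian $\sum_i \sigma'(x_i^{\top}\theta)\,x_i x_i^{\top}$ does not depend on the labels, so the Jacobian/determinant correction term that normally shows up in objective-perturbation privacy proofs (cf.~\citet{kifer2012private}) is absent, and the constraint set $\Theta_B$ plays no role beyond ensuring a well-defined minimizer.

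Finally I would convert this into the privacy bound. With $w\sim\cN(0,\sigma^2 I)$, the identity $A_{\cD}(w)=A_{\cD'}(w+c)$ says that the output law of $\hat{\theta}_{\texttt{obj}}$ on $\cD$ is the image of $\cN(c,\sigma^2 I)$ under $A_{\cD'}$, while on $\cD'$ it is the image of $\cN(0,\sigma^2 I)$ under the \emph{same} map. By the post-processing property of DP, $(\epsilon,\delta)$-label DP then reduces to the indistinguishability of $\cN(c,\sigma^2 I)$ and $\cN(0,\sigma^2 I)$ for all $\norm{c}\le 2L$, i.e.\ to the Gaussian mechanism with $\ell_2$-sensitivity $2L$. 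Writing the privacy-loss random variable as $Z\sim\cN\!\big(\tfrac{\norm{c}^2}{2\sigma^2},\tfrac{\norm{c}^2}{\sigma^2}\big)$ and requiring $\prob{Z>\epsilon}\le\delta$ pins down the noise level; the stated $\sigma = L\sqrt{8\log(2/\delta)+4\epsilon}/\epsilon$ is the resulting calibration, the $\sqrt{\log(1/\delta)}/\epsilon$ part being the familiar scaling and the extra $+4\epsilon$ inside the root making the guarantee valid for all $\epsilon>0$.

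The main obstacle is conceptual rather than computational: spotting that under label DP neighbouring penalized objectives coincide up to a deterministic translation of the Gaussian noise, so the whole analysis collapses onto the Gaussian mechanism. The only points needing care are the reduction to $\{y_j,y_j'\}=\{0,1\}$, the uniqueness of the constrained minimizer (which is why the regularizer $\tfrac{\beta}{2}\norm{\theta}_2^2$ with $\beta>0$ is kept in the objective even though it is otherwise inert in the privacy argument), and matching the precise Gaussian-mechanism constant in $\sigma$.
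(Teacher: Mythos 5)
Your proposal is correct, and it takes a genuinely different route from the paper's proof. The paper follows the Kifer--Smith--Thakurta objective-perturbation template: it first invokes Successive Approximation (twice) to replace the constrained minimization over $\Theta_B$ by a sequence of unconstrained smooth problems, then writes down the density of $\hat\theta_{\texttt{obj}}$ by a change-of-variables argument involving the Jacobian determinant of the stationarity map $\psi_{\cD}$, shows that the determinant factor cancels under a label change (since the Hessian is label-free), and finally controls the exponential factor by observing that $\psi_{\cD'}(t)-\psi_{\cD}(t)=x_j(y_j-y_j')$ is a constant independent of $w$ --- exactly the independence property needed to patch the gap in Lemma~16 of~\citep{kifer2012private}.

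Your argument reaches the same conclusion but at a structurally earlier point: $l_{\cD}(\theta)-l_{\cD'}(\theta)=(y_j-y_j')\,x_j^{\top}\theta$ is affine in $\theta$, so $F_{\cD,w}\equiv F_{\cD',w+c}$ as functions, and hence $A_{\cD}(w)=A_{\cD'}(w+c)$ as a \emph{deterministic identity} of constrained minimizers. This collapses the entire analysis onto the Gaussian mechanism via post-processing, with sensitivity $\norm{c}\le 2L$. Both the paper's KL-style density-ratio bound and your privacy-loss random variable $Z\sim\cN(\norm{c}^2/(2\sigma^2),\norm{c}^2/\sigma^2)$ lead to the same calibration $\sigma = L\sqrt{8\log(2/\delta)+4\epsilon}/\epsilon$; indeed with $u:=4\log(2/\delta)+\epsilon$ one checks $\frac{(\epsilon-\mu_Z)^2}{2\mathrm{Var}(Z)}=\frac{u^2}{4(u+\epsilon)}\ge\log(2/\delta)$, so $\prob{Z>\epsilon}\le\delta$.

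What your route buys: you never need Successive Approximation (since you work directly with the constrained argmin and never require a density or a bijection $w\mapsto\hat\theta$), you never compute a Jacobian, and you automatically sidestep the independence subtlety in~\citep{kifer2012private} that the paper spends several sentences flagging and repairing --- the whole randomness of the comparison has been pushed into a translate of the Gaussian noise before the minimization is even run. The one thing that is unique to your argument and worth stating explicitly is the measurability of $w\mapsto A_{\cD'}(w)$, which you handle by noting strong convexity gives a unique, continuous minimizer over the compact convex $\Theta_B$. The trade-off is that your argument relies crucially on the loss difference being \emph{exactly} affine; the paper's determinant-ratio machinery (and its extension in Appendix~\ref{app:stdDP}) generalizes more directly to standard DP, where a feature change makes the loss difference nonlinear and the Hessian label-dependent, and one then really does need the determinant bookkeeping.
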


The next theorem provides the estimation error of $\hat{\theta}_{\texttt{obj}}$ in $\ell_2$-norm. See Appendix~\ref{proof:central-upper} for the full proof. 
\begin{theorem}[Estimation error]
\label{thm:central-upper}
    Let $\alpha \in (0,1)$. Then, under Assumptions~\ref{ass:bound} and~\ref{ass:cov}, with probability at least $1-\alpha$, $\hat{\theta}_{\texttt{obj}}$ satisfies
     \begin{align*}
\norm{\hat{\theta}_{\texttt{obj}} \!-\! \theta^*}_2 \!\le \!O\!\left(\!\frac{L}{\kappa \gamma}\sqrt{\frac{\log(1/\alpha)}{n}} \!+\! \frac{\sigma \!\big(\!\sqrt{d} \!+\! \sqrt{\log(1/\alpha)}\!\big)}{n \kappa \gamma}\!\right)
     \end{align*}
     where $\gamma:= \frac{1}{2 + \exp(-2LB) + \exp(2LB)}$ and $\kappa$ is the coverage coefficient in Assumption~\ref{ass:cov}.
\end{theorem}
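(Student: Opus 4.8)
The plan is to leverage strong convexity of the perturbed objective. Write $\widetilde{l}(\theta) := l_{\cD}(\theta) + \tfrac{\beta}{2}\norm{\theta}_2^2 + w^\top\theta$, so $\hat\theta_{\texttt{obj}} = \argmin_{\theta\in\Theta_B}\widetilde{l}(\theta)$. Since $l_{\cD}$ is convex (the BTL negative log-likelihood), $\widetilde{l}$ is convex, and I will argue it is in fact $m$-strongly convex on $\Theta_B$ with $m = \Omega(n\gamma\kappa)$. A single first-order optimality inequality then converts a high-probability bound on $\norm{\nabla\widetilde{l}(\theta^*)}_2$ into the claimed bound on $\norm{\hat\theta_{\texttt{obj}}-\theta^*}_2$, and the remainder of the argument is just controlling the three summands of $\nabla\widetilde{l}(\theta^*) = \nabla l_{\cD}(\theta^*) + \beta\theta^* + w$.

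\textbf{Strong convexity and the basic inequality.} Because $\nabla^2 l_{\cD}(\theta) = \sum_{i=1}^n \sigma'(x_i^\top\theta)\, x_i x_i^\top$ and $\sigma'(x_i^\top\theta)\ge\gamma$ for all $\theta\in\Theta_B$ by Assumption~\ref{ass:bound}, we get $\nabla^2\widetilde{l}(\theta)\succeq n\gamma\,\Sigma_{\cD} + \beta I$ on $\Theta_B$. Under Assumption~\ref{ass:cov} we have $\mathbb{E}[\Sigma_{\cD}] = \Sigma \succeq \kappa I$, and a matrix Chernoff/Bernstein bound (using $\norm{x_i x_i^\top}\le L^2$) shows that, for $n$ large enough, $\lambda_{\min}(\Sigma_{\cD})\ge\kappa/2$ with probability at least $1-\alpha/3$; on this event $\widetilde{l}$ is $m$-strongly convex on $\Theta_B$ with $m\ge n\gamma\kappa/2$. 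Since $\hat\theta_{\texttt{obj}}$ minimizes the convex $\widetilde{l}$ over the convex set $\Theta_B\ni\theta^*$, first-order optimality gives $\langle\nabla\widetilde{l}(\hat\theta_{\texttt{obj}}),\,\theta^*-\hat\theta_{\texttt{obj}}\rangle\ge 0$, and combining with $m$-strong convexity, $\tfrac{m}{2}\norm{\hat\theta_{\texttt{obj}}-\theta^*}_2^2 \le \widetilde{l}(\theta^*)-\widetilde{l}(\hat\theta_{\texttt{obj}}) \le \langle\nabla\widetilde{l}(\theta^*),\,\theta^*-\hat\theta_{\texttt{obj}}\rangle \le \norm{\nabla\widetilde{l}(\theta^*)}_2\,\norm{\hat\theta_{\texttt{obj}}-\theta^*}_2$, hence $\norm{\hat\theta_{\texttt{obj}}-\theta^*}_2 \le \tfrac{2}{m}\norm{\nabla\widetilde{l}(\theta^*)}_2$.

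\textbf{Bounding the gradient at $\theta^*$.} By the triangle inequality $\norm{\nabla\widetilde{l}(\theta^*)}_2 \le \norm{\nabla l_{\cD}(\theta^*)}_2 + \beta\norm{\theta^*}_2 + \norm{w}_2$. For the statistical term, $\nabla l_{\cD}(\theta^*) = \sum_{i=1}^n\big(\sigma(x_i^\top\theta^*)-y_i\big)x_i$ is a sum of independent vectors that are mean-zero given the features (as $\mathbb{E}[y_i\mid x_i]=\sigma(x_i^\top\theta^*)$) and of norm $O(L)$; since $\mathbb{E}\norm{\nabla l_{\cD}(\theta^*)}_2^2 = \sum_i\mathrm{Var}(y_i)\norm{x_i}_2^2 = O(nL^2)$, a bounded-differences concentration over the labels (flipping one $y_i$ changes the norm by $O(L)$) gives $\norm{\nabla l_{\cD}(\theta^*)}_2 = O\!\big(L\sqrt{n\,\log(1/\alpha)}\big)$ with probability at least $1-\alpha/3$ — the ambient dimension enters this term only through $L$. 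For the privacy noise, $w\sim\cN(0,\sigma^2 I_d)$ gives $\norm{w}_2 \le \sigma\big(\sqrt{d}+\sqrt{2\log(3/\alpha)}\big)$ with probability at least $1-\alpha/3$ by Gaussian norm concentration. The regularizer contributes $\beta\norm{\theta^*}_2 \le \beta B$, which is of lower order for the value of $\beta$ fixed in Algorithm~\ref{alg:objP} (equivalently, one keeps $\tfrac{\beta}{2}\norm{\theta}_2^2$ inside the strong-convexity constant and treats $\beta\theta^*$ as a negligible bias). A union bound over these events and division by $m\ge n\gamma\kappa/2$ yields $\norm{\hat\theta_{\texttt{obj}}-\theta^*}_2 = O\!\big(\tfrac{L}{\kappa\gamma}\sqrt{\log(1/\alpha)/n} + \tfrac{\sigma(\sqrt{d}+\sqrt{\log(1/\alpha)})}{n\kappa\gamma}\big)$, which is the claim.

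\textbf{Expected main obstacle.} The delicate point is the statistical-term bound: one must show $\norm{\nabla l_{\cD}(\theta^*)}_2$ concentrates at scale $L\sqrt{n}$ rather than $L\sqrt{nd}$, so that all dimension dependence is carried by $L$ (and, via the $1/\kappa$ factor, by $\kappa$) — this is exactly what makes the first error term collapse to the non-private MSE rate when $L=O(\sqrt{d})$. Doing this cleanly requires conditioning on the features, using the variance identity $\sum_i\mathrm{Var}(y_i)\norm{x_i}_2^2 = O(nL^2)$, and a McDiarmid (or vector-Bernstein) argument over only the label randomness, while the sample-covariance concentration of the first step must be handled separately over the randomness of the $x_i$'s — and it is this latter step that implicitly needs $n$ to be large enough, mirroring the hypothesis in the matching lower bound (Theorem~\ref{thm:central-lower}).
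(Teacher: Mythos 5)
Your proposal is correct and follows essentially the same skeleton as the paper's proof: a matrix Chernoff bound to obtain $\lambda_{\min}(\Sigma_{\cD})\ge\kappa/2$ and hence $m$-strong convexity with $m=\Omega(n\gamma\kappa)$; the first-order optimality inequality $\norm{\hat\theta_{\texttt{obj}}-\theta^*}_2\le\tfrac{2}{m}\norm{\nabla\widetilde{l}(\theta^*)}_2$; the three-term decomposition $\nabla\widetilde{l}(\theta^*)=\nabla l_{\cD}(\theta^*)+\beta\theta^*+w$; and a Gaussian-norm bound for $w$. The one genuine difference is in how you control the statistical term $\norm{\nabla l_{\cD}(\theta^*)}_2$. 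You condition on the features, compute $\ex{\norm{\nabla l_{\cD}(\theta^*)}_2^2}=\sum_i\mathrm{Var}(y_i)\norm{x_i}_2^2=O(nL^2)$, and apply McDiarmid over the label randomness (each flip moves the norm by $\le 2L$), giving a dimension-free $O(L\sqrt{n\log(1/\alpha)})$ bound. The paper instead writes $\norm{\nabla l_{\cD}(\theta^*)}_2^2=V^\top MV$ with $M=\tfrac{1}{n^2}XX^\top$ and invokes Hanson--Wright (Theorem 2.1 of Hsu--Kakade--Zhang), exploiting $\tr(M)=O(L^2/n)$ (rather than $O(L^2 d/n)$) to obtain the same order. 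Both routes encode the same key observation you flag as the "main obstacle" — dimension dependence enters only through $L$ and $\kappa$ — and both give $O(L\sqrt{(1+\log(1/\alpha))/n})$ after normalization; your McDiarmid argument is arguably a bit more elementary, while the paper's Hanson--Wright route is what it reuses elsewhere (Theorems~\ref{thm:mle_rr} and~\ref{thm:robust_rr}) and gives the clean split $O(L/\sqrt{n}+L\sqrt{\log(1/\alpha)/n})$ in one step. One small gap: you wave at "the value of $\beta$ fixed in Algorithm~\ref{alg:objP}", but the algorithm does not fix $\beta$; the paper explicitly sets $\beta=\sqrt{n}/B$ inside the proof, which makes the $\beta\norm{\theta^*}_2/m$ contribution $O\big(\tfrac{1}{\kappa\gamma\sqrt{n}}\big)$ and hence absorbed into the first term — you should state a concrete choice to close that loop.
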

\begin{corollary}\label{cor:central-upper}
For $\epsilon \in (0,1]$ and $\delta \in (0,1)$, let $\sigma = \frac{L \sqrt{8\log(2/\delta) + 4\epsilon}}{\epsilon}$ as in Theorem~\ref{thm:central-priv}, then the estimation error is of the order 
\begin{align*}
\norm{\hat{\theta}_{\texttt{obj}} - \theta^*}_2 \le \widetilde{O}\left( \frac{L}{\kappa \gamma \sqrt{n}} + \frac{L\sqrt{d\log(1/\delta)}}{n \epsilon \kappa \gamma}\right).
\end{align*}
\end{corollary}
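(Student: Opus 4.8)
The plan is to obtain Corollary~\ref{cor:central-upper} as a direct specialization of Theorems~\ref{thm:central-priv} and~\ref{thm:central-upper}, with no new technical content. First, I would invoke Theorem~\ref{thm:central-priv}: with the prescribed noise scale $\sigma = \frac{L\sqrt{8\log(2/\delta) + 4\epsilon}}{\epsilon}$, Algorithm~\ref{alg:objP} is $(\epsilon,\delta)$-label DP, so the estimator in question is precisely $\hat\theta_{\texttt{obj}}$ with that value of $\sigma$, and the high-probability error bound of Theorem~\ref{thm:central-upper} applies verbatim to it.

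Next, I would simplify the noise scale using the restriction $\epsilon \in (0,1]$. Since $\delta \in (0,1)$ gives $\log(2/\delta) > \log 2 > 0$, the additive term $4\epsilon \le 4$ is dominated, so $8\log(2/\delta) + 4\epsilon = O(\log(1/\delta))$ and hence $\sigma = O\!\big(\tfrac{L\sqrt{\log(1/\delta)}}{\epsilon}\big)$. Substituting this into the second term of the bound in Theorem~\ref{thm:central-upper} yields $\frac{\sigma(\sqrt{d} + \sqrt{\log(1/\alpha)})}{n\kappa\gamma} = O\!\big(\frac{L\sqrt{\log(1/\delta)}\,(\sqrt{d} + \sqrt{\log(1/\alpha)})}{n\epsilon\kappa\gamma}\big)$, while the first term is unchanged at $\frac{L}{\kappa\gamma}\sqrt{\tfrac{\log(1/\alpha)}{n}}$.

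Finally, I would collect the estimates into the $\widetilde{O}(\cdot)$ notation that suppresses polylogarithmic factors: the $\sqrt{\log(1/\alpha)}$ factors from the concentration bound (and the additive $\sqrt{\log(1/\alpha)}$, which is $\widetilde{O}(\sqrt{d})$) are absorbed, whereas the $\log(1/\delta)$ arising from the privacy calibration is retained explicitly. This gives $\norm{\hat\theta_{\texttt{obj}} - \theta^*}_2 \le \widetilde{O}\!\big(\frac{L}{\kappa\gamma\sqrt{n}} + \frac{L\sqrt{d\log(1/\delta)}}{n\epsilon\kappa\gamma}\big)$, as claimed. There is no genuine obstacle here; the only care required is bookkeeping of which logarithmic factors are kept versus hidden, and observing that the $4\epsilon$ contribution inside $\sigma$ is negligible exactly because $\epsilon \le 1$ (for general $\epsilon$ one would instead carry an extra $1/\sqrt{\epsilon}$ factor).
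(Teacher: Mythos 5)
Your proposal is correct and takes essentially the same route as the paper: the paper already performs the substitution of $\sigma$ into the bound at the end of the proof of Theorem~\ref{thm:central-upper}, and the corollary is then obtained exactly as you describe, by using $\epsilon \le 1$ to absorb the $4\epsilon$ term into $8\log(2/\delta)$ and folding the $\log(1/\alpha)$ factors into the $\widetilde{O}(\cdot)$ notation. No gap.
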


\begin{remark}[Central vs. Local Models]
\label{rem:cvsl}
A key observation here is that the cost to ensure label DP is an additive lower-order term of the order $\frac{1}{\epsilon n}$ under the central model. In contrast, under the local model, the privacy cost is of the order $\frac{1}{(e^\epsilon-1)\sqrt{n}}$ (cf. Theorem~\ref{thm:sgd_rr}), which is approximately $\frac{1}{\epsilon\sqrt{n}}$ for high privacy regime (i.e., $\epsilon < 1$). This sharp decrease in privacy cost under the central model is due to the fact that it is a weaker privacy model; hence, instead of randomizing each label, one needs to add noise only once in the loss function.

\textbf{Comparison with Lower Bound.} We now compare the upper bound in Corollary~\ref{cor:central-upper} with lower bound in Theorem~\ref{thm:central-lower}.  Under Assumption~\ref{ass:bound} and in the best case when $\kappa = \Theta(L^2/d)$, we observe that the upper bound matches the lower bound up to a factor of $d e^{LB}$. If $L = B = \Theta(1)$, the gap between the bounds is on the order of $d$. Closing this gap and obtaining optimal error bounds is an open question.

\textbf{Extension to approximate minimizer.} Currently, the privacy guarantee in Theorem~\ref{thm:central-priv} only holds for the exact minimizer $\hat{\theta}_{\texttt{obj}}$. One can also add an additional output perturbation as in~\cite{bassily2019private,iyengar2019towards} to guarantee that an approximate minimizer (e.g., obtained by SGD) is private with the same order of estimation error. 

\end{remark}


\section{Simulations}

   \begin{figure*}[!t]
\begin{subfigure}[t]{.33\linewidth}
            \centering
			\includegraphics[width = 2in]{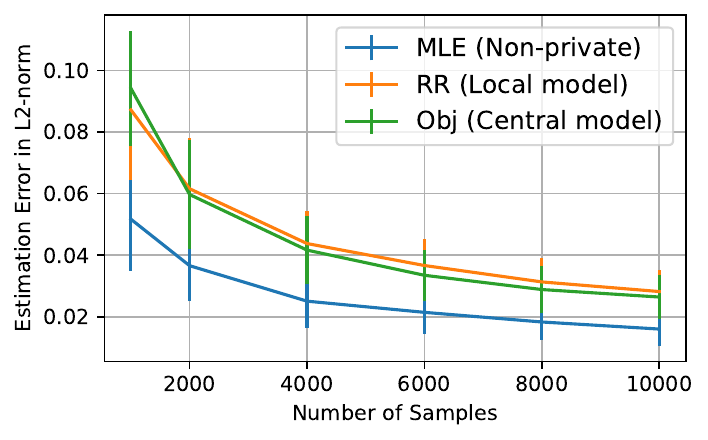}
   \vskip -1.5mm
			\caption{ $\epsilon=0.1$}
		\end{subfigure}\ \
  \begin{subfigure}[t]{.33\linewidth}
        \centering
		\includegraphics[width = 2in]{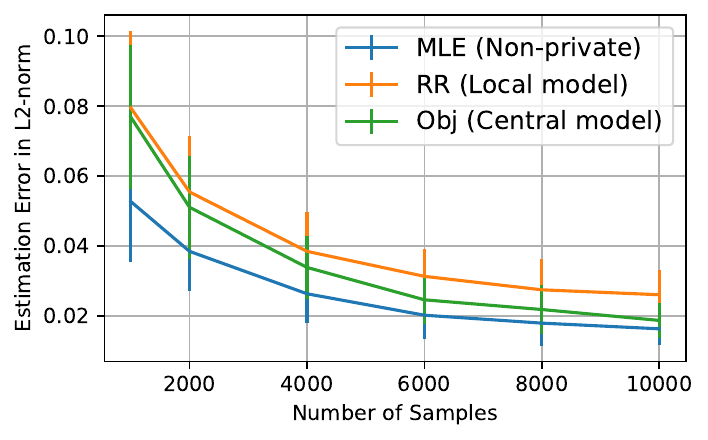}
  \vskip -1.5mm
			\caption{$\epsilon=0.5$ }
		\end{subfigure}\ \
  \begin{subfigure}[t]{.32\linewidth}
            \centering
			\includegraphics[width = 2in]{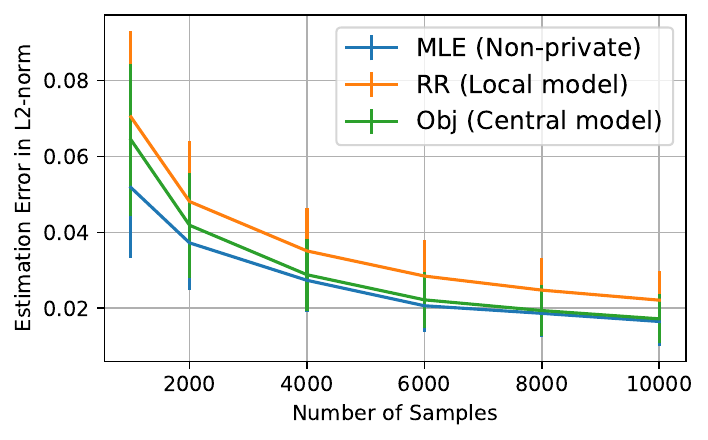}
   \vskip -1.5mm
			\caption{$\epsilon=1$}
		\end{subfigure}
		 \vspace{-2mm}
\caption{\footnotesize{Comparison of estimation error in $\ell_2$-norm between non-private MLE-based estimator, RR based estimator in local model and objective perturbation based estimator in central model for different privacy levels $\epsilon$.}  }\label{fig:L2_error}
		\vspace{-3mm}
\end{figure*}

We numerically evaluate the errors of our estimators under local and central models of label DP; and present comparisons with that of the non-private estimator of \cite{zhu2023principled}. Our simulations are proof-of-concept only; we do not tune any hyper-parameters.

We consider the BTL preference model of pairwise comparisons, with the number of samples $1000 \leq n \leq 10000$. Each sample size is repeated 100 times. We randomly generate $\theta^*$ from $d$-dimensional standard Gaussian, where we vary $d \in \lbrace 3,5,10\rbrace$. The state-action features $\phi$ are sampled iid, also from a $d$-dimensional standard Gaussian. The results for $d=5$ is shown in Figure~\ref{fig:L2_error}. 

For the local model, we use our SGD-based estimator $\hat \theta_{\texttt{SGD-RR}}$. For the central model, we implement our objective perturbation based estimator $\hat \theta_{\texttt{obj}}$ using SGD updates. To ensure consistency, we also implement the non-private estimator $\theta_{\texttt{MLE}}$ of \citet{zhu2023principled} using SGD updates. We use learning rate $\eta=0.1$ for all three estimators. We compare estimation error in $\ell_2$-norm for all the estimators for varying privacy levels $\epsilon \in \lbrace 0.1,0.5,1 \rbrace$. We fix $\delta=0.001$ for $\hat\theta_{\texttt{obj}}$.

We observe that estimation error decreases for all the estimators as the number of samples grows larger. 
Moreover, we observe that the non-private MLE-based estimator has the smallest estimation error, while the error in RR based estimator under local model is higher than the objective perturbation based estimator under the central model.
This is consistent with our theoretical results in
Sections \ref{sec:local} and \ref{sec:central}.

\section{Concluding Remarks}
We provided a systematic study of reward estimation via human feedback under the label DP. We also discuss the generalization to standard DP in Appendix~\ref{app:stdDP}.
For future directions, it is instructive to establish upper bounds with a dependency milder than $e^{LB}$. Along the lines of \citet{bach2010self-new}, where risk bounds are considered, it might be possible to leverage self-concordance properties of log-loss to obtain tighter estimation error bounds. Another important direction is to empirically evaluate the performance of a downstream policy trained using the estimated reward model along the lines of \citet{zhu2023principled} under different privacy notions.

\section{Acknowledgments}
XZ is supported in part by NSF CNS-2153220. XZ would like to thank Adam Smith, Daniel Kifer, and Abhradeep Guha Thakurta for discussions on the objective perturbation paper~\citep{kifer2012private}. XZ would also like to thank Abhradeep Guha Thakurta, Satyen Kale, and Karan Singh for discussions on~\cite{agarwal2023differentially}, which highlights some subtlety in proving privacy guarantee in objective perturbation. XZ would also like to thank Yichen Wang for the discussions on~\cite{cai2023score}.

\printbibliography

\appendix
\newpage 
\section{Additional Related Work}
\label{app:addRelated}

\paragraph{Preference-based Learning.}

\citet{shah2015estimation} study the problem of reward estimation under the pairwise (BTL and Thurstone) model and $K$-wise (PL) comparisons model. They work in the tabular setting and provide minimax error bounds for estimated rewards under both semi-norm and $\ell_2$-norm.~\citet{zhu2023principled} 
consider linearly parameterized rewards under the BTL and PL comparisons model, and prove the error bound of the maximum likelihood estimator. They further use this estimator to learn a pessimistic policy for an offline contextual bandit task and bound its sub-optimality gap. Multi-armed bandits (both tabular and parametric) under dueling/preference-based feedback are considered in a range of work, and different notions of regret guarantees are established; see~\citet{bengs2021preference} for a comprehensive survey.
\citet{pacchiano2021dueling} consider the episodic online RL problem under the BTL comparison model with tabular latent rewards and prove sublinear \emph{regret} guarantees in the number of episodes. \citet{chen2022human} generalize this to latent rewards with function approximation and establish sublinear regret bounds. \citet{zhan2023provable} consider the offline RL problem in the function approximation framework and prove corresponding performance guarantees. In contrast to all these prior work, we consider learning under the privacy of human labelers, where the pairwise comparisons provided by them is considered to be sensitive information. It is worth noting that~\citet{cai2023score} also consider privacy protection under the BTL model. Some key differences are as follows: (i) they study the tabular case rather than our linear reward model; (ii) their privacy notion is also different from ours in that it protects all the outcomes of a single item, rather than our label DP notion.

\paragraph{Label Differential Privacy.}

Privacy of pairwise comparisons can be protected using the notion of Label Differential Privacy. \citet{chaudhuri2011differentially} introduce this notion for the first time in learning theory and design private PAC learners under label DP. Since then, several follow-up work consider this notion of DP (see, e.g. \citet{beimel2013private,ghazi2021deep,esfandiari2022label}) where standard DP seems to be an overkill. The most related work to ours is \citet{ghazi2021deep}, which considers training deepNNs under label DP. Our work differs from theirs as well as from the literature on private stochastic optimization \cite{chaudhuri2008privacy,bassily2014private,kifer2012private,bassily2019private,song2021evading} in the utility guarantees (i.e. performance metrics); we consider bounding the error of parameter estimates under some metric, whereas these papers bound generalization errors or population risks.


\section{Additional Details on Sections~\ref{sec:lower}}
\label{app:lower}
We first introduce some necessary backgrounds and notations for our proofs on lower bounds, i.e., the lower model in this section and the central model in Appendix~\ref{app:central}.

Let $\cP$ be a family of distributions over $\cX^n$, where $\cX$ is the data universe and $n$ is the sample size. Let $\theta: \cP \to \Theta$ be the parameter of the distribution that we aim to estimate and let $\rho: \Theta \times \Theta \to \mathbb{R}_+$ be a pseudo-metric that is the loss function for estimating $\theta$. The minimax risk of estimation under loss $\rho$ for the class $\cP$ is 
\begin{align}
\label{eq:mini-np}
    R(\cP,\rho) := \min_{\hat{\theta}}\max_{P \in \cP} \mathbb{E}_{X \sim P}\left[\rho(\hat{\theta}(X), \theta(P))\right].
\end{align}
For $(\epsilon,\delta)$-label DP in the central model, the minimax risk is  
\begin{align*}
    R_c(\cP,\rho,\epsilon,\delta) := \min_{\hat{\theta} \text{ is} (\epsilon,\delta) \text{-label DP}  }\max_{P \in \cP} \mathbb{E}_{X \sim P}\left[\rho(\hat{\theta}(X), \theta(P))\right].
\end{align*}
For $(\epsilon,\delta)$-label DP in the local model, the minimax risk is  
\begin{align*}
    R_l(\cP,\rho,\epsilon,\delta) := \min_{Q \text{ is } (\epsilon,\delta) \text{-label DP mechanism} } \min_{\hat{\theta}}\max_{P \in \cP} \mathbb{E}_{X \sim P,Q}\left[\rho(\hat{\theta}(X), \theta(P))\right].
\end{align*}
For pure-DP where $\delta = 0$, we simply write $R_c(\cP,\rho,\epsilon)$ and $R_l(\cP,\rho,\epsilon)$. 

In this work, under BTL model, our goal is essentially to estimate the unknown parameter $\theta$ in the logistic model/distribution. In particular, we consider a fixed design (i.e., $x_i \in \Real^d$ is known) and the goal is to infer unknown $\theta$ after observing (private) sequence $y_i$, where the non-private $y_i$ is drawn from
\begin{align*}
\prob{y_i=1|x_i}= \sigma(\theta^\top x_i)=\frac{1}{1+\exp(-\theta^\top x_i)}~,\quad  \prob{y_i=0|x_i}= 1-\sigma(\theta^\top x_i).
\end{align*}
We denote this family of distribution by $\cP_{\log}$. More specifically, for the local model, the learner/agent aims to estimate $\theta$ from a sequence of private $\widetilde{y}_i$ generated by an LDP mechanism $Q$, while under the central model, the goal is to output an estimate $\hat{\theta}$ that is close to $\theta$ while guaranteeing label DP. Here, $\rho$ will be either squared $\ell_2$ norm or squared semi-norm. 

The following result will be useful in our proofs.
\begin{claim}
    \label{clm:kl}
       Let $p_a: = 1/(1+e^a)$ and $p_b: =  1/(1+e^b)$, we have $$\mathrm{kl}(p_a \| p_b) +  \mathrm{kl}(p_b \| p_a) \le (a-b)^2,$$ 
       where $\mathrm{kl}(p \| q): = \kl{\mathrm{Bernoulli}(p)}{\mathrm{Bernoulli}(q)}$ denotes KL-divergence between Bernoulli distributions with parameters $p$ and $q$.
\end{claim}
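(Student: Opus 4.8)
The plan is to establish Claim~\ref{clm:kl} by a direct calculation of the symmetrized KL divergence between the two Bernoulli distributions, followed by a convexity/mean-value bound. First I would write out $\mathrm{kl}(p_a \| p_b) + \mathrm{kl}(p_b \| p_a)$ explicitly. Recall that for Bernoulli$(p)$ and Bernoulli$(q)$, $\mathrm{kl}(p\|q) = p\log\frac{p}{q} + (1-p)\log\frac{1-p}{1-q}$, so the symmetrized sum telescopes nicely:
\begin{align*}
\mathrm{kl}(p_a\|p_b) + \mathrm{kl}(p_b\|p_a) = (p_a - p_b)\log\frac{p_a}{p_b} + (p_a - p_b)\log\frac{1-p_b}{1-p_a} = (p_a - p_b)\left(\log\frac{p_a}{1-p_a} - \log\frac{p_b}{1-p_b}\right).
\end{align*}
Now the key observation is that with $p_a = 1/(1+e^a)$, we have $\frac{p_a}{1-p_a} = e^{-a}$, so $\log\frac{p_a}{1-p_a} = -a$ and similarly $\log\frac{p_b}{1-p_b} = -b$. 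Hence the symmetrized KL equals $(p_a - p_b)(b - a) = (p_b - p_a)(a-b)$.

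The second step is to bound $(p_b - p_a)(a - b)$ by $(a-b)^2$. Since $p(t) := 1/(1+e^t)$ is differentiable with $p'(t) = -e^t/(1+e^t)^2$, the mean value theorem gives $p_b - p_a = p'(\xi)(b - a)$ for some $\xi$ between $a$ and $b$, so $(p_b - p_a)(a-b) = -p'(\xi)(a-b)^2 = |p'(\xi)|(a-b)^2$. Finally $|p'(\xi)| = e^\xi/(1+e^\xi)^2 = \sigma(\xi)(1-\sigma(\xi)) \le 1/4 \le 1$, which yields the claimed inequality (in fact with the sharper constant $1/4$, but $1$ suffices for the paper's purposes). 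Alternatively, one can avoid the mean value theorem entirely and note that $p(\cdot)$ is $\tfrac14$-Lipschitz, so $|p_b - p_a| \le \tfrac14|a-b|$ directly.

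I don't anticipate a genuine obstacle here — this is a short computation. The only mild subtlety is getting the bookkeeping of signs right in the telescoping step and making sure the $\frac{p_a}{1-p_a} = e^{-a}$ identity is stated correctly (one should double-check whether the convention is $p_a = 1/(1+e^a)$ or $1/(1+e^{-a})$, which flips the sign of the log-odds but does not affect the final symmetric bound). I would present the telescoping identity, the log-odds simplification, and the Lipschitz bound as three clean lines, concluding $\mathrm{kl}(p_a\|p_b) + \mathrm{kl}(p_b\|p_a) = |p_b - p_a|\cdot|a-b| \le \tfrac14(a-b)^2 \le (a-b)^2$.
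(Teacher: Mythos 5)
Your proof is correct and follows the same overall structure as the paper's: both telescope the symmetrized KL into $(p_a-p_b)\bigl(\log\frac{p_a}{1-p_a}-\log\frac{p_b}{1-p_b}\bigr)$ and then use $\log\frac{p_a}{1-p_a}=-a$ to reduce the problem to bounding $(p_a-p_b)(b-a)$. The only place you diverge is the final estimate on $|p_a - p_b|$: the paper assumes WLOG $b\ge a$ and chains $\frac{1}{1+e^a}-\frac{1}{1+e^b} \le \frac{e^b-e^a}{e^b} = 1-e^{a-b} \le b-a$ (using $e^x\ge 1+x$), giving the constant $1$, whereas you invoke the mean-value theorem (equivalently, the $\tfrac14$-Lipschitz property of $t\mapsto 1/(1+e^t)$) to get the sharper constant $\tfrac14$. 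Your variant is cleaner and strictly tighter, though the improved constant is not needed anywhere in the paper; the paper's version has the minor virtue of requiring only the one elementary inequality $e^x\ge 1+x$ rather than differentiability of the sigmoid.
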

\begin{proof}
    By a direct calculation, we have 
    \begin{align*}
        \mathrm{kl}(p_a \| p_b) +  \mathrm{kl}(p_b \| p_a) = (p_a - p_b)\log\left(\frac{p_a}{1-p_a} \frac{1-p_b}{p_b}\right).
    \end{align*}
    Further, by the definition of $p_a$, $p_b$, we have 
    \begin{align*}
       (p_a - p_b)\log\left(\frac{p_a}{1-p_a} \frac{1-p_b}{p_b}\right) =\left( \frac{1}{1+e^a} - \frac{1}{1+e^b} \right) \cdot (b-a).
    \end{align*}
     Without loss of generality, we assume $b \ge a$, then 
     \begin{align*}
         \left( \frac{1}{1+e^a} - \frac{1}{1+e^b} \right) \le \frac{e^b - e^a}{e^b} = 1- e^{a-b} \le 1- (1+a-b) = b-a.
     \end{align*}
     Combining the above, yields the result.
\end{proof}

\subsection{Proof of Theorem~\ref{thm:semi-lower}}
\label{proof:semi-lower}
Before we state the main proof, let us first present some useful lemmas. In particular, as mentioned in the main paper, for the semi-norm part, we will rely on Fano's lemma to derive the minimax lower bound. The key idea is to construct a proper packing rather than restricting to hypercubes as in Assouad's lemma. Let us first recall the non-private Fano's lemma~\citep{yu1997assouad} as follows.
\begin{lemma}[Fano's Lemma]
    Let $\cV = \{P_1, P_2,\ldots, P_M\} \subseteq \cP$ such that for all $i \neq j$,  
    \begin{align*}
    \kl{P_i}{P_j} \le \beta~,\quad
        \rho'(\theta(P_i), \theta(P_j)) \ge \tau
    \end{align*}
    for a semi-metric $\rho'$ and some $\tau,\beta > 0$.
    Then, we have 
    \begin{align*}
        R(\cP, (\rho')^2) \ge \frac{\tau^2}{4} \left(1 - \frac{\beta + \log 2}{\log M}\right).
    \end{align*}
\end{lemma}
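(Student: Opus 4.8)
The plan is the classical reduction from estimation to multiple-hypothesis testing, sealed off by Fano's inequality. First I would introduce the auxiliary testing problem: draw an index $V$ uniformly from $\{1,\ldots,M\}$ and, conditioned on $V$, let the data be $X\sim P_V$. Given an arbitrary estimator $\hat\theta$, define the induced test $\hat V \in \argmin_{j\in\{1,\ldots,M\}} \rho'(\hat\theta(X),\theta(P_j))$, with ties broken arbitrarily. Since $\rho'$ is a semi-metric it obeys symmetry and the triangle inequality, so if $\rho'(\hat\theta(X),\theta(P_V)) < \tau/2$ then for every $j\neq V$ we get $\rho'(\hat\theta(X),\theta(P_j)) \ge \rho'(\theta(P_V),\theta(P_j)) - \rho'(\hat\theta(X),\theta(P_V)) > \tau/2$, forcing $\hat V = V$. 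Contrapositively, $\rho'(\hat\theta(X),\theta(P_V))^2 \ge \tfrac{\tau^2}{4}\,\mathds{1}\{\hat V\neq V\}$ pointwise. Taking expectations, and using that the worst case over $\cP \supseteq \{P_1,\ldots,P_M\}$ dominates the average over $V$,
\[
  \max_{P\in\cP}\mathbb{E}_{X\sim P}\bigl[\rho'(\hat\theta(X),\theta(P))^2\bigr]
  \;\ge\; \frac{1}{M}\sum_{v=1}^{M}\mathbb{E}_{X\sim P_v}\bigl[\rho'(\hat\theta(X),\theta(P_v))^2\bigr]
  \;\ge\; \frac{\tau^2}{4}\,\prob{\hat V\neq V}.
\]

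Next I would lower bound the testing error. Because $V\to X\to\hat V$ forms a Markov chain, Fano's inequality gives $\prob{\hat V\neq V}\ge 1-\tfrac{I(V;X)+\log 2}{\log M}$, so it remains to upper bound the mutual information. Writing $\bar P:=\tfrac1M\sum_{i=1}^{M}P_i$ for the mixture, the identity $I(V;X)=\tfrac1M\sum_{j=1}^{M}\kl{P_j}{\bar P}$ together with convexity of $q\mapsto\kl{p}{q}$ yields $\kl{P_j}{\bar P}\le\tfrac1M\sum_{i=1}^{M}\kl{P_j}{P_i}\le\beta$, where we used the pairwise bound $\kl{P_j}{P_i}\le\beta$ for $i\neq j$ and $\kl{P_j}{P_j}=0$. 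Hence $I(V;X)\le\beta$, so $\prob{\hat V\neq V}\ge 1-\tfrac{\beta+\log 2}{\log M}$. Combining this with the display above and taking the infimum over all estimators $\hat\theta$ gives $R(\cP,(\rho')^2)\ge\tfrac{\tau^2}{4}\bigl(1-\tfrac{\beta+\log 2}{\log M}\bigr)$, as claimed.

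I do not expect a real obstacle, since this is a textbook argument; two points merely need care. The reduction step uses only that $\rho'$ satisfies the triangle inequality and symmetry, not positive-definiteness — this is exactly why the hypotheses are phrased as a $\tau$-separation in the semi-metric $\rho'$, which is what makes the lemma applicable to semi-norm estimation; there the genuine difficulty (constructing an $M$-point configuration that is simultaneously $\tau$-separated in $\norm{\cdot}_{\Sigma_\cD}$ and has pairwise KL at most $\beta$, via Varshamov--Gilbert and vector rotations) lives downstream rather than in this lemma. The second point is to bound $I(V;X)$ through the centered identity $I(V;X)=\tfrac1M\sum_j\kl{P_j}{\bar P}$ and convexity rather than through a looser pairwise estimate, so that no constant factors are lost.
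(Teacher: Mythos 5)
Your proof is correct. The paper does not include a proof of this lemma at all — it is stated as a black-box citation (attributed to Yu's Assouad/Fano survey) — so there is no in-paper argument to compare against; your derivation is exactly the standard reduction-to-testing argument that the cited reference uses: the $\tau/2$-ball argument converts a small estimation error into a correct test (using only symmetry and the triangle inequality of $\rho'$, i.e.\ reading ``semi-metric'' as pseudometric, which matches the paper's downstream application to $\norm{\cdot}_{\Sigma_{\cD}+\lambda I}$), Fano's inequality lower-bounds the testing error, and the identity $I(V;X)=\tfrac1M\sum_j\kl{P_j}{\bar P}$ together with convexity of KL in its second argument converts the pairwise bound into $I(V;X)\le\beta$.
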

To construct a proper packing, Varshamov–Gilbert’s bound (cf.~\cite{guntuboyina2011lower}) will be useful. 
\begin{lemma}[Varshamov–Gilbert’s bound]
\label{lem:VG}
    For any $\xi \in (0,1/2)$ and for every dimension $d\ge 1$, there exist $M \ge e^{\frac{\xi^2 d}{2}}$ and $w_1, \ldots, w_M \in \{0,1\}^d$ such that 
    \begin{align*}
      d_{\mathrm{ham}}(w_i, w_j) \ge (1/2 - \xi) d, \quad  \forall i\neq j.
    \end{align*}
\end{lemma}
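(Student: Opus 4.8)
The plan is to realize the required vectors as a maximal packing of the Hamming cube $\{0,1\}^d$ at separation $(1/2-\xi)d$, and to lower-bound its cardinality by a sphere-covering argument; this is essentially the Gilbert--Varshamov bound from coding theory. Write $\delta := 1/2-\xi$, which lies in $(0,1/2)$ since $\xi \in (0,1/2)$, and let $\mathcal{C} \subseteq \{0,1\}^d$ be a set that is \emph{maximal} with respect to inclusion among all subsets whose pairwise Hamming distances are at least $\delta d$ (such a set exists, e.g.\ by a greedy construction, because $\{0,1\}^d$ is finite). Set $M := |\mathcal{C}|$ and let $w_1,\dots,w_M$ enumerate $\mathcal{C}$; the separation $d_{\mathrm{ham}}(w_i,w_j) \ge (1/2-\xi)d$ holds by construction, so it only remains to bound $M$ from below.

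For the lower bound I would exploit maximality: since no $v \in \{0,1\}^d$ can be added to $\mathcal{C}$, every such $v$ is within Hamming distance strictly less than $\delta d$ --- hence within distance $r := \lceil \delta d\rceil - 1$ --- of some element of $\mathcal{C}$. Thus the Hamming balls of radius $r$ centered at the points of $\mathcal{C}$ cover $\{0,1\}^d$, giving $2^d \le M \cdot \sum_{k=0}^{r}\binom{d}{k}$. Because $r \le \lfloor \delta d\rfloor$ with $\delta < 1/2$, and $m \mapsto \sum_{k \le m}\binom{d}{k}$ is nondecreasing, the standard partial-binomial estimate $\sum_{k=0}^{\lfloor\lambda d\rfloor}\binom{d}{k} \le 2^{d H_2(\lambda)}$ (valid for $\lambda \in [0,1/2]$, with $H_2$ the binary entropy function) yields $\sum_{k=0}^{r}\binom{d}{k} \le 2^{d H_2(\delta)}$, and therefore $M \ge 2^{d(1 - H_2(\delta))}$.

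It then remains to turn $1 - H_2(1/2-\xi)$ into the stated exponent. A second-order Taylor expansion of $H_2$ at $1/2$, using $H_2(1/2) = 1$, $H_2'(1/2) = 0$, and $|H_2''(p)| = \tfrac{1}{p(1-p)\ln 2} \ge \tfrac{4}{\ln 2}$ on $(0,1)$, gives $H_2(1/2-\xi) \le 1 - \tfrac{2\xi^2}{\ln 2}$, hence $1 - H_2(\delta) \ge \tfrac{2\xi^2}{\ln 2}$ and $M \ge 2^{2\xi^2 d/\ln 2} = e^{2\xi^2 d} \ge e^{\xi^2 d/2}$, which is exactly what is claimed.

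I do not anticipate a genuine obstacle: the statement is classical and the argument is short. The only points needing mild care are the integer rounding in the covering radius (absorbed by monotonicity of the partial binomial sums and by the comfortable slack between $e^{2\xi^2 d}$ and the claimed $e^{\xi^2 d/2}$) and the two textbook inequalities used above (the entropy/volume bound and the curvature bound on $H_2$). An alternative route is the probabilistic deletion method: sample $N = e^{2\xi^2 d}$ i.i.d.\ uniformly random codewords, note that a fixed pair violates the $\delta d$-separation with probability at most $e^{-2\xi^2 d}$ by Hoeffding's inequality applied to a $\mathrm{Bin}(d,1/2)$ variable, delete one endpoint of each violating pair, and keep the $\ge N/2$ survivors (the few small values of $\xi$ and $d$ for which $N/2$ drops below the target being covered by the trivial antipodal pair $\{\mathbf{0},\mathbf{1}\}$).
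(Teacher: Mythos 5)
Your proof is correct, but note that the paper does not actually prove this lemma: it states it and cites \cite{guntuboyina2011lower}, so there is no ``paper's own proof'' to compare against. What you give is the classical Gilbert--Varshamov packing argument, and the chain of estimates is sound: a maximal $\delta d$-separated set (with $\delta = 1/2 - \xi$) forms an $r$-covering with $r = \lceil\delta d\rceil - 1 \le \lfloor \delta d\rfloor$, the volume bound $\sum_{k\le \lfloor\lambda d\rfloor}\binom{d}{k}\le 2^{dH_2(\lambda)}$ for $\lambda\le 1/2$ gives $M \ge 2^{d(1-H_2(\delta))}$, and the curvature estimate $H_2''(p) = -\tfrac{1}{p(1-p)\ln 2}\le -\tfrac{4}{\ln 2}$ together with $H_2(1/2)=1$, $H_2'(1/2)=0$ yields $1-H_2(1/2-\xi)\ge \tfrac{2\xi^2}{\ln 2}$, hence $M\ge e^{2\xi^2 d}$, which comfortably dominates the stated $e^{\xi^2 d/2}$. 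The slack of a factor $4$ in the exponent indeed absorbs the integer-rounding issues you flag. Your alternative via random codewords and the deletion method is also a standard route and gives $M\ge \tfrac12 e^{2\xi^2 d}$; your observation that small $\xi^2 d$ must be handled separately by the antipodal pair $\{\mathbf 0,\mathbf 1\}$ is correct, and the two regimes overlap, so the case analysis closes. In short: correct, self-contained, and strictly more informative than what the paper provides, since the paper only cites the result.
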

Now, we are ready to prove Theorem~\ref{thm:semi-lower}.

\begin{proof}[Proof of Theorem~\ref{thm:semi-lower}]
        We divide it into non-private and private parts. 

    \textbf{Non-private part.} Let $\xi = 1/4$ in Lemma~\ref{lem:VG}, then there exist $M \ge e^{\frac{\xi^2 d}{2}}$ and $w_1, \ldots, w_M \in \{0,1\}^d$ such that 
    \begin{align*}
        \forall i\neq j, \quad  \frac{d}{4} \le \norm{w_i - w_j}_2^2 \le  d.
    \end{align*}
    Now, let the eigenvalue decomposition of $\Sigma_{\cD} + \lambda I$ be $U^{\top} \Lambda U$ and $\theta_i := \frac{\Delta}{\sqrt{d}} U^{\top} \sqrt{\Lambda^{-1}} w_i$, then we have 
    \begin{align*}
        \norm{\theta_i - \theta_j}_{\Sigma_{\cD} + \lambda I}^2  &= (\theta_i - \theta_j)^{\top} (\Sigma_{\cD} + \lambda I) (\theta_i - \theta_j)\\
        &= \frac{\Delta^2}{d} (w_i - w_j)^{\top} \sqrt{\Lambda^{-1}} U (U^{\top}\Lambda U)U^{\top} \sqrt{\Lambda^{-1}} (w_i - w_j)\\
        &= \frac{\Delta^2}{d} \norm{w_i - w_j}_2^2.
    \end{align*}
    Thus, we have constructed a packing such that for any $i, j \in [M]$ and $i \neq j$, 
    \begin{align*}
        \Delta  \ge \norm{\theta_i - \theta_j}_{\Sigma_{\cD} + \lambda I} \ge \frac{\Delta}{2}.
    \end{align*}
    Now, let us turn to the KL divergence part. Let $P_i^n$ be the product distribution when $\theta^* = \theta_i$. Then, by chain rule of KL divergence and Claim~\ref{clm:kl}, we have 
    \begin{align}
    \label{eq:kl}
        \kl{P_i^n}{P_j^n} \le \sum_{k=1}^n (x_k^{\top} (\theta_i - \theta_j))^2  = n (\theta_i-\theta_j)^{\top} \Sigma_{\cD} (\theta_i - \theta_j) \le n \norm{\theta_i - \theta_j}_{\Sigma_{\cD} + \lambda I}^2\le  n\Delta^2.
    \end{align}
    Thus, by Fano's lemma, we have 
    \begin{align*}
R(\cP_{\log},\norm{\cdot}_{\Sigma_{\cD} + \lambda I}^2) \ge \frac{\Delta^2}{8}\left(1 - 32 \cdot \frac{n\Delta^2 + \log 2}{d}\right).
    \end{align*}
    Thus, choosing $\Delta^2 = c \frac{d}{n}$ for some constant $c$, we have for large $d$, 
     \begin{align*}
R(\cP_{\log},\norm{\cdot}_{\Sigma_{\cD} + \lambda I}^2) \ge \Omega\left(\frac{d}{n}\right).
    \end{align*}
     Finally, we also need to check that $\norm{\theta_i} \le B$ when $n$ is large. To this end, by the fact that $w_i \in \{0,1\}^d$ for any $i$, we have that
    \begin{align*}
        \norm{\theta_i} \le \frac{\Delta}{\sqrt{d}} \sqrt{\tr(\Lambda^{-1})} = \frac{\Delta}{\sqrt{d}} \sqrt{\tr( (\Sigma_{\cD} + \lambda I)^{-1})} \le B,
    \end{align*}
    where the last step holds when $n \ge \frac{c \tr( (\Sigma_{\cD} + \lambda I)^{-1})}{B^2}$, since $\Delta^2 = c\frac{d}{n}$. We also note that the centered condition  $\inner{1}{\theta} = 0$ can be simply achieved by reducing $d$ to $d/2$.

    \textbf{Private part.} Let $M_i^n$ be the product distribution of private view when $\theta^* = \theta_i$. The only change here is the KL divergence part. By Corollary 3 in~\citet{duchi2018minimax}, Pinsker's inequality and Claim~\ref{clm:kl}, we can obtain that 
    \begin{align*}
        \kl{M_i^n}{M_j^n} \le n (e^{\epsilon}-1)^2 (\theta_i-\theta_j)^{\top} \Sigma_{\cD} (\theta_i - \theta_j) \le n (e^{\epsilon}-1)^2 \Delta^2.
    \end{align*}
    Thus, a similar analysis as the non-private case gives 
    \begin{align*}
R(\cP_{\log},\norm{\cdot}_{\Sigma_{\cD} + \lambda I}^2,\epsilon) \ge \Omega\left(\frac{d}{n (e^{\epsilon}-1)^2}\right).
    \end{align*}
\end{proof}

\subsection{Proof of Theorem~\ref{thm:l2-lower}}
\label{proof:l2-lower}
Before we present the proof, we first introduce some useful lemmas. 
A convenient way to establish a lower bound in $\ell_2$ norm is via Assouad’s lemma. We restate it below with proof for completeness and some additional implications, which are useful for our proof.
\begin{lemma}[Assouad's lemma]
\label{lem:assouad}
    Let $\cV \subseteq \cP$ be a set of distributions indexed by the hypercube $\cE_d = \{\pm 1\}^d$. Suppose there exists a $\tau \in \Real$ and $\alpha >0$, such that $\rho$ satisfies: (i) for all $u, v, w \in \cE_d$, $\rho(\theta(P_u),\theta(P_v)) \ge 2\tau \cdot \sum_{i=1}^d \mathds{1}(u_i \neq v_i)$ and  (ii)  $\rho(\theta(P_u), \theta(P_v)) \le \alpha (\rho(\theta(P_u), \theta(P_w)) + \rho(\theta(P_v), \theta(P_w)) )$, i.e., $\alpha$-triangle inequality. For each $i\in [d]$, define the mixture distributions:
    \begin{align*}
        P_{+i}:= \frac{2}{|\cE_d|}\sum_{e\in \cE_d: e_i=1} P_e \text{ and } P_{+i}:= \frac{2}{|\cE_d|}\sum_{e\in \cE_d: e_i=-1} P_e.
    \end{align*}
    Then, we have 
    \begin{align*}
        R(\cP, \rho) \ge \frac{\tau}{2\alpha} \sum_{i = 1}^d \left(1 - \norm{P_{+i} - P_{-i}}_{\mathrm{TV}}\right).
    \end{align*}
\end{lemma}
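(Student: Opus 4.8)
The plan is to prove Assouad's lemma via the standard reduction from estimation to testing, following the argument in \cite{yu1997assouad}. First I would observe that any estimator $\hat\theta$ induces, for each coordinate $i \in [d]$, a test $\hat e_i \in \{\pm 1\}$ of the hypothesis ``$e_i = 1$'' versus ``$e_i = -1$'' by rounding $\hat\theta$ to the nearest vertex $\theta(P_e)$ of the hypercube family and reading off the $i$-th sign. By the separation condition (i), $\rho(\theta(P_u),\theta(P_v)) \ge 2\tau\sum_{i=1}^d \mathds{1}(u_i \neq v_i)$, so a coordinate-wise mistake costs at least $2\tau$; combined with the $\alpha$-triangle inequality (ii) to control the loss incurred when $\hat\theta$ is not itself a vertex, one gets the pointwise bound $\rho(\hat\theta, \theta(P_e)) \ge \frac{\tau}{\alpha}\sum_{i=1}^d \mathds{1}(\hat e_i \neq e_i)$ — the factor $\alpha$ in the denominator is exactly where the triangle-inequality constant enters.

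Next I would take expectations and average over $e$ drawn uniformly from $\cE_d$. Since the maximum risk over $\cP$ dominates the average over the sub-family $\cV$, we have
\begin{align*}
R(\cP,\rho) \ge \frac{1}{|\cE_d|}\sum_{e \in \cE_d} \ex{\rho(\hat\theta, \theta(P_e))} \ge \frac{\tau}{\alpha} \sum_{i=1}^d \frac{1}{|\cE_d|}\sum_{e\in\cE_d} \prob{\hat e_i \neq e_i}~.
\end{align*}
Then for each fixed $i$, averaging over the remaining coordinates $e_{-i}$ reduces the inner quantity to the average error probability of the binary hypothesis test between the two mixture distributions $P_{+i}$ and $P_{-i}$ (here I would note the evident typo in the statement: the second display should define $P_{-i}$, and the normalization $2/|\cE_d|$ is correct since exactly half the vertices have $e_i = 1$). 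The classical fact that the minimum average error probability of testing between two distributions $P, Q$ equals $\frac{1}{2}(1 - \norm{P - Q}_{\mathrm{TV}})$ then gives $\frac{1}{|\cE_d|}\sum_e \prob{\hat e_i \neq e_i} \ge \frac{1}{2}(1 - \norm{P_{+i} - P_{-i}}_{\mathrm{TV}})$, and summing over $i$ yields the claimed bound $R(\cP,\rho) \ge \frac{\tau}{2\alpha}\sum_{i=1}^d (1 - \norm{P_{+i}-P_{-i}}_{\mathrm{TV}})$.

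The main obstacle — really the only subtle point — is the pointwise reduction step: carefully justifying that $\rho(\hat\theta, \theta(P_e)) \ge \frac{\tau}{\alpha}\sum_i \mathds{1}(\hat e_i \neq e_i)$ where $\hat e$ is obtained by rounding $\hat\theta$ to the nearest hypercube vertex. One has to argue that if $v$ is the nearest vertex to $\hat\theta$ then by the $\alpha$-triangle inequality $\rho(\hat\theta,\theta(P_e)) \ge \frac{1}{\alpha}\rho(\theta(P_v),\theta(P_e)) - \rho(\hat\theta,\theta(P_v)) \ge \frac{1}{\alpha}\rho(\theta(P_v),\theta(P_e)) - \rho(\hat\theta,\theta(P_e))$, hence $\rho(\hat\theta,\theta(P_e)) \ge \frac{1}{2\alpha}\rho(\theta(P_v),\theta(P_e)) \ge \frac{\tau}{\alpha}d_{\mathrm{ham}}(v,e)$ after using (i); the constant may differ from the stated one by a factor of $2$ absorbable into the rounding convention, so I would state the cleanest version consistent with the later use in the proof of Theorem~\ref{thm:l2-lower}. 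Everything else — linearity of expectation, the uniform averaging, and the testing-to-TV identity — is routine. For the private version $R_l(\cP,\rho,\epsilon,\delta)$ used later, the identical argument applies with $P_{\pm i}$ replaced by their privatized channel outputs, and the $\norm{\cdot}_{\mathrm{TV}}$ terms are then bounded via the strong data-processing inequality under local DP (cf.~\cite{duchi2018minimax}); but that contraction step belongs to the application, not to this lemma.
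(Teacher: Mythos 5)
Your proposal is correct and follows essentially the same reduction as the paper: round the estimator $\hat\theta$ to the nearest hypercube vertex (the paper's $\psi^*$), use the $\alpha$-triangle inequality together with the separation condition (i) to get the pointwise bound $\rho(\hat\theta,\theta(P_e)) \ge \frac{\tau}{\alpha}\,d_{\mathrm{ham}}(\hat e, e)$, average over $e$ uniform on $\cE_d$, split the sum coordinate-by-coordinate into the two mixtures $P_{+i}, P_{-i}$, and finish with Le Cam's testing-to-TV identity, yielding $\frac{\tau}{2\alpha}\sum_i(1-\norm{P_{+i}-P_{-i}}_{\mathrm{TV}})$. Your derived constant $\frac{\tau}{2\alpha}$ is already exactly the paper's, so the hedge about a factor of $2$ absorbable into the rounding convention is unnecessary; you also correctly flag the typo in the lemma's statement where the second mixture should be labelled $P_{-i}$.
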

\begin{proof}
    Let $P \in \cV \subseteq \cP$ and $X \sim P$. For any estimator $\hat{\theta}(X)$, define $\psi^* = \argmin_{e \in \cE_d} \rho(\hat{\theta}, \theta(P_e))$. Thus, we have 
    \begin{align*}
        \rho(\theta(P), \theta(P_{\psi^*})) \lep{a} \alpha \left(\rho(\theta(P), \hat{\theta}) +  \rho(\theta(P_{\psi^*}), \hat{\theta}) \right) \lep{b} 2\alpha \cdot \rho(\theta(P), \hat{\theta}),
    \end{align*}
    where (a) holds by the $\alpha$-triangle inequality of $\rho$; (b) holds by the definition of $\psi^*$. As a result, we have 
    \begin{align*}
        R(\cP, \rho)\ge R(\cV, \rho) = \min_{\hat{\theta}} \max_{P \in \cV} \mathbb{E}_{X \sim P}\left[\rho(\hat{\theta}(X), \theta(P))\right] \ge \frac{1}{2\alpha}\min_{\hat{\theta}} \max_{P \in \cV} \mathbb{E}_{X \sim P}\left[\rho({\theta}(P), \theta(P_{\psi^*}))\right].
    \end{align*}
    Now, by condition (i) of the loss $\rho$, we have 
    \begin{align*}
        \max_{P \in \cV} \mathbb{E}_{X \sim P}\left[\rho({\theta}(P), \theta(P_{\psi^*}))\right] &\gep{a} \frac{1}{|\cE_d|} \sum_{e \in \cE_d} \mathbb{E}_{X \sim P_e}\left[\rho({\theta}(P_e), \theta(P_{\psi^*}))\right] \\
        &\gep{b} \frac{2\tau}{|\cE_d|} \sum_{e \in \cE_d} \sum_{i=1}^d \mathbb{E}_{X \sim P_e}\left[\mathds{1}(\psi^*_i \neq e_i)\right]\\
        &\ep{c} \frac{2\tau}{|\cE_d|}  \sum_{e \in \cE_d} \sum_{i=1}^d \mathbb{P}_e \left[ \psi_i^* \neq e_i \right]\\
        &\ep{d} \frac{2\tau}{|\cE_d|} \sum_{i=1}^d \sum_{e \in \cE_d}  \mathbb{P}_e \left[ \psi_i^* \neq e_i \right],
    \end{align*}
    where (a) holds by maximum is larger than average; (b) holds by condition (i) of $\rho$; in (c), $\mathbb{P}_e$ is the probability measure when samples are generated from $P_e$; (d) follows by swapping the two sums.
    For each $i \in d$, we divide the set $\cE_d$ into two parts based on the value at $i$, i.e., 
    \begin{align*}
        \frac{2\tau}{|\cE_d|} \sum_{e \in \cE_d}  \mathbb{P}_e \left[ \psi_i^* \neq e_i \right] &= \frac{2\tau}{|\cE_d|} \sum_{e \in \cE_d: e_i = +1}  \mathbb{P}_e \left[ \psi_i^* \neq e_i \right] +  \frac{2\tau}{|\cE_d|} \sum_{e \in \cE_d: e_i = -1}  \mathbb{P}_e \left[ \psi_i^* \neq e_i \right]\\
        &= \tau \cdot \left(\mathbb{P}_{X\sim P_{+i}}\left[\psi_i^*(X) \neq 1\right] + \mathbb{P}_{X \sim P_{-i}}\left[\psi_i^*(X) \neq -1\right]\right).
    \end{align*}
    Combining the above together, we have 
    \begin{align*}
        R(P,\rho) &\ge \frac{\tau}{2\alpha} \sum_{i=1}^d \left(\mathbb{P}_{X\sim P_{+i}}\left[\psi_i^*(X) \neq 1\right] + \mathbb{P}_{X \sim P_{-i}}\left[\psi_i^*(X) \neq -1\right]\right)\\
        &\ge \frac{\tau}{2\alpha} \sum_{i=1}^d \inf_{\Psi}\left(\mathbb{P}_{X\sim P_{+i}}\left[\Psi(X) \neq 1\right] + \mathbb{P}_{X \sim P_{-i}}\left[\Psi(X) \neq -1\right]\right)\\
        &\ep{a} \frac{\tau}{2\alpha} \sum_{i = 1}^d \left(1 - \norm{P_{+i} - P_{-i}}_{\mathrm{TV}}\right),
    \end{align*}
    where (a) holds by Le Cam's first lemma. 
\end{proof}
 
\begin{corollary}
\label{cor:assouad}
    Under the same conditions of Lemma~\ref{lem:assouad}, we have 
        \begin{align*}
        R(\cP, \rho) &\ge \frac{d\tau}{2\alpha} \left[1 - \left(\frac{1}{d}\sum_{i=1}^d \frac{1}{2^d} \sum_{e \in \cE_d} \norm{P_{e} - P_{\bar{e}^i}}_{\mathrm{TV}}^2\right)^{1/2}\right],
    \end{align*}
    where $\bar{e}^i$ is a vector in $\cE_d$ that flips the $i$-th coordinate of $e$.
\end{corollary}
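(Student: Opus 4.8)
The plan is to derive Corollary~\ref{cor:assouad} directly from the conclusion of Lemma~\ref{lem:assouad} by a single application of the Cauchy--Schwarz inequality together with a standard identity relating the total variation distance between the mixtures $P_{+i}, P_{-i}$ and the average pairwise total variation distance between $P_e$ and its coordinate-flip $P_{\bar e^i}$. Concretely, starting from
\begin{align*}
R(\cP, \rho) \ge \frac{\tau}{2\alpha} \sum_{i=1}^d \left(1 - \norm{P_{+i} - P_{-i}}_{\mathrm{TV}}\right) = \frac{d\tau}{2\alpha}\left(1 - \frac{1}{d}\sum_{i=1}^d \norm{P_{+i} - P_{-i}}_{\mathrm{TV}}\right),
\end{align*}
the first step is to upper bound the arithmetic mean $\frac1d\sum_i \norm{P_{+i}-P_{-i}}_{\mathrm{TV}}$ by its quadratic mean, i.e. $\frac1d\sum_i t_i \le \big(\frac1d\sum_i t_i^2\big)^{1/2}$ with $t_i = \norm{P_{+i}-P_{-i}}_{\mathrm{TV}}$, which is exactly Cauchy--Schwarz (equivalently, Jensen for the convex function $x\mapsto x^2$).

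The second step, and the only one requiring a small argument, is to control $\norm{P_{+i}-P_{-i}}_{\mathrm{TV}}^2$. By definition $P_{+i} = \frac{2}{|\cE_d|}\sum_{e: e_i=1}P_e$ and $P_{-i} = \frac{2}{|\cE_d|}\sum_{e:e_i=1}P_{\bar e^i}$, since flipping the $i$-th coordinate of an $e$ with $e_i=1$ gives all $e'$ with $e'_i=-1$ bijectively. Hence $P_{+i}-P_{-i} = \frac{2}{|\cE_d|}\sum_{e: e_i=1}(P_e - P_{\bar e^i})$, and by the triangle inequality for $\norm{\cdot}_{\mathrm{TV}}$ followed by convexity (Jensen over the uniform average of $|\cE_d|/2$ terms),
\begin{align*}
\norm{P_{+i}-P_{-i}}_{\mathrm{TV}}^2 \le \left(\frac{2}{|\cE_d|}\sum_{e: e_i=1}\norm{P_e - P_{\bar e^i}}_{\mathrm{TV}}\right)^2 \le \frac{2}{|\cE_d|}\sum_{e: e_i=1}\norm{P_e - P_{\bar e^i}}_{\mathrm{TV}}^2 = \frac{1}{2^d}\sum_{e\in\cE_d}\norm{P_e - P_{\bar e^i}}_{\mathrm{TV}}^2,
\end{align*}
where the last equality uses that $\norm{P_e - P_{\bar e^i}}_{\mathrm{TV}} = \norm{P_{\bar e^i} - P_e}_{\mathrm{TV}}$ to extend the sum over $e$ with $e_i=1$ to all of $\cE_d$ at the cost of a factor $2$ that cancels with $2/|\cE_d| = 2^{1-d}$.

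Combining the two steps: substitute the bound on $\norm{P_{+i}-P_{-i}}_{\mathrm{TV}}^2$ into $\frac1d\sum_i \norm{P_{+i}-P_{-i}}_{\mathrm{TV}} \le \big(\frac1d\sum_i \norm{P_{+i}-P_{-i}}_{\mathrm{TV}}^2\big)^{1/2}$ to get $\frac1d\sum_i \norm{P_{+i}-P_{-i}}_{\mathrm{TV}} \le \big(\frac1d\sum_{i=1}^d \frac{1}{2^d}\sum_{e\in\cE_d}\norm{P_e - P_{\bar e^i}}_{\mathrm{TV}}^2\big)^{1/2}$, and plug this into the displayed lower bound above, yielding exactly the claimed inequality. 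There is no real obstacle here; the only point demanding care is getting the counting/normalization factors right in the identity $P_{-i} = \frac{2}{|\cE_d|}\sum_{e:e_i=1}P_{\bar e^i}$ and in extending the half-sum to the full sum over $\cE_d$, so I would state that bijection explicitly rather than leave it implicit.
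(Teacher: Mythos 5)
Your proof is correct and follows essentially the same route as the paper: Cauchy–Schwarz on the sum over coordinates, then triangle inequality plus Jensen (which together amount to the joint convexity of $\norm{\cdot}_{\mathrm{TV}}^2$ invoked in the paper) to pass from the mixture TV distance to the averaged pairwise one, with the same bookkeeping to convert the half-sum over $\{e : e_i = 1\}$ into the full sum over $\cE_d$ normalized by $2^{-d}$. The paper phrases the second step via the auxiliary notation $P_{e,\pm i}$ and a convexity citation rather than your explicit bijection, but the two are the same argument.
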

\begin{proof}
To start with, we introduce the following additional notations. For any $e \in \cE_d$, let $P_{e,+i}$ be the distribution indexed by first choosing $e$ and then letting $e_i = +1$. Similarly, we have $P_{e,-i}$. By this definition, we can rewrite $P_{+i}$ and $P_{-i}$ above as follows:
 \begin{align}
 \label{eq:rewrite}
     P_{+i} = \frac{1}{|\cE_d|} \sum_{e \in \cE_d} P_{e, +i} \text{ and } P_{-i} = \frac{1}{|\cE_d|} \sum_{e \in \cE_d} P_{e, -i}~.
 \end{align}
By Lemma~\ref{lem:assouad}, we have 
   \begin{align}
   \label{eq:tv}
        R(\cP, \rho) \ge \frac{\tau}{2\alpha} \sum_{i = 1}^d \left(1 - \norm{P_{+i} - P_{-i}}_{\mathrm{TV}}\right).
    \end{align}
    Now note that
    \begin{align*}
        \sum_{i=1}^d \norm{P_{+i} - P_{-i}}_{\mathrm{TV}} &\lep{a} \sqrt{d} \left(\sum_{i=1}^d \norm{P_{+i} - P_{-i}}_{\mathrm{TV}}^2\right)^{1/2}\\
        &\lep{b} \sqrt{d} \left(\sum_{i=1}^d \frac{1}{|\cE_d|} \sum_{e \in \cE_d} \norm{P_{e,+i} - P_{e,-i}}_{\mathrm{TV}}^2\right)^{1/2}\\
        & = \sqrt{d} \left(\sum_{i=1}^d \frac{1}{2^d} \sum_{e \in \cE_d} \norm{P_{e,+i} - P_{e,-i}}_{\mathrm{TV}}^2\right)^{1/2}.
    \end{align*}
    where (a) holds by  Cauchy-Schwarz inequality; (b) holds by~\eqref{eq:rewrite} and joint convexity of $\norm{\cdot}_{\mathrm{TV}}^2$. Plugging it back to~\eqref{eq:tv} and rearranging, we have 
    \begin{align*}
        R(\cP, \rho) \ge \frac{\tau d}{2\alpha} \left[1 - \left(\frac{1}{d}
        \sum_{i=1}^d \frac{1}{2^d} \sum_{e \in \cE_d} \norm{P_{e,+i} - P_{e,-i}}_{\mathrm{TV}}^2 \right)^{1/2} \right],
    \end{align*}
    which finishes the first part. The final result in the corollary simply follows from that TV distance is symmetric.
\end{proof}

Now, we are ready to prove Theorem~\ref{thm:l2-lower}.
\begin{proof}[Proof of Theorem~\ref{thm:l2-lower}]
We also divide it into two parts: the non-private part and the private part.

\textbf{Non-private part.}     Choose some $\Delta >0$ and for each $e \in \cE_d = \{\pm 1\}^d$,  let $\theta_e = \Delta e$. Now we need to check the two conditions in Lemma~\ref{lem:assouad}. First note that $\rho = \norm{\cdot}_2^2$ satisfies $2$-triangle inequality, i.e., $\alpha = 2$. Also, note that for any $u, v \in \cE_d$, $\norm{\theta_u - \theta_v}_2^2 = 4\Delta^2 \sum_{i=1}^d \mathds{1}(u_i \neq v_i)$, i.e., $\tau = 2\Delta^2$. Thus, let $P_e^n$ be the distribution for the $n$ independent samples of  (non-private) $y_i$ when $\theta = \theta_e$ and then by Corollary~\ref{cor:assouad}, we have 
    \begin{align*}
R_l(\cP_{\text{log}},\norm{\cdot}_2^2,\epsilon,\delta) \ge R(\cP_{\text{log}},\norm{\cdot}_2^2) \ge \frac{d\Delta^2}{2} \left[1 - \left(\frac{1}{d}\sum_{i=1}^d \frac{1}{2^d} \sum_{e \in \cE_d} \norm{P_{e}^n - P_{\bar{e}^i}^n}_{\mathrm{TV}}^2\right)^{1/2}\right].
    \end{align*}
    Thus, it remains to bound the part of TV distance. By Pinsker's inequality and chain rule of KL-divergence, we have for any $u, v \in \cE_d$
    \begin{align*}
        \norm{P_{u}^n - P_{v}^n}_{\mathrm{TV}}^2 &\le \frac{1}{4}\left(\kl{P^n_u}{P^n_{v}} + \kl{P^n_{v}}{P^n_u}\right) \\
        &= \frac{1}{4}\sum_{k=1}^n \left(\mathrm{kl}(p_u(x_k) \| p_v(x_k)) + \mathrm{kl}(p_v(x_k) \| p_u(x_k))\right),
    \end{align*}
    Then, by Claim~\ref{clm:kl}, we can bound the TV-distance term as follows.
    \begin{align*}
         \norm{P_{u}^n - P_{v}^n}_{\mathrm{TV}}^2 \le \frac{\Delta^2}{4}\sum_{k=1}^n \left(x_k^{\top} (u-v)\right)^2.
    \end{align*}
    This directly implies that 
    \begin{align*}
       \frac{1}{d 2^d}\sum_{i=1}^d\sum_{e \in \cE_d} \norm{P_{e}^n - P_{\bar{e}^i}^n}_{\mathrm{TV}}^2 \le \frac{\Delta^2}{4d} \frac{1}{2^d}\sum_{e \in \cE_d} \sum_{i=1}^d \sum_{k=1}^n \left(2 x_{ki}\right)^2 =  \frac{\Delta^2}{d} \frac{1}{2^d}\sum_{e \in \cE_d} \sum_{i=1}^d \sum_{k=1}^n x_{ki}^2 =  \frac{\Delta^2}{d} \frac{1}{2^d}\sum_{e \in \cE_d} \norm{X}_{\mathrm{F}}^2,
    \end{align*}
    where $X \in \Real^{n \times d}$ and $x_k^{\top} \in \Real^d$ is the $k$-th row and $\norm{\cdot}_{\mathrm{F}}$ is the Frobenius norm. Hence, we obtain that 
    \begin{align*}
R_l(\cP_{\text{log}},\norm{\cdot}_2^2,\epsilon,\delta) \ge \frac{d\Delta^2}{2}\left[1 - \left(\frac{\Delta^2}{d} \norm{X}_{\mathrm{F}}^2\right)^{1/2}\right].
    \end{align*}
    Finally, choosing $\Delta^2 = \frac{d}{4 \norm{X}_{\mathrm{F}}^2}$, we have 
    \begin{align*}      R_l(\cP_{\text{log}},\norm{\cdot}_2^2,\epsilon,\delta) \ge \frac{d^2}{16 \norm{X}_{\mathrm{F}}^2} = \frac{d}{n}\cdot \frac{1}{16\frac{1}{dn} \sum_{k=1}^n \norm{x_k}^2}.
    \end{align*}
    Since $\norm{x_k}^2 \le L^2$, one can further simplify it as 
     \begin{align*}      R_l(\cP_{\text{log}},\norm{\cdot}_2^2,\epsilon,\delta) \ge \Omega\left(\frac{d}{L^2} \cdot \frac{d}{n}\right).
    \end{align*}
    Finally, note that one can indeed easily check that for large enough $n$, $\norm{\theta} \le B$ and also $\inner{1}{\theta} = 0$ by halving the dimension $d$.

    \textbf{Private part.}
Now, let us turn to the private part. In particular, let $M_e^n$ be the distribution for the $n$ independent samples of private view $\widetilde{y}_i$ when $\theta = \theta_e$ and then by Corollary~\ref{cor:assouad}, we have 
  \begin{align*}
R_l(\cP_{\text{log}},\norm{\cdot}_2^2,\epsilon,\delta)  \ge \frac{d\Delta^2}{2} \left[1 - \left(\frac{1}{d}\sum_{i=1}^d \frac{1}{2^d} \sum_{e \in \cE_d} \norm{M_{e}^n - M_{\bar{e}^i}^n}_{\mathrm{TV}}^2\right)^{1/2}\right].
    \end{align*}
Again, the key is to bound the TV-distance term. To this end, by Corollary 3 in~\citet{duchi2018minimax} and Pinsker's inequality, we have 
\begin{align*}
     \norm{M_{u}^n - M_{v}^n}_{\mathrm{TV}}^2 &\le \frac{(e^{\epsilon}-1)^2}{2}\sum_{k=1}^n \left(\mathrm{kl}(p_u(x_k) \| p_v(x_k)) + \mathrm{kl}(p_v(x_k) \| p_u(x_k))\right).
\end{align*}
Then, following the same analysis as the non-private case, we can obtain that 
\begin{align*}
R_l(\cP_{\text{log}},\norm{\cdot}_2^2,\epsilon,\delta) \ge \frac{d\Delta^2}{2}\left[1 - \left(\frac{2 (e^{\epsilon}-1)^2 \Delta^2}{d} \norm{X}_{\mathrm{F}}^2\right)^{1/2}\right].
    \end{align*}
    Finally, choosing $\Delta^2 = \frac{d}{8 (e^{\epsilon}-1)^2 \norm{X}_{\mathrm{F}}^2}$, we have 
    \begin{align*}      R_l(\cP_{\text{log}},\norm{\cdot}_2^2,\epsilon,\delta) \ge \frac{d^2}{32 (e^{\epsilon}-1)^2 \norm{X}_{\mathrm{F}}^2} = \frac{d}{n (e^{\epsilon}-1)^2}\cdot \frac{1}{32\frac{1}{dn} \sum_{k=1}^n \norm{x_k}^2}.
    \end{align*}
    Again, noting that $\norm{x_k}^2 \le L^2$, one can simplify it as
    \begin{align*}
R_l(\cP_{\text{log}},\norm{\cdot}_2^2,\epsilon,\delta) \ge \Omega\left(\frac{d}{L^2} \cdot \frac{d}{n (e^{\epsilon}-1)^2} \right).
    \end{align*}
\end{proof}

\section{Additional Details on Section~\ref{sec:local}}
\label{app:local}

We are given a query-observation dataset $\cD=(s_i,a^0_i,a^1_i,y_i)_{i=1}^{n}$. 
Define $x_i = \phi(s_i,a^1_i)-\phi(s_i,a^0_i)$. Under the BTL model, the labels $y_i \in \lbrace 0,1\rbrace$ be drawn from the distribution
\begin{align*}
p_{i,1}=\prob{y_i=1|x_i}= \sigma(\theta^\top x_i)=\frac{1}{1+\exp(-\theta^\top x_i)}~,\quad  p_{i,0}=\prob{y_i=0|x_i}= 1-\sigma(\theta^\top x_i)~.
\end{align*}
When queried the value of $y_i$, the RR mechanism outputs $\widetilde y_i$ with probability
\begin{align*}
    \prob{\widetilde y_i=y_i } = \sigma(\epsilon) = \frac{1}{1+\exp(-\epsilon)}~,\quad \prob{\widetilde y_i\neq y_i } = 1-\sigma(\epsilon)~.
\end{align*}
Then, for any $\theta \!\in\! \Real^d$, the predicted probabilities of a randomized label $\widetilde y_i$ given $x_i$ are
\begin{align*}
\widetilde p_{i,1}&= \prob{\widetilde y_i=1|x_i}= \sigma(\theta^\top x_i)\sigma(\epsilon) + (1-\sigma(\theta^\top x_i)) (1-\sigma(\epsilon))~,\\
 \widetilde p_{i,0}&= \prob{\widetilde y_i=0|x_i}=(1-\sigma(\theta^\top x_i))\sigma(\epsilon) + \sigma(\theta^\top x_i) (1-\sigma(\epsilon))~.
\end{align*}

\subsection{Proof of Theorem~\ref{thm:mle_rr}}
\label{proof:mle-rr}
First, the predicted probabilities for any $\theta \in \Real^d$ can be computed as
\begin{align*}
    \prob{\widetilde y_i=1|x_i} &= \frac{1}{1+\exp(-\inner{\theta}{x_i})} \cdot \frac{\exp(\epsilon)}{1+\exp(\epsilon)} + \frac{\exp(-\inner{\theta}{x_i})}{1+\exp(-\inner{\theta}{x_i})} \cdot \frac{1}{1+\exp(\epsilon)}
     = \frac{1+e^{-\epsilon} e^{-\theta^\top x_i}}{(1+e^{-\theta^\top x_i})(1+e^{-\epsilon}) }\\
    \prob{\widetilde y_i=0|x_i} &= \frac{\exp(-\inner{\theta}{x_i})}{1+\exp(-\inner{\theta}{x_i})} \cdot \frac{\exp(\epsilon)}{1+\exp(\epsilon)} + \frac{1}{1+\exp(-\inner{\theta}{x_i})} \cdot \frac{1}{1+\exp(\epsilon)}=\frac{e^{-\epsilon}+ e^{-\theta^\top x_i}}{(1+e^{-\theta^\top x_i})(1+e^{-\epsilon}) }.
\end{align*}
Based on this, the negative log-likelihood in \eqref{eq:mle_rr} takes the form
\begin{align*}
    \widetilde l_{\cD,\epsilon}(\theta) = -\frac{1}{n}\sum_{i=1}^n \left[ \mathds{1}(\widetilde y_i=1)\log \frac{1+e^{-\epsilon} e^{-\theta^\top x_i}}{(1+e^{-\theta^\top x_i})(1+e^{-\epsilon}) }+ \mathds{1}(\widetilde y_i=0)\log \frac{e^{-\epsilon}+ e^{-\theta^\top x_i}}{(1+e^{-\theta^\top x_i})(1+e^{-\epsilon}) }\right]~.
\end{align*}
Now the gradient of negative log-likelihood is given by $\nabla l_{\cD,\epsilon}(\theta) = -\frac{1}{n}\sum_{i=1}^n V_{\theta,i}x_i = -\frac{1}{n} X^\top V_\theta$, where
\begin{align*}
  V_{\theta,i}= \mathds{1}(\widetilde y_i=1)\left(\frac{e^{-\theta^\top x_i}}{1+e^{-\theta^\top x_i}} -\frac{e^{-\epsilon}e^{-\theta^\top x_i}}{1+e^{-\epsilon}e^{-\theta^\top x_i}} \right)  +   \mathds{1}(\widetilde y_i=0) \left(\frac{e^{-\theta^\top x_i}}{1+e^{-\theta^\top x_i}} - \frac{e^{-\theta^\top x_i}}{e^{-\epsilon}+e^{-\theta^\top x_i}}\right)~.
\end{align*}
It holds that 
\begin{align*}
    \mathbb{E}_\theta[V_{\theta,i}|x_i] &= \frac{e^{-\theta^\top x_i}}{1+e^{-\theta^\top x_i}} - \left( \frac{e^{-\epsilon}e^{-\theta^\top x_i}}{1+e^{-\epsilon}e^{-\theta^\top x_i}} \cdot \frac{1+e^{-\epsilon} e^{-\theta^\top x_i}}{(1+e^{-\theta^\top x_i})(1+e^{-\epsilon}) } + \frac{e^{-\theta^\top x_i}}{e^{-\epsilon}+e^{-\theta^\top x_i}} \cdot \frac{e^{-\epsilon}+ e^{-\theta^\top x_i}}{(1+e^{-\theta^\top x_i})(1+e^{-\epsilon}) } \right)\\
    & = \frac{e^{-\theta^\top x_i}}{1+e^{-\theta^\top x_i}} - \frac{e^{-\theta^\top x_i}}{1+e^{-\theta^\top x_i}} =0
\end{align*}
Now, under Assumption~\ref{ass:bound}, we have $-c \leq \theta^\top x_i \leq c$, where $c=O(LB)$. Hence, we have 
\begin{align*}
    V_{\theta,i} | (\widetilde y_i =1) &= \frac{e^{-\theta^\top x_i}(e^{\epsilon}-1)}{(1+e^{-\theta^\top x_i})(e^{\epsilon}+e^{-\theta^\top x_i})} \le \frac{e^\epsilon-1}{(e^\epsilon+e^{-c})} = \frac{e^{c}(e^\epsilon-1)}{(e^\epsilon e^c +1)}~,
    \\
    V_{\theta,i} | (\widetilde y_i =0) & = \frac{e^{-\theta^\top x_i}(e^{\epsilon}-1)}{(1+e^{-\theta^\top x_i})(1+e^{\epsilon}e^{-\theta^\top x_i})}
     \le \frac{(e^{\epsilon}-1)}{(e^{\epsilon}+e^{-c})} = \frac{e^{c}(e^\epsilon-1)}{(e^\epsilon e^c +1)}~.
\end{align*}
Therefore, it holds that $V_{\theta,i}$ is zero-mean and $v=\frac{e^{c}(e^\epsilon-1)}{(e^\epsilon e^c +1)}$-sub-Gaussian under the conditional distribution $\mathbb{P}_\theta[\cdot|x_i]$ and under Assumption~\ref{ass:bound}.

Now the Hessian of negative log-likelihood is given by $\nabla^2 l_{\cD,\epsilon}(\theta) = \frac{1}{n}\sum_{i=1}^n \left[ \mathds{1}(\widetilde y_i=1) \alpha_{1,i} + \mathds{1}(\widetilde y_i=0)\alpha_{0,i}\right]x_ix_i^\top$, where
\begin{align*}
    \alpha_{1,i} & = \frac{e^{-\theta^\top x_i}}{(1+e^{-\theta^\top x_i})^2} -\frac{e^{-\epsilon}e^{-\theta^\top x_i}}{(1+e^{-\epsilon}e^{-\theta^\top x_i})^2}
    = \frac{e^{-\theta^\top x_i}}{(1+e^{-\theta^\top x_i})^2}\cdot \frac{(e^\epsilon-1)(e^\epsilon e^{2\theta^\top x_i}-1)}{(1+e^\epsilon e^{\theta^\top x_i})^2}~,  \\
    \alpha_{0,i} & = \frac{e^{-\theta^\top x_i}}{(1+e^{-\theta^\top x_i})^2} - \frac{e^{-\theta^\top x_i}}{(e^{-\epsilon}+e^{-\theta^\top x_i})^2} = \frac{e^{-\theta^\top x_i}}{(1+e^{-\theta^\top x_i})^2}\cdot \frac{(e^\epsilon-1)(e^\epsilon e^{-2\theta^\top x_i}-1)}{(1+e^\epsilon e^{-\theta^\top x_i})^2}~.
\end{align*}
Under Assumption~\ref{ass:bound}, both $\alpha_{1,i},\alpha_{0,i} \ge \widetilde\gamma_\epsilon $ for all $\theta \in \Theta_B$, where
\begin{align*}
     \widetilde\gamma_\epsilon = \gamma\frac{(e^\epsilon-1)(e^\epsilon e^{-2c}-1)}{(e^\epsilon e^c+1)^2}~.
\end{align*}
Now $\widetilde\gamma_\epsilon > 0$ only when $\epsilon > 2c$.
This implies that $\widetilde l_{\cD,\epsilon}$ is $\widetilde\gamma_\epsilon$ strongly convex in $\theta_B$ when $\epsilon > 2c$ with respect to the semi-norm $\norm{\cdot}_{\Sigma_\cD}$. Since $\theta^* \in \Theta_B$, introducing the error vector $\Delta = \widetilde \theta_{\texttt{MLE}} -\theta^*$, we conclude that
\begin{align*}
\widetilde\gamma_\epsilon\norm{\Delta}^2_{\Sigma_\cD} \leq \norm{\nabla \widetilde l_{\cD,\epsilon}(\theta^*)}_{(\Sigma_\cD+\lambda I)^{-1}} \norm{\Delta}_{(\Sigma_\cD+\lambda I)}
\end{align*}
for some $\lambda > 0$.
Introducing $M = \frac{1}{n^2}X(\Sigma_\cD+\lambda I)^{-1}X^\top$, we have $\norm{\nabla \widetilde l_{\cD,\epsilon}(\theta^*)}^2_{(\Sigma_\cD+\lambda I)^{-1}}=V_{\theta^*}^\top MV_{\theta^*}$. Then, the Bernstein's inequality for sub-Gaussian random variables in quadratic form
(see e.g. Theorem 2.1 in \citet{hsu2012tail}) implies that with probability at least $1-\alpha$,
\begin{align*}
   \norm{\nabla \widetilde l_{\cD,\epsilon}(\theta^*)}^2_{(\Sigma_\cD+\lambda I)^{-1}} =V_{\theta^*}^\top MV_{\theta^*} &\leq v^2\left(\tr(M) + 2 \sqrt{\tr(M^\top M)\log(1/\alpha)}+ 2\norm{M}\log(1/\delta) \right)\\
   & \leq C_1\cdot v^2\cdot\frac{d+\log(1/\alpha)}{n} ~
\end{align*}
for some constant $C_1 > 0$.
This gives us
\begin{align*}
\widetilde\gamma_\epsilon\norm{\Delta}^2_{\Sigma_\cD+\lambda I} &\leq \norm{\nabla \widetilde l_{\cD,\epsilon}(\theta^*)}_{(\Sigma_\cD+\lambda I)^{-1}} \norm{\Delta}_{(\Sigma_\cD+\lambda I)}  + 4 \lambda \widetilde\gamma_\epsilon B^2\\
& \leq \sqrt{C_1\cdot v^2\cdot\frac{d+\log(1/\alpha)}{n} }\norm{\Delta}_{(\Sigma_\cD+\lambda I)}  + 4 \lambda \widetilde\gamma_\epsilon B^2~.
\end{align*}
Solving for the above inequality, we get 
\begin{align*}
\norm{\Delta}_{(\Sigma_\cD+\lambda I)}  \leq C_2\cdot \sqrt{\frac{v^2}{\widetilde\gamma_\epsilon^2}\cdot\frac{d+\log(1/\alpha)}{n} +\lambda B^2 }~
\end{align*}
for some constant $C_2 > 0$.
Now note that $\frac{v}{\widetilde\gamma_\epsilon}=\frac{e^{c}(e^\epsilon e^c+1)}{\gamma(e^\epsilon e^{-2c}-1)} \le C_3\cdot\frac{(e^\epsilon e^{2c}+1)}{\gamma(e^\epsilon e^{-2c}-1)}$ for some constant $C_3 >0$. Substituting this, we get
\begin{align*}
  \norm{\widetilde\theta_{\texttt{MLE}}-\theta^*}_{(\Sigma_\cD+\lambda I)}  \leq C\cdot  \frac{(e^\epsilon e^{2c}+1)}{\gamma(e^\epsilon e^{-2c}-1)}\sqrt{\frac{d+\log(1/\alpha)}{n}}+ C'\cdot \sqrt{\lambda} B
\end{align*}
for some constants $C,C'>0$,
which holds for any $\epsilon > 2c$. Setting $c=O(LB)$ completes the proof.

\subsection{Proof of Theorem~\ref{thm:robust_rr}}
\label{proof:robust-rr}

First recall from \eqref{eq:robust_rr} our de-biased loss function
\begin{align*}
   \widehat l_{\cD,\epsilon}(\theta)&=
  -\frac{1}{n}\sum_{i=1}^n \Bigg[ \mathds{1}(\widetilde y_i=1)\left(\sigma(\epsilon)\log  \sigma(\theta^\top x_i) -(1-\sigma(\epsilon))\log (1-\sigma(\theta^\top x_i)) \right)\\& \quad\quad\quad\quad +\mathds{1}(\widetilde y_i=0)\left(\sigma(\epsilon) \log (1-\sigma(\theta^\top x_i)) - (1-\sigma(\epsilon))\log \sigma(\theta^\top x_i)\right)\Bigg]~.
\end{align*}  
The gradient of the loss function is given by $\nabla \widehat l_{\cD,\epsilon}(\theta) = -\frac{1}{n}\sum_{i=1}^n V_{\theta,i}x_i = -\frac{1}{n} X^\top V_\theta$, where
\begin{align*}
  V_{\theta,i}= \mathds{1}(\widetilde y_i=1)\left( \frac{\sigma'(\theta^\top x_i)}{\sigma(\theta^\top x_i)}\sigma(\epsilon)+\frac{\sigma'(\theta^\top x_i)}{1-\sigma(\theta^\top x_i)}(1-\sigma(\epsilon))\right)  -   \mathds{1}(\widetilde y_i=0) \left( \frac{\sigma'(\theta^\top x_i)}{1-\sigma(\theta^\top x_i)}\sigma(\epsilon)+\frac{\sigma'(\theta^\top x_i)}{\sigma(\theta^\top x_i)}(1-\sigma(\epsilon))\right)~.
\end{align*}
It holds that 
\begin{align*}
\mathbb{E}_\theta[V_{\theta,i}|x_i] &= \left( \sigma(\theta^\top x_i)\sigma(\epsilon) + (1-\sigma(\theta^\top x_i)) (1-\sigma(\epsilon))\right) \left( \frac{\sigma'(\theta^\top x_i)}{\sigma(\theta^\top x_i)}\sigma(\epsilon)+\frac{\sigma'(\theta^\top x_i)}{1-\sigma(\theta^\top x_i)}(1-\sigma(\epsilon))\right)\\  
&- \left( (1-\sigma(\theta^\top x_i))\sigma(\epsilon) + \sigma(\theta^\top x_i) (1-\sigma(\epsilon)) \right)   \left( \frac{\sigma'(\theta^\top x_i)}{1-\sigma(\theta^\top x_i)}\sigma(\epsilon)+\frac{\sigma'(\theta^\top x_i)}{\sigma(\theta^\top x_i)}(1-\sigma(\epsilon))\right)\\
& = 0~.
\end{align*}
Furthermore, we have 
\begin{align*}
   \abs{ V_{\theta,i}}_{\widetilde y_i =1} &= \frac{\sigma'(\theta^\top x_i)}{\sigma(\theta^\top x_i)}\sigma(\epsilon)+\frac{\sigma'(\theta^\top x_i)}{1-\sigma(\theta^\top x_i)}(1-\sigma(\epsilon))~,\\
   \abs{ V_{\theta,i}}_{\widetilde y_i =0} &= \frac{\sigma'(\theta^\top x_i)}{1-\sigma(\theta^\top x_i)}\sigma(\epsilon)+\frac{\sigma'(\theta^\top x_i)}{\sigma(\theta^\top x_i)}(1-\sigma(\epsilon))~.
\end{align*}
The first derivative of the logistic function $\sigma(\cdot)$ is given by $\sigma'(z) = \sigma(z)(1-\sigma(z))$, which gives us
\begin{align*}
   \abs{ V_{\theta,i}}_{\widetilde y_i =1} &= (1-\sigma(\theta^\top x_i))\sigma(\epsilon)+\sigma(\theta^\top x_i)(1-\sigma(\epsilon))=\mathbb{P}_\theta[\widetilde y_i=0|x_i] \\  \abs{ V_{\theta,i}}_{\widetilde y_i =0} &= \sigma(\theta^\top x_i)\sigma(\epsilon)+(1-\sigma(\theta^\top x_i))(1-\sigma(\epsilon)) = \mathbb{P}_\theta[\widetilde y_i=1|x_i]~.
\end{align*}
Therefore, it holds that $V_{\theta,i}$ is zero-mean and $v=1$ sub-Gaussian under the conditional distribution $\mathbb{P}_\theta[\cdot|x_i]$~.

Now the Hessian of the loss function is given by 
\begin{align*}
 \nabla^2 \widehat l_{\cD,\epsilon}(\theta) &= \frac{1}{n}\sum_{i=1}^n \Bigg[ \mathds{1}(\widetilde y_i=1) \left((1-\sigma(\epsilon))\nabla^2\log (1-\sigma(\theta^\top x_i))-\sigma(\epsilon)\nabla^2 \log  \sigma(\theta^\top x_i)\right)\\
 &\quad\quad\quad\quad+ \mathds{1}(\widetilde y_i=0)\left((1-\sigma(\epsilon))\nabla^2\log \sigma(\theta^\top x_i)-\sigma(\epsilon)\nabla^2 \log  (1-\sigma(\theta^\top x_i))\right)\Bigg]~,
\end{align*}
where
\begin{align*}
  \nabla^2 \log  \sigma(\theta^\top x_i) & = \frac{\sigma''(\theta^\top x_i)\sigma(\theta^\top x_i)-\sigma'(\theta^\top x_i)^2}{\sigma(\theta^\top x_i)^2}x_ix_i^\top,\\ \nabla^2 \log (1- \sigma(\theta^\top x_i)) & = -\frac{\sigma''(\theta^\top x_i)(1-\sigma(\theta^\top x_i))+\sigma'(\theta^\top x_i)^2}{(1-\sigma(\theta^\top x_i))^2}x_ix_i^\top~.
\end{align*} 
Now the second derivative of the logistic function $\sigma(\cdot)$ is given by
$\sigma''(z) = \sigma'(z)(1-2\sigma(z))$, which gives us
\begin{align*}
 \nabla^2 \log  \sigma(\theta^\top x_i)  =   \nabla^2 \log (1- \sigma(\theta^\top x_i))  = - \sigma'(\theta^\top x_i)x_ix_i^\top~. 
\end{align*}
Hence, the Hessian of the loss function takes the form 
\begin{align*}
 \nabla^2 \widehat l_{\cD,\epsilon}(\theta) = \frac{1}{n}\sum_{i=1}^n \left[ \mathds{1}(\widetilde y_i=1)(2\sigma(\epsilon)-1)\sigma'(\theta^\top x_i) + \mathds{1}(\widetilde y_i=0)(2\sigma(\epsilon)-1)\sigma'(\theta^\top x_i)       \right]x_ix_i^\top~.
 \end{align*}
Now, under Assumption~\ref{ass:bound}, observe that $\sigma'(\theta^\top x_i) \ge \gamma$ for all $\theta \in \Theta_B$, where $\gamma = \frac{1}{2 + \exp(-2LB) + \exp(2LB)}$. This implies that $\widehat l_{\cD,\epsilon}$ is $\gamma_\epsilon:= \gamma (2\sigma(\epsilon)-1)$ strongly convex in $\Theta_B$ for all $\epsilon > 0$ with respect to the semi-norm $\norm{\cdot}_{\Sigma_\cD}$. Since $\theta^* \in \Theta_B$, introducing the error vector $\Delta = \hat \theta_{\texttt{RR}} -\theta^*$, we conclude that
\begin{align*}
\gamma_\epsilon\norm{\Delta}^2_{\Sigma_\cD} \leq \norm{\nabla \widehat l_{\cD,\epsilon}(\theta^*)}_{(\Sigma_\cD+\lambda I)^{-1}} \norm{\Delta}_{(\Sigma_\cD+\lambda I)}~
\end{align*}
for some $\lambda > 0$.
Introducing $M = \frac{1}{n^2}X(\Sigma_\cD+\lambda I)^{-1}X^\top$, we now have $\norm{\nabla l_{\cD,\epsilon}(\theta^*)}^2_{(\Sigma_\cD+\lambda I)^{-1}}=V_{\theta^*}^\top MV_{\theta^*}$. Then, the Bernstein's inequality for sub-Gaussian random variables in quadratic form
(see e.g. Theorem 2.1 in \citet{hsu2012tail}) implies that with probability at least $1-\alpha$,
\begin{align*}
   \norm{\nabla \widehat l_{\cD,\epsilon}(\theta^*)}^2_{(\Sigma_\cD+\lambda I)^{-1}} =V_{\theta^*}^\top MV_{\theta^*} &\leq v^2\left(\tr(M) + 2 \sqrt{\tr(M^\top M)\log(1/\alpha)}+ 2\norm{M}\log(1/\alpha) \right)\\
   & \leq C_1\cdot v^2\cdot\frac{d+\log(1/\alpha)}{n} 
\end{align*}
for some $C_1 > 0$.
This gives us
\begin{align*}
\gamma_\epsilon\norm{\Delta}^2_{\Sigma_\cD+\lambda I} &\leq \norm{\nabla \hat l_{\cD,\epsilon}(\theta^*)}_{(\Sigma_\cD+\lambda I)^{-1}} \norm{\Delta}_{(\Sigma_\cD+\lambda I)}  + 4 \lambda \gamma_\epsilon B^2\\
& \leq \sqrt{C_1\cdot v^2\cdot\frac{d+\log(1/\alpha)}{n} }\norm{\Delta}_{(\Sigma_\cD+\lambda I)}  + 4 \lambda \gamma_\epsilon B^2~.
\end{align*}
Solving for the above inequality, we get 
\begin{align*}
\norm{\Delta}_{(\Sigma_\cD+\lambda I)}  \leq C_2\cdot \sqrt{\frac{v^2}{\gamma_\epsilon^2}\cdot\frac{d+\log(1/\alpha)}{n} +\lambda B^2 }
\end{align*}
for some constant $C_2 > 0$.
Now note that $\frac{v}{\gamma_\epsilon}=\frac{1}{\gamma}\cdot\frac{e^\epsilon+1}{e^\epsilon -1}$. Hence, we get
\begin{align*}
  \norm{\hat\theta_{\texttt{RR}}-\theta^*}_{(\Sigma_\cD+\lambda I)}  \leq \frac{C}{\gamma}\cdot\frac{e^\epsilon+1}{e^\epsilon -1}  \sqrt{\frac{d+\log(1/\alpha)}{n}}+ C'\cdot \sqrt{\lambda} B,
\end{align*}
for some $C,C' > 0$,
which holds for any $\epsilon \in (0,\infty)$. This completes our proof.

\subsubsection{Logits}
For any $\theta \in \Real^d$, the logits (log-odds) of the probability that the clear-text label $y_i=1$ given $x_i$ is 
\begin{align*}
    \text{logit}(p_{i,1}) \!=\! \log\frac{p_{i,1}}{p_{i,0}}\!=\!\log\frac{\sigma(x_i^\top \theta)}{1-\sigma(x_i^\top \theta)},
\end{align*}
where the same for randomized label $\widetilde y_i=1$ is
\begin{align*}
    \text{logit}( \widetilde p_{i,1})=\log\frac{\widetilde p_{i,1}}{\widetilde p_{i,0}}\! =\! \log \frac{\sigma(x_i^\top \theta)\sigma(\epsilon) \!+\! (1\!-\!\sigma(x_i^\top \theta)) (1\!-\!\sigma(\epsilon))}{(1\!-\!\sigma(x_i^\top \theta))\sigma(\epsilon)\! +\! \sigma(x_i^\top \theta) (1\!-\!\sigma(\epsilon))}~.
\end{align*}
By Jensen's inequality and basics of linear programming, we get
\begin{align*}
    \text{logit}( \widetilde p_{i,1}) \leq \log \left(\frac{\sigma(x_i^\top \theta)\sigma(\epsilon) + (1-\sigma(x_i^\top\theta)) (1-\sigma(\epsilon))}{(1-\sigma(x_i^\top \theta ))^{\sigma(\epsilon)}  \sigma(x_i^\top \theta)^{(1-\sigma(\epsilon))} }\right) \leq \log \left(\frac{\max \lbrace \sigma(\theta^\top x_i), 1-\sigma(\theta^\top x_i)\rbrace}{(1-\sigma(x_i^\top \theta ))^{\sigma(\epsilon)}  \sigma(x_i^\top \theta)^{(1-\sigma(\epsilon))} }\right)~.
\end{align*}
Now, if $p_{i,1} \ge p_{i,0}$, then we have
\begin{align*}
  \text{logit}( \widetilde p_{i,1}) \leq \log \left(\frac{ \sigma(\theta^\top x_i)}{(1-\sigma(x_i^\top \theta ))^{\sigma(\epsilon)}  \sigma(x_i^\top \theta)^{(1-\sigma(\epsilon))} }\right)\leq \sigma(\epsilon)  \log \left(\frac{\sigma(x_i^\top \theta)}{1-\sigma(x_i^\top\theta)} \right)=  \sigma(\epsilon)\cdot\text{logit}(  p_{i,1})~.
\end{align*}
Similarly, observe that
\begin{align*}
    \text{logit}( \widetilde p_{i,0}) \leq \log \left(\frac{(1-\sigma(x_i^\top \theta))\sigma(\epsilon) + \sigma(x_i^\top\theta) (1-\sigma(\epsilon))}{\sigma(x_i^\top \theta )^{\sigma(\epsilon)}  (1-\sigma(x_i^\top \theta))^{(1-\sigma(\epsilon))} }\right) \leq \log \left(\frac{\max \lbrace \sigma(\theta^\top x_i), 1-\sigma(\theta^\top x_i)\rbrace}{\sigma(x_i^\top \theta )^{\sigma(\epsilon)}  (1-\sigma(x_i^\top \theta))^{(1-\sigma(\epsilon))} }\right)~.
\end{align*}
Now, if $p_{i,0} \ge p_{i,1}$, then we have
\begin{align*}
  \text{logit}( \widetilde p_{i,0}) \leq \log \left(\frac{ 1-\sigma(\theta^\top x_i)}{\sigma(x_i^\top \theta )^{\sigma(\epsilon)}  (1-\sigma(x_i^\top \theta))^{(1-\sigma(\epsilon))} }\right)\leq \sigma(\epsilon)  \log \left(\frac{1-\sigma(x_i^\top \theta)}{\sigma(x_i^\top\theta)} \right)=  \sigma(\epsilon)\cdot\text{logit}(  p_{i,0})~.
\end{align*}
Since $\sigma(\epsilon) \!\in\! (1/2,1)$ for any $\epsilon > 0$, this impies that whenever $y_i$ is more likely to occur than $1-y_i$ in the clear-text, the log-odds of predicting $y_i$ under $\epsilon$-randomization given by \eqref{eq:RR} is at most $\sigma(\epsilon)$-th fraction of the corresponding log-odds in the clear-text.

Therefore, we work with the predicted \emph{scores} of randomized labels:
\begin{align*}
 \! \hat p_{i,1}\!=\!\!\frac{ \sigma(x_i^\top \theta)^{\sigma(\epsilon)}}{ (1\!-\!\sigma(x_i^\top \theta))^{(1\!-\!\sigma(\epsilon))}},\quad \hat p_{i,0}\!=\!\!\frac{ (1\!-\!\sigma(x_i^\top \theta))^{\sigma(\epsilon)}} {\sigma(x_i^\top \theta)^{(1\!-\!\sigma(\epsilon))}}~, 
\end{align*}
which have the property
\begin{align*}
    \log\frac{\hat p_{i,1}}{\hat p_{i,0}}\!=\!\log\frac{\sigma(x_i^\top \theta)}{1-\sigma(x_i^\top \theta)} \!=\! \text{logit}(p_{i,1})~, 
\end{align*}
i.e., the log-odds of predicting $y_i$ under $\epsilon$-randomization is same as the corresponding log-odds in the clear-text.
\subsection{Proof of Theorem~\ref{thm:sgd_rr}}
\label{proof:sgd-rr}

\begin{algorithm}[!t]
  \caption{SGD with Randomized Response}
  \label{alg:LabelDP-main}
\begin{algorithmic}[1]
  \STATE {\bfseries Parameters:} privacy budget $\epsilon$; i.i.d dataset $\cD = (x_i,y_i)_{i=1}^n$; parameter space $\Theta_B$; learning rate $(\eta_t)_{t \ge 1}$.
  \STATE {\bfseries Initialize:} $\hat\theta_1 = 0$.
 \FOR{$t\!=\!1, \ldots, n$}
    \STATE Take data point $(x_t, y_t)$ from the dataset $\cD$.
    \STATE Let $\widetilde{y}_t$ be the output of RR mechanism on $y_t$, i.e., 
    \begin{align*}
        \prob{\tilde{y}_t = y_t} = \frac{e^{\epsilon}}{1+e^{\epsilon}} \text{ and }  \prob{\tilde{y}_t \neq y_t} = \frac{1}{1+e^{\epsilon}}~. 
    \end{align*}
    \STATE Compute the gradient \begin{align*}
        \hat{g}_t = \frac{\sum_{y\in \lbrace 0,1\rbrace} \nabla_{\hat\theta_t} \log  p_{t,y}}{e^{\epsilon} + 1}- \nabla_{\hat\theta_t} \log  p_{t,\widetilde y_t} ~,
\end{align*}
where $p_{t,y}$ denotes the probability of observing $y \in \lbrace 0,1\rbrace$ at round $t$, see \eqref{eq:gen-np}.
    \STATE Update the estimate $\hat\theta_{t+1} = \Pi_{\Theta_B}(\hat\theta_t - \eta_t \hat{g}_t)$
  \ENDFOR
  \STATE Output $\hat\theta_{\text{SGD-RR}}=\hat\theta_{n+1}$.
\end{algorithmic}
\end{algorithm}

We divide the proof of Theorem~\ref{thm:sgd_rr} into the following steps. For ease of presentation, the complete algorithm for computing $\hat\theta_{\texttt{SGD-RR}}$ is given in Algorithm~\ref{alg:LabelDP-main}.

\textbf{Step 1:} For each $t \ge 1$, we aim to show that there exists some constants $\lambda$, $G$ and random variable $\hat{z}_t$ such that  
\begin{align}
\label{eq:goal1}
    \norm{\widehat\theta_{t+1} - \theta^*}^2 \le (1-2/t)\norm{\widehat\theta_{t} - \theta^*}^2 + \frac{2}{\lambda t} \inner{\hat{z}_t}{\theta_t - \theta*} + \left(\frac{G}{\lambda t}\right)^2.
\end{align}
To this end, first recall that that the gradient at round $t$ is given by
\begin{align*}
        \hat{g}_t = \frac{\sum_{y\in \lbrace 0,1\rbrace} \nabla_{\hat\theta_t} \log  p_{t,y}}{e^{\epsilon} + 1}- \nabla_{\hat\theta_t} \log  p_{t,\widetilde y_t} ~,
\end{align*}
where $p_{t,y}, y \in \lbrace 0,1\rbrace$ denotes the probability of observing $y$ at round $t$, see \eqref{eq:gen-np}.

Now, we define $\hat{z}_t := \mathbb{E}[\hat{g}_t | \cF_{t-1}] - \hat{g}_t$, where $\cF_{t-1}=\sigma\big(\lbrace x_s,y_s,\widetilde y_s\rbrace_{s=1}^{t-1}\big)$ is the $\sigma$-algebra generated by all the random variables up to and including round $t-1$. This conditioning is necessary since $\widehat\theta_t$ depends on randomness till round $t-1$.
Then, we have 
\begin{align}
    \norm{\widehat\theta_{t+1} - \theta^*}^2 &= \norm{\Pi_{\Theta_B}(\widehat\theta_t - \eta_t \hat{g}_t) - \theta^*}^2\nonumber\\
    &\le \norm{\widehat\theta_t - \eta_t \hat{g}_t - \theta^*}^2\nonumber\\
    &= \norm{\widehat\theta_t - \theta^*}^2 - 2 \eta_t \inner{\hat{g}_t}{\widehat\theta_t - \theta^*} + \eta_t^2 \norm{\hat{g}_t}^2 \nonumber\\
    &= \norm{\widehat\theta_t - \theta^*}^2 -2 \eta_t \inner{\mathbb{E}[\hat{g}_t | \cF_{t-1}] }{\widehat\theta_t - \theta^*} + 2\eta_t \inner{\hat z_t}{\hat\theta_t - \theta^*} + \eta_t^2 \norm{\hat{g}_t}^2\label{eq:goal1-temp}
\end{align}
where the last equality holds by definition of $\hat{z}_t$, i.e., $\hat{g}_t = \mathbb{E}[\hat{g}_t | \cF_{t-1}] -\hat  z_t$.

To bound the above, we need to study the term $\inner{\mathbb{E}[\hat{g}_t | \cF_{t-1}] }{\hat\theta_t - \theta^*}$. First note that $\hat{g}_t$ is an unbiased and scaled estimate of the clear-text gradient $g_t = -\nabla_{\hat\theta_t} \log  p_{t,y_t}$ as 
\begin{align*}
     \mathbb{E}[\hat{g}_t | \cF_{t-1},x_t,y_t]
    &=  \frac{\sum_{y\in \lbrace 0,1\rbrace} \nabla_{\hat\theta_t} \log  p_{t,y}}{e^{\epsilon} + 1} - \left( \frac{e^{\epsilon}}{e^{\epsilon}+1} \nabla_{\hat\theta_t} \log  p_{t,y_t} + \frac{1}{e^{\epsilon}+1} \nabla_{\hat\theta_t} \log  p_{t,1-y_t}  \right)\\
    &=-\frac{e^\epsilon-1}{e^\epsilon+1} \nabla_{\hat\theta_t} \log  p_{t,y_t} = (2\sigma(\epsilon)-1)g_t~.
\end{align*}
Then, by tower property of conditional expectation, we have
\begin{align*}
    \mathbb{E}[\hat{g}_t | \cF_{t-1}] = \mathbb{E}[\mathbb{E}[\hat{g}_t | \cF_{t-1},x_t,y_t] | \cF_{t-1}] = (2\sigma(\epsilon)-1)\mathbb{E}[g_t| \cF_{t-1}]= (2\sigma(\epsilon)-1)\mathbb{E}[(\sigma(x_t^\top \hat\theta_t)-y_t)x_t| \cF_{t-1}]~,
\end{align*}
where the final equality holds by definition of $g_t$. One more application of tower property gives us
\begin{align*}
    \mathbb{E}[(\sigma(x_t^\top \hat\theta_t)-y_t)x_t| \cF_{t-1}] = \mathbb{E}[\mathbb{E}[(\sigma(x_t^\top \hat\theta_t)-y_t)x_t|\cF_{t-1},x_t]| \cF_{t-1}]= \mathbb{E}[(\sigma(x_t^\top \hat\theta_t)-\sigma(x_t^\top \hat\theta^*))x_t|\cF_{t-1}]~.
\end{align*}
Since $\hat\theta_t$ is deterministic given $\cF_{t-1}$, we can bound
$\inner{\mathbb{E}[\hat{g}_t | \cF_{t-1}] }{\hat\theta_t - \theta^*}$ using the above two equation as
\begin{align}
    \inner{\mathbb{E}[\hat{g}_t | \cF_{t-1}] }{\hat\theta_t - \theta^*} &= (2\sigma(\epsilon)-1)\mathbb{E}[\inner{(\sigma(x_t^{\top}\hat \theta_t) - \sigma((x_t^{\top} \theta^*) )x_t}{\hat\theta_t - \theta^*} |\cF_{t-1}]\nonumber\\
    &\gep{a} \gamma(2\sigma(\epsilon)-1) \mathbb{E}[ (x_t^{\top} (\hat\theta_t - \theta^*))^2 |\cF_{t-1}]\nonumber\\
    & \gep{b} \gamma_\epsilon (\hat\theta_{t} - \theta^*)^{\top}\mathbb{E}[x_t x_t^{\top} |\cF_{t-1}](\hat\theta_{t} - \theta^*)\nonumber\\
    &\gep{c}\gamma_\epsilon \kappa \norm{\hat\theta_t -\theta^*}^2~.\label{eq:gt-bound}
\end{align}
Here (a) holds by mean-value theorem and by noting that $\sigma'(\theta^\top x_i) \ge \gamma$ for all $\theta \in \Theta_B$ under Assumption~\ref{ass:bound}, where $\gamma = \frac{1}{2 + \exp(-2LB) + \exp(2LB)}$; (b) holds by defining $\gamma_\epsilon = (2\sigma(\epsilon)-1)\gamma$; (c) holds by Assumption~\ref{ass:cov} and by noting that $x_t$ is independent of $\cF_{t-1}$.

Now, plugging~\eqref{eq:gt-bound} into~\eqref{eq:goal1-temp}, yields 
\begin{align*}
     \norm{\hat\theta_{t+1} - \theta^*}^2 &\le \norm{\hat\theta_t - \theta^*}^2(1 -2\eta_t \gamma \kappa) + 2\eta_t \inner{z_t}{\hat\theta_t - \theta^*} + \eta_t^2 \norm{\hat{g}_t}^2\\
     &\lep{a} \norm{\hat\theta_t - \theta^*}^2(1 -2\eta_t \gamma_\epsilon \kappa) + 2\eta_t \inner{z_t}{\hat\theta_t - \theta^*} + \eta_t^2 G^2\\
     &\ep{b}  (1-2/t)\norm{\hat\theta_{t} - \theta^*}^2 + \frac{2}{\lambda t} \inner{\hat{z}_t}{\hat\theta_t - \theta*} + \left(\frac{G}{\lambda t}\right)^2
\end{align*}
where (a) holds by bounding $\norm{\hat{g}_t} \le G := 4L $ under Assumption~\ref{ass:bound}; (b) holds by letting $\lambda := \gamma_\epsilon \kappa$ and $\eta_t:= \frac{1}{\lambda t}$; Hence, we have established~\eqref{eq:goal1}.

\textbf{Step 2:} We aim to show that for all $t\ge 2$
\begin{align}
\label{eq:goal2}
\norm{\hat\theta_{t+1} - \theta^*}^2 \le \frac{2}{\lambda (t-1) t}\sum_{i=2}^t (i-1)\inner{\hat{z}_i}{\hat\theta_i - \theta^*} + \frac{G^2}{\lambda^2 t^2}.
\end{align}

To this end, we basically expand the recursion in~\eqref{eq:goal1} till $t=2$ and simple algebra leads to the result. This step directly follows from~\cite{rakhlin2011making}.

\textbf{Step 3:} We will apply one particular version of Freedman's inequality to control the concentration of $\sum_{i=2}^t (i-1)\inner{\hat{z}_i}{\hat\theta_i - \theta^*}$ in~\eqref{eq:goal2}. In particular, we will apply~\cite[Lemma 3]{rakhlin2011making} to bound this sum of martingale differences for all $t \le n$. This needs to hold for all $t$ since we will rely on induction later. 

To start with, we let $Z_i = \inner{\hat{z}_i}{\hat\theta_i -\theta^*}$. Then, we have the conditional expectation of $Z_i$ given $\cF_{i-1}$ is $\mathbb{E}[Z_i |\cF_{i-1}] = 0$ and the conditional variance is $\text{Var}[Z_i |\cF_{i-1}] \le 4G^2 \norm{\hat\theta_i -\theta^*}^2$, which holds by $\norm{\hat{z}_i} \le 2G$.
Now consider the sum $\sum_{i=2}^t (i-1)\inner{\hat{z}_i}{\theta_i - \theta^*}$ in~\eqref{eq:goal2}. We need to check two conditions: (i) The sum of conditional variance satisfies
\begin{align*}
    \sum_{i=2}^t \text{Var}[(i-1)Z_i |\cF_{i-1}] \le 4G^2 \sum_{i=2}^t (i-1)^2 \norm{\hat\theta_i - \theta^*}^2~;
\end{align*}
(ii) Uniform upper bound on each term satisfies
\begin{align*}
   |(i-1)Z_i| \le 2G (t-1)\norm{\hat\theta_i - \theta^*} \lep{a} \frac{2G^2 (t-1)}{\lambda},
\end{align*}
where (a) comes from~\eqref{eq:gt-bound} and substituting $\lambda = \gamma_\epsilon \kappa$. To see this, by Cauchy-Schwartz inequality, we have $\gamma_\epsilon \kappa \norm{\hat\theta_t - \theta^*}^2 \le G \norm{\hat\theta_t - \theta^*}$, and hence $\norm{\hat\theta_t - \theta^*} \le G/\lambda$ for all $t$. We can then apply~\cite[Lemma 3]{rakhlin2011making} to obtain that for $n \ge 4$ and $\alpha \in (0,1/e)$, with probability at least $1-\alpha$, it holds for all $t \le n$ that
\begin{align}
\label{eq:goal3}
    \sum_{i=2}^t (i-1)Z_i \le 8G \max\left\{\sqrt{\sum_{i=2}^t (i-1)^2 \norm{\hat\theta_i - \theta^*}^2}, \frac{G(t-1)}{\lambda}\sqrt{\log(\log n/\alpha)}\right\}\sqrt{\log(\log n/\alpha)}.
\end{align}

\textbf{Step 4:} Once we obtain~\eqref{eq:goal3}, the remaining step is all about induction and algebra, which follows the same procedures as in~\cite{rakhlin2011making}. After all, we will obtain with probability at least $1-\alpha$ that for all $t \le n$,
\begin{align*}
    \norm{\hat\theta_{t+1} - \theta^*}^2 &\le \frac{(624\log(\log n/\alpha)+1)G^2}{\lambda^2 t}\\
    &=C L^2 \left(\frac{e^{\epsilon}+1}{e^{\epsilon}-1}\right)^2\cdot \frac{\log(\log n/\alpha)+1}{\gamma^2 \kappa^2 t},
\end{align*}
for some absolute constant $C$. Setting $t=n$ completes the proof.

\subsection{Estimation Error under Placket-Luce Model}\label{proof:PL}

Let $s$ be a state and $a_1,\ldots,a_K$ be $K$ actions to be compared at that state. Let the label/preference feedback $y \in \lbrace 1,2,\ldots,K\rbrace$ indicates which action is most preferred by human labeler.
Let $x_{i,j}=\phi(s,a_i)-\phi(s,a_j), 1\le i \neq j \le K$ be the feature difference between actions $a_i$ and $a_j$ at state $s$. Define the population covariance matrix
\begin{align*}
    \Sigma_{i,j} = \mathbb{E}_{s \sim \rho(\cdot), (a_1,\ldots,a_K)\sim \mu(\cdot | s)} \left[ x_{i,j} x_{i,j}^{\top} \right]~.
\end{align*}
\begin{assumption}[Coverage of feature space]
\label{ass:cov_PL}
The data distributions $\rho,\mu$ are such that $\lambda_{\min}(\Sigma_{i,j}) \ge \kappa$ for some constant $\kappa \!>\! 0$ for all $1\le i \neq j \le K$.
\end{assumption}
This is a coverage assumption on the state-action feature space. The next result bounds the estimation error of  $\hat\theta_{\texttt{SGD-KRR}}$ in $\ell_2$-norm.

\begin{theorem}[Estimation error of $\hat\theta_{\texttt{SGD-KRR}}$ under Placket-Luce model]
\label{thm:sgd_krr}
    Fix $\alpha \in (0,1/e)$ and $\epsilon > 0$. Then, under the Placket Luce model \eqref{eq:PL} and under Assumptions~\ref{ass:bound} and~\ref{ass:cov_PL} and setting $\eta_t = \frac{1}{\gamma \kappa}$, we have, with probability at least $1-\alpha$, 
    \begin{align*}
\norm{\hat\theta_{\texttt{SGD-KRR}} \!-\! \theta^*}_2 \!\le\! C \!\cdot\! \frac{L}{\gamma \kappa} \!\cdot\!\frac{e^{\epsilon}\!+K-\!1}{e^{\epsilon}\!-\!1}\!\sqrt{ \frac{\log\left(\log(n)/\alpha\right)}{ n}},
    \end{align*}
where $\gamma = \frac{e^{-4LB}}{2}$, $C$ is an absolute constant.
\end{theorem}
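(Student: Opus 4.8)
The plan is to follow, step for step, the four-part argument in the proof of Theorem~\ref{thm:sgd_rr}, replacing the binary Randomized Response and sigmoid ingredients by their $K$-ary and softmax analogues. Fix a round $t$ and write $\phi_k=\phi(s_t,a_k)$, $p_{t,k}(\theta)=\exp(\phi_k^\top\theta)/\sum_{j=1}^K\exp(\phi_j^\top\theta)$, and $g_t=-\nabla_{\hat\theta_t}\log p_{t,y_t}=\sum_{k=1}^K p_{t,k}(\hat\theta_t)\phi_k-\phi_{y_t}$ for the clear-text Plackett--Luce gradient; let $\cF_{t-1}$ be the $\sigma$-algebra generated by the first $t-1$ samples together with their privatized labels, so that $\hat\theta_t$ is $\cF_{t-1}$-measurable while the round-$t$ state-action is independent of $\cF_{t-1}$.

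First I would verify the conditional unbiasedness-up-to-scaling of the K-RR gradient. Since the K-RR mechanism~\eqref{eq:KRR} returns the true label $y_t$ with probability $e^\epsilon/(e^\epsilon+K-1)$ and each of the $K-1$ wrong labels with probability $1/(e^\epsilon+K-1)$,
\begin{align*}
\mathbb{E}\!\left[\nabla_{\hat\theta_t}\log p_{t,\widetilde y_t}\,\middle|\,\cF_{t-1},x_t,y_t\right]=\frac{e^\epsilon-1}{e^\epsilon+K-1}\,\nabla_{\hat\theta_t}\log p_{t,y_t}+\frac{1}{e^\epsilon+K-1}\sum_{y=1}^K\nabla_{\hat\theta_t}\log p_{t,y},
\end{align*}
so the additive correction term built into $\hat g_t$ exactly cancels the last sum, giving $\mathbb{E}[\hat g_t\mid\cF_{t-1},x_t,y_t]=\frac{e^\epsilon-1}{e^\epsilon+K-1}\,g_t$. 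Taking a further expectation over $y_t$ (generated from $\theta^*$) and using $\mathbb{E}_{y_t}[\phi_{y_t}]=\sum_k p_{t,k}(\theta^*)\phi_k$ then gives $\mathbb{E}[\hat g_t\mid\cF_{t-1}]=\frac{e^\epsilon-1}{e^\epsilon+K-1}\,\mathbb{E}\big[F(\hat\theta_t)-F(\theta^*)\,\big|\,\cF_{t-1}\big]$, where $F(\theta):=\sum_k p_{t,k}(\theta)\phi_k$ is the expected-feature map for the round-$t$ state-action.

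The heart of the argument --- and the step I expect to be the main obstacle, since there is no scalar mean-value shortcut as in the BTL case --- is the curvature lower bound on $F(\hat\theta_t)-F(\theta^*)$. The key fact is that $\nabla_\theta F(\theta)=\sum_k p_{t,k}(\theta)\phi_k\phi_k^\top-F(\theta)F(\theta)^\top=\tfrac12\sum_{l\ne m}p_{t,l}(\theta)p_{t,m}(\theta)\,x_{l,m}x_{l,m}^\top=:H(\theta)$, where $x_{l,m}=\phi_l-\phi_m$; this is precisely the Hessian of $-\log p_{t,k}(\theta)$ and, crucially, it does not depend on the realized label $k$. By the fundamental theorem of calculus $F(\hat\theta_t)-F(\theta^*)=\bar H(\hat\theta_t-\theta^*)$ with $\bar H:=\int_0^1 H\big(\theta^*+s(\hat\theta_t-\theta^*)\big)\,ds$; the segment stays in $\Theta_B$, so Assumption~\ref{ass:bound} gives $p_{t,k}(\theta)\ge e^{-2LB}/K$ pointwise and hence $\bar H\succeq\frac{e^{-4LB}}{2K^2}\sum_{l\ne m}x_{l,m}x_{l,m}^\top$. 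Conditioning on $\cF_{t-1}$ and invoking Assumption~\ref{ass:cov_PL} for each of the $K(K-1)$ ordered pairs ($\mathbb{E}[x_{l,m}x_{l,m}^\top]=\Sigma_{l,m}\succeq\kappa I$) yields $\mathbb{E}[\bar H\mid\cF_{t-1}]\succeq\frac{e^{-4LB}(K-1)}{2K}\kappa I\succeq\gamma\kappa I$ with $\gamma$ as in the theorem statement (up to the harmless factor $\tfrac{K-1}{K}\ge\tfrac12$). Combining with the previous paragraph, $\inner{\mathbb{E}[\hat g_t\mid\cF_{t-1}]}{\hat\theta_t-\theta^*}\ge\gamma_\epsilon\kappa\,\norm{\hat\theta_t-\theta^*}^2$ with $\gamma_\epsilon:=\frac{e^\epsilon-1}{e^\epsilon+K-1}\gamma$, and a crude triangle inequality bounds $\norm{\hat g_t}\le G:=4L$ (using $\norm{\nabla_{\hat\theta_t}\log p_{t,y}}\le 2L$ and $K/(e^\epsilon+K-1)\le 1$).

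With these two facts in hand the proof finishes exactly as in Steps~1--4 of the proof of Theorem~\ref{thm:sgd_rr}: set $\lambda:=\gamma_\epsilon\kappa$ and $\eta_t:=1/(\lambda t)$; unroll the recursion for $\norm{\hat\theta_{t+1}-\theta^*}^2$; define $\hat z_t:=\mathbb{E}[\hat g_t\mid\cF_{t-1}]-\hat g_t$ and control the martingale sum $\sum_i(i-1)\inner{\hat z_i}{\hat\theta_i-\theta^*}$ via the Freedman-type bound of~\cite[Lemma~3]{rakhlin2011making} (using $\norm{\hat z_i}\le 2G$ and the a priori bound $\norm{\hat\theta_i-\theta^*}\le G/\lambda$); and close the induction to get, with probability at least $1-\alpha$ and for all $t\le n$, $\norm{\hat\theta_{t+1}-\theta^*}^2\le C\,G^2\log(\log n/\alpha)/(\lambda^2 t)$. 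Setting $t=n$ and substituting $G=4L$ and $\lambda=\frac{e^\epsilon-1}{e^\epsilon+K-1}\gamma\kappa$ gives the claimed bound. The only genuinely new work is the curvature estimate of the third paragraph; everything else is a mechanical $K$-ary rewrite of the BTL analysis, and the one point to watch is that the $1/K^2$ in the pointwise lower bound on $H(\theta)$ is exactly offset by the $K(K-1)$ pairs supplied by Assumption~\ref{ass:cov_PL}, so that $\gamma$ does not degrade with $K$.
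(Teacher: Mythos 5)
Your proof is correct and reaches the same bound, but the curvature step -- which you correctly flag as the only genuinely new piece of work -- is organized along a somewhat different line than the paper's. The paper proceeds by defining the expected log-loss $G_\theta(x)=\mathbb{E}_{\pi\sim\mathbb{P}_{\theta^*}[\cdot|x]}[l_\theta(a_{\pi(1)},x)]$, computing the Hessian of $l_\theta(a_{\pi(1)},x)$ directly from its closed form, showing that $G_\theta$ is $\gamma$-strongly convex with respect to the semi-norm $\norm{\cdot}_{\Sigma(x)}$, and then chaining two applications of strong convexity (once against $\theta^*$, once via first-order optimality of $\theta^*$ as minimizer of $G_\theta$) to get $\inner{\nabla G_\theta(x)}{\theta-\theta^*}\ge\gamma\norm{\theta-\theta^*}_{\Sigma(x)}^2$. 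You instead work directly with the expected-feature map $F(\theta)=\sum_k p_{t,k}(\theta)\phi_k$, observe that $\nabla_\theta F$ is exactly the label-independent softmax Hessian $H(\theta)$, and apply the vector-valued fundamental theorem of calculus to write $F(\hat\theta_t)-F(\theta^*)=\bar H(\hat\theta_t-\theta^*)$ with $\bar H=\int_0^1 H(\theta^*+s(\hat\theta_t-\theta^*))\,ds$, then lower-bound $\bar H$ pointwise. The two arguments rest on the identical matrix estimate ($H(\theta)\succeq\gamma\,\Sigma(\pi,x)$, plus the coverage assumption to pass to the $\ell_2$-norm), and both exploit the crucial fact that the softmax Hessian does not depend on the realized label; your FTC route simply bypasses the detour through optimality of $\theta^*$ and makes the mean-value structure explicit, which is the natural $K$-ary substitute for the scalar mean-value theorem used in the BTL case. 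The constant bookkeeping ($\gamma = e^{-4LB}/2$ up to a $\frac{K-1}{K}$ factor absorbed into $C$, and $G=4L$ via $\|\nabla\log p_{t,y}\|\le 2L$) matches the paper, and Steps 1--4 of Theorem~\ref{thm:sgd_rr} carry over verbatim as you describe.
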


\begin{proof}
We will first show that for all $t \ge 1$, the parameter updates satisfy
\begin{align}\label{eq:goal-PL}
 \inner{\mathbb{E}[\hat{g}_t | \cF_{t-1}] }{\hat\theta_t - \theta^*} \ge \gamma_{K,\epsilon}\kappa  \norm{\hat\theta_t -\theta^*}^2~, 
\end{align}
where $\hat{g}_t = \frac{\sum_{y=1}^K  \nabla_{\hat\theta_t} \log  p_{t,y}}{e^{\epsilon} +K- 1}- \nabla_{\hat\theta_t} \log  p_{t,\widetilde y_t}$ is the gradient, $\cF_{t-1}=\sigma\big(\lbrace x_s,y_s,\widetilde y_s\rbrace_{s=1}^{t-1}\big)$ is the $\sigma$-algebra generated by all the random variables up to and including round $t-1$, $\gamma_{K,\epsilon}:=\gamma \frac{e^\epsilon - 1}{e^\epsilon + K -1}$ and $\gamma=e^{-4LB}/2$. Then, one can follow the steps used in the proof of Theorem~\ref{thm:sgd_rr} to derive this result.

Let's now establish \eqref{eq:goal-PL}. To this end, let $\Pi$ be the set of all permutations $\pi : [K] \to [K]$ that denotes a ranking over all $K$ actions given by a human labeler, where $a_{\pi(1)}$ denotes the highest-ranked action. Under the Placket-Luce model, one can compute the probability of observing the permutation $\pi \in \Pi$ as
\begin{align*}
\mathbb{P}_{\theta^*}\!\!\left[\pi|s, a_1,\ldots, a_K\right] \!=\! \prod_{j=1}^K\frac{\exp(\phi(s,a_{\pi(j)})^\top \theta^*)}{\sum_{k'=j}^{K}\exp(\phi(s,a_{\pi(k')})^\top \theta^*) }~.
\end{align*}
Define, with an abuse of notation, $x = (s,a_1,\ldots,a_K)$ and $x_{\pi(j)}=\phi(s,a_{\pi(j)})$ for all $j \in [K]$. This lets us denote for any $\theta \in \Real^d$:
\begin{align*}
\mathbb{P}_{\theta}\!\left[\pi|x\right] \!=\!  \prod_{j=1}^K \mathbb{P}_{\theta}\!\left[\pi(j)|x\right]~,\,\, \text{where}\,\, \mathbb{P}_{\theta}\!\left[\pi(j)|x\right]=\frac{\exp(x_{\pi(j)}^\top \theta)}{\sum_{k'=j}^{K}\exp(x_{\pi(k')}^\top \theta) }~.
\end{align*}
The negative log-likelihood (log-loss) of predicting the the highest-ranked action $a_{\pi(1)}$ given $x$ is
\begin{align*}
    l_\theta(a_{\pi(1)},x) := -\log \mathbb{P}_{\theta}\!\left[\pi(1)|x\right] = -\log\frac{\exp(x_{\pi(1)}^\top \theta)}{\sum_{k'=1}^{K}\exp(x_{\pi(k')}^\top \theta) }~.
\end{align*}
The expected log-loss takes the form
\begin{align*}
G_\theta(x):=\mathbb{E}_{\pi \sim \mathbb{P}_{\theta^*}[\cdot |x]}\big[l_\theta(a_{\pi(1)},x)\big] = \sum_{\pi \in \Pi} \mathbb{P}_{\theta^*}\!\left[\pi|x\right]l_{\theta}\!\left(a_{\pi(1)},x\right)~.
\end{align*}
This yields the following:
\begin{align*}
 \nabla^2 G_\theta(x) = \sum_{\pi \in \Pi} \mathbb{P}_{\theta^*}\!\left[\pi|x\right]\nabla^2  l_{\theta}(a_{\pi(1)},x)~. 
\end{align*}
Note that the following holds  \citep{zhu2023principled}:
\begin{align*}
    \nabla l_{\theta}(a_{\pi(1)},x) &=  \sum_{k=1}^K \frac{\exp(x_{\pi(k)}^\top \theta)}{\sum_{k'=1}^{K}\exp(x_{\pi(k')}^\top \theta) } \left( x_{\pi(1)}-x_{\pi(k)}\right)~,\\
    \nabla^2 l_{\theta}(a_{\pi(1)},x) &=  \sum_{k=1}^K \sum_{k'=1}^K\frac{\exp(x_{\pi(k)}^\top \theta) \cdot \exp(x_{\pi(k')}^\top \theta)}{2\left(\sum_{k'=1}^{K}\exp(x_{\pi(k')}^\top \theta) \right)^2} \left( x_{\pi(k)}-x_{\pi(k')}\right) \left( x_{\pi(k)}-x_{\pi(k')}\right)^\top~.
\end{align*}
Under Assumption~\ref{ass:bound}, we have $-LB \le \phi(s,a)^\top \theta \le LB$ for all $\theta \in \Theta_B$. Define $x_{\pi,k,k'}=x_{\pi(k)}-x_{\pi(k')}$ for all $k,k' \in [K]$. Then, for any $v \in \Real^d$ and $\theta \in \Theta_B$, we have
\begin{align*}
    v^\top  \nabla^2 l_\theta(a_{\pi(1)},x) v \ge \frac{e^{-4LB}}{2}\cdot v^\top\left( \frac{1}{K^2}\sum_{k=1}^K \sum_{k'=1}^K  x_{\pi,k,k'}x_{\pi,k,k'}^\top\right) v~.
\end{align*}
Define the matrix
\begin{align*}
\Sigma(\pi,x):=\frac{1}{K^2}\sum_{k=1}^K \sum_{k'=1}^K  x_{\pi,k,k'}x_{\pi,k,k'}^\top~.
\end{align*}
Then, for $\theta \in \Theta_B$ and $\pi \in \Pi$, the loss function $l_\theta(a_{\pi(1)},x)$ is $\gamma=\frac{e^{-4LB}}{2}$ strongly convex w.rt. the semi-norm $\norm{\cdot}_{\Sigma(\pi,x)}$. This further implies that $G_\theta(x)$ is $\gamma$ strongly convex w.r.t. the semi-norm $\norm{\cdot}_{\Sigma(x)}$, where $\sigma(x):=\sum_{\pi \in \Pi} \mathbb{P}_{\theta^*}\!\left[\pi|x\right]\Sigma(\pi,x)$.

Since $\theta^* \in \Theta_B$, we have from definition of strong convexity,
\begin{align*}
    G_{\theta^*}(x) \ge G_{\theta}(x) + \inner{\nabla G_{\theta}(x)}{\theta^*-\theta} +\frac{\gamma}{2}\norm{\theta -\theta^*}^2_{\Sigma(x)} \implies \inner{\nabla G_{\theta}(x)}{\theta-\theta^*} \ge G_{\theta}(x) - G_{\theta^*}(x) +\frac{\gamma}{2}\norm{\theta -\theta^*}^2_{\Sigma(x)}  ~.
\end{align*}
Since $\theta^* \in \argmin_{\theta \in \Theta_B} G_{\theta}(x)$, we have from definition of first-order optimality of convex functions,
\begin{align*}
    G_\theta(x) - G_{\theta^*}(x) \ge \inner{\nabla G_{\theta^*}(x)}{\theta-\theta^*} +\frac{\gamma}{2}\norm{\theta -\theta^*}^2_{\Sigma(x)} \ge \frac{\gamma}{2}\norm{\theta -\theta^*}^2_{\Sigma(x)}~.
\end{align*}
Combining the above, we have for any $\theta \in \Theta_B$:
\begin{align*}
    \inner{\nabla G_{\theta}(x)}{\theta-\theta^*} \ge \gamma\norm{\theta -\theta^*}^2_{\Sigma(x)}\implies \inner{\mathbb{E}_{\pi \sim \mathbb{P}_{\theta^*}}\big[\nabla l_\theta(a_{\pi(1)},x)|x\big]}{\theta-\theta^*} \ge \gamma\norm{\theta -\theta^*}^2_{\Sigma(x)}~. 
\end{align*}
Now, taking expectation over $x \sim (\rho \times \mu)$, we have 
\begin{align*}
\inner{\mathbb{E}_{x\sim (\rho \times \mu),\pi \sim \mathbb{P}_{\theta^*}[\cdot | x]}\big[\nabla l_\theta(a_{\pi(1)},x)\big]}{\theta-\theta^*} &\ge \gamma(\theta -\theta^*)^\top \mathbb{E}_x\left[\Sigma(x)\right] (\theta -\theta^*)\\
& = \gamma  (\theta -\theta^*)^\top \mathbb{E}_x\left[\sum_{\pi \in \Pi} \mathbb{P}_{\theta^*}[\pi|x] \Sigma(\pi,x)\right] (\theta -\theta^*)~.
\end{align*}


Note that, by the coverage Assumption~\ref{ass:cov_PL}, we have $\mathbb{E}_x\left[\Sigma(\pi,x) \right] \ge \kappa$ for all $\pi \in \Pi$.
This yields for any $v \in \Real^d$,
\begin{align*}
   \forall \pi \in \Pi,\, v^\top \mathbb{E}_x\left[\Sigma(\pi,x) \right] v \ge \kappa \norm{v}^2 \implies \mathbb{E}_x\left[\min_{\pi \in \Pi}v^\top \Sigma(\pi,x) v \right]  \ge \kappa\norm{v}^2~.
\end{align*}
This further yields 
\begin{align*}
    v^\top \mathbb{E}_x\left[\sum_{\pi \in \Pi} \mathbb{P}_{\theta^*}[\pi|x] \Sigma(\pi,x) \right] v &= \mathbb{E}_x\left[\sum_{\pi \in \Pi} \mathbb{P}_{\theta^*}[\pi|x] v^\top \Sigma(\pi,x) v \right]\\
    & \ge \mathbb{E}_x\left[\min_{\pi \in \Pi}  v^\top \Sigma(\pi,x) v \right] \ge \kappa\norm{v}^2~.
\end{align*}
This implies for any $\theta \in \Theta_B$, the following:
\begin{align}\label{eq:convex-coverage}
\inner{\mathbb{E}_{x\sim (\rho \times \mu),\pi \sim \mathbb{P}_{\theta^*}[\cdot |x]}\big[\nabla l_\theta(a_{\pi(1)},x)\big]}{\theta-\theta^*} \ge  \gamma \kappa  \norm{\theta-\theta^*}^2~.
\end{align}
Now, let $\pi_t$ be the permutation (ranking) given by human labeler at round $t$, i.e. $\pi_t(1)=y_t$, and $\widetilde \pi_t$ be the (noisy) ranking after randomization by KRR mechanism~\eqref{eq:KRR}, i.e. $\widetilde\pi_t(1)=\widetilde y_t$. Note that, we have
\begin{align*}
   \prob{\widetilde{\pi}_t(1) = \pi_t(1)} = \frac{e^{\epsilon}}{e^{\epsilon}+K -1} ~,\,\prob{\widetilde\pi_t(1) = y} = \frac{1}{e^{\epsilon}+K-1}~, \forall y \neq \pi_t(1)~. 
\end{align*}
Using this, we can re-write the gradient as 
\begin{align*}
        \hat{g}_t  = \frac{\sum_{y=1}^K  \nabla_{\hat\theta_t} \log  p_{t,y}}{e^{\epsilon} +K- 1}- \nabla_{\hat\theta_t} \log  p_{t,\widetilde \pi_t(1)} ~,
\end{align*}
where $p_{t,y}, y \in [K]$ is the probability of observing $y$ at round $t$, see \eqref{eq:PL}. 
Then, we have 
\begin{align*}
     \mathbb{E}[\hat{g}_t | \cF_{t-1},x_t,y_t]
    &=  \frac{\sum_{y\in \lbrace 0,1\rbrace} \nabla_{\hat\theta_t} \log  p_{t,y}}{e^{\epsilon} +K- 1} - \left( \frac{e^{\epsilon}}{e^{\epsilon}+1} \nabla_{\hat\theta_t} \log  p_{t,y_t} + \frac{1}{e^{\epsilon}+1} \nabla_{\hat\theta_t} \log  p_{t,1-y_t}  \right)\\
    &=-\frac{e^\epsilon-1}{e^\epsilon+K-1} \nabla_{\hat\theta_t} \log  p_{t,y_t} = \frac{e^\epsilon-1}{e^\epsilon+K-1}  \nabla l_{\hat \theta_t}(a_{\pi_t(1)},x_t) ~.
\end{align*}
Since $\hat\theta_t$ is deterministic given $\cF_{t-1}$, by tower property of conditional expectation, we have
\begin{align*}
   \inner{\mathbb{E}[\hat{g}_t | \cF_{t-1}] }{\hat\theta_t - \theta^*} &= \frac{e^\epsilon - 1}{e^\epsilon + K -1} \mathbb{E}[\inner{\nabla l_{\hat \theta_t}(a_{\pi_t(1)},x_t)}{\hat\theta_t - \theta^*} |\cF_{t-1}]\nonumber\\
    &\ge \gamma \kappa \frac{e^\epsilon - 1}{e^\epsilon + K -1}\norm{\hat\theta_t -\theta^*}^2~,
\end{align*}
where the last step follows from \eqref{eq:convex-coverage} and by noting that $x_t,\pi_t$ are independent of $\cF_{t-1}$. Defining $\gamma_{K,\epsilon}:=\gamma \frac{e^\epsilon - 1}{e^\epsilon + K -1} $, we get \eqref{eq:goal-PL}. This completes our proof.
\end{proof}

\section{Additional Details on Section~\ref{sec:central}}
\label{app:central}
\subsection{Proof of Theorem~\ref{thm:central-lower}}
\label{proof:central-lower}
Before presenting the proof, let us introduce the following useful lemma. For the label-DP in the central model, we will leverage the DP version of Assouad’s lemma in~\citet{acharya2021differentially}, which is re-stated as follows\footnote{We correct some constant factor error in the original statement.}. 
\begin{lemma}[Assouad's lemma for central DP]
\label{lem:DP-assouad}
Let the same conditions of Lemma~\ref{lem:assouad} hold. If for all $i \in [d]$, there exists a coupling $(X,Y)$ between $P_{+i}$ and $P_{-i}$ with $\ex{d_{\mathrm{Ham}}(X,Y)} \le D$ for some $D \ge 0$, then 
\begin{align*}
     R_c(\cP, \rho,\epsilon,\delta) \ge \frac{d \tau}{2\alpha} \cdot \left(0.9e^{-10\epsilon D} - 10D\delta\right) .
\end{align*}
\end{lemma}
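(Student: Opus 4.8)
The plan is to follow the proof of the non-private Assouad lemma (Lemma~\ref{lem:assouad}) verbatim up to the point where Le Cam's lemma is invoked, and then replace that step by a privatized two-point argument combining the coupling hypothesis with group privacy, in the spirit of~\citet{acharya2021differentially}. For the reduction, I would reuse the first half of the proof of Lemma~\ref{lem:assouad}: for any $(\epsilon,\delta)$-label-DP estimator $\hat\theta=\cM(X)$ and $X\sim P\in\cV$, set $\psi^*(X)=\argmin_{e\in\cE_d}\rho(\hat\theta,\theta(P_e))$; condition~(i) and the $\alpha$-triangle inequality give, exactly as before, $\max_{P\in\cV}\mathbb{E}_{X\sim P}[\rho(\hat\theta,\theta(P))]\ge\frac{\tau}{2\alpha}\sum_{i=1}^d\big(\mathbb{P}_{X\sim P_{+i}}[\psi^*_i(X)\ne 1]+\mathbb{P}_{X\sim P_{-i}}[\psi^*_i(X)\ne -1]\big)$. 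The new ingredient is that each coordinate selector $X\mapsto\psi^*_i(X)$ is a deterministic post-processing of the private output $\hat\theta$, hence is itself $(\epsilon,\delta)$-label-DP; taking the infimum over DP estimators and then, coordinate by coordinate, over $(\epsilon,\delta)$-label-DP binary tests (relabeling the values $\{+1,-1\}$ as $\{1,0\}$) yields $R_c(\cP,\rho,\epsilon,\delta)\ge\frac{\tau}{2\alpha}\sum_{i=1}^d\inf_T\big(\mathbb{P}_{P_{+i}}[T(X)=0]+\mathbb{P}_{P_{-i}}[T(X)=1]\big)$.

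The heart of the proof is the following private, coupling-based lower bound on the testing error, which takes the place of Le Cam's lemma: for every $(\epsilon,\delta)$-label-DP test $T$ and every coupling $(X,Y)$ of $P_{+i}$ and $P_{-i}$ with $\ex{d_{\mathrm{Ham}}(X,Y)}\le D$, $\mathbb{P}_{X\sim P_{+i}}[T(X)=0]+\mathbb{P}_{Y\sim P_{-i}}[T(Y)=1]\ge 0.9\,e^{-10\epsilon D}-10D\delta$. To establish this I would write $a(x)=\mathbb{P}[T(x)=0]$ and $b(y)=\mathbb{P}[T(y)=1]$ for the probabilities over the internal randomness of $T$, so the left side equals $\mathbb{E}_{(X,Y)}[a(X)+b(Y)]$. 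On the event $\{d_{\mathrm{Ham}}(X,Y)=m\}$, group privacy — $(\epsilon,\delta)$-DP implies $\big(m\epsilon,\tfrac{e^{m\epsilon}-1}{e^\epsilon-1}\delta\big)$-DP for inputs at Hamming distance $m$, together with $\tfrac{e^{m\epsilon}-1}{e^\epsilon-1}\le me^{(m-1)\epsilon}$ — gives $a(X)\ge e^{-m\epsilon}a(Y)-me^{-\epsilon}\delta\ge e^{-m\epsilon}a(Y)-m\delta$, hence, using $a(Y)+b(Y)=1$ and $e^{-m\epsilon}\le 1$, $a(X)+b(Y)\ge e^{-m\epsilon}(a(Y)+b(Y))-m\delta=e^{-m\epsilon}-m\delta\ge e^{-10\epsilon D}-10D\delta$ whenever $m\le 10D$. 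Markov's inequality for the coupling gives $\mathbb{P}[d_{\mathrm{Ham}}(X,Y)\le 10D]\ge 1-D/(10D)=9/10$, and since $a+b\ge 0$ pointwise I may restrict the expectation to that event, obtaining $\mathbb{E}_{(X,Y)}[a(X)+b(Y)]\ge 0.9(e^{-10\epsilon D}-10D\delta)$ when the parenthesis is nonnegative, and trivially $\ge 0\ge 0.9(e^{-10\epsilon D}-10D\delta)$ otherwise.

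Plugging this bound into the reduction yields $R_c(\cP,\rho,\epsilon,\delta)\ge\frac{\tau}{2\alpha}\cdot d\cdot(0.9\,e^{-10\epsilon D}-10D\delta)=\frac{d\tau}{2\alpha}(0.9\,e^{-10\epsilon D}-10D\delta)$, which is the claim. I expect the main obstacle — and the likely source of the footnoted constant correction — to be the bookkeeping in the group-privacy step: one must track how the additive $\delta$-slack amplifies to roughly $m\delta$, absorb the $e^{-m\epsilon}$ factor into it correctly, and choose the Markov truncation level so that both the exponent $10\epsilon D$ and the probability $0.9$ come out exactly as stated (the integer constraint $m\le 10D$ and the degenerate case $D=0$ each cost an extra line). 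Everything else is a faithful re-run of the non-private Assouad argument already given for Lemma~\ref{lem:assouad}.
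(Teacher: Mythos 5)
Your proof is correct. Note that the paper does not actually prove this lemma: it imports it from \citet{acharya2021differentially} with only a footnote saying the constants have been corrected, so there is no in-paper argument to compare against. What you have written is precisely the standard argument behind that result — the Assouad reduction carried over verbatim until the testing step, the observation that each coordinate selector $\psi_i^*$ inherits $(\epsilon,\delta)$-DP by post-processing, and then the replacement of Le Cam by a coupled two-point bound using group privacy ($a(x)\ge e^{-m\epsilon}a(y)-m\delta$ at Hamming distance $m$, via $e^{-m\epsilon}\tfrac{e^{m\epsilon}-1}{e^{\epsilon}-1}\le me^{-\epsilon}$) together with Markov truncation at level $10D$ and nonnegativity of the error probabilities off that event. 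Your bookkeeping is right, including the handling of the degenerate cases; in fact your per-coordinate bound $0.9\left(e^{-10\epsilon D}-10D\delta\right)=0.9e^{-10\epsilon D}-9D\delta$ is marginally stronger than the stated $0.9e^{-10\epsilon D}-10D\delta$, which is consistent with the paper's remark that the original constants needed adjustment.
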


Now, we are well-prepared to prove Theorem~\ref{thm:central-lower}.
\begin{proof}[Proof of Theorem~\ref{thm:central-lower}]
    First note that the non-private part is the same as before. Thus, we only need to focus on the second private part. 

    Choose some $\Delta >0$ and for each $e \in \cE_d = \{\pm 1\}^d$,  let $\theta_e = \Delta e$. Now we need to check the two conditions in Lemma~\ref{lem:assouad}. First note that $\rho = \norm{\cdot}_2^2$ satisfies $2$-triangle inequality, i.e., $\alpha = 2$. Also, note that for any $u, v \in \cE_d$, $\norm{\theta_u - \theta_v}_2^2 = 4\Delta^2 \sum_{i=1}^d \mathds{1}(u_i \neq v_i)$, i.e., $\tau = 2\Delta^2$. Thus, let $P_{+i}^n$ be the product distribution of $P_{+i}$ 
and similarly for $P_{-i}^n$, then by Lemma~\ref{lem:DP-assouad}, we have 
    \begin{align*}
R_c(\cP_{\text{log}},\norm{\cdot}_2^2,\epsilon,\delta) &\ge  \frac{d\Delta^2}{2} \left(0.9e^{-10\epsilon D} - 10D\delta\right)\\
         &\gep{a} \frac{d\Delta^2}{2}\left(0.9 - 10 D(\epsilon + \delta)\right)
    \end{align*}
    where $D$ is the bound on the expected hamming distance between $(X,Y)$, which is a coupling between $P_{+i}^n$ and $P_{-i}^n$; (a) holds by the fact that $e^x \ge 1+x$.

     Thus, it remains to determine $D$ in our case.  That is, we need to bound the expected hamming distance between two product distributions $P_{+i}^n$ and $P_{-i}^n$. Note that for the lower bound, it suffices to consider  $ x_k = x \in \Real^d$ with $\norm{x}_{\infty} \le 1$ for all $k \in [n]$. In this case, by the standard result on maximal coupling, we have that for the maximal coupling $(X,Y)$ between $P_{+i}^n$ and $P_{-i}^n$,
    \begin{align*}
        \ex{d_{\mathrm{Ham}}(X,Y)} =  n \norm{P_{+i} - P_{-i}}_{\mathrm{TV}}.
    \end{align*}
    Now, it remains to bound the TV-distance. To this end, by~\eqref{eq:rewrite} and joint convexity of TV distance, we have 
    \begin{align*}
        \norm{P_{+i} - P_{-i}}_{\mathrm{TV}} &= \norm{\frac{1}{|\cE_d|} \sum_{e\in \cE_d} P_{e,+i} - P_{e,-i}}_{\mathrm{TV}}\\
        &\le \frac{1}{|\cE_d|} \sum_{e \in \cE_d} \norm{P_{e,+i} - P_{e,-i} }_{\mathrm{TV}}\\
        &\le \max_{e \in \cE_d, i\in[d]} \norm{P_{e,+i} - P_{e,-i} }_{\mathrm{TV}}\\
        &= \max_{e \in \cE_d, i\in[d]} \norm{P_{e} - P_{\bar{e}^i} }_{\mathrm{TV}},
    \end{align*}
    where recall that $\bar{e}^i$ is a vector in $\cE_d$ that flips the $i$-th coordinate of $e$.
To proceed, for any $i \in [d]$, by Pinsker's inequality, we have 
\begin{align*}
    \norm{P_{e} - P_{\bar{e}^i} }_{\mathrm{TV}}^2 &\le \frac{1}{4} \left(\kl{P_{e}}{ P_{\bar{e}^i}}  + \kl{ P_{\bar{e}^i}}{P_{e}}\right) \lep{a} \Delta^2,
\end{align*}
where (a) follows from Claim~\ref{clm:kl} and the choice of $x$ such that $\norm{x}_{\infty} \le 1$. Thus, putting everything together, yields that 
\begin{align*}
        \ex{d_{\mathrm{Ham}}(X,Y)} =  n \norm{P_{+i} - P_{-i}}_{\mathrm{TV}} \le n \Delta := D.
    \end{align*}
With this value of $D$, we finally obtain that 
\begin{align*}
R_c(\cP_{\text{log}},\norm{\cdot}_2^2,\epsilon,\delta) \ge \frac{d\Delta^2}{2} \left(0.9 - 10 n \Delta (\epsilon + \delta)\right).
\end{align*}
Thus, choosing $\Delta = \frac{0.04}{n(\epsilon + \delta)}$, we obtain that 
\begin{align*}
R_c(\cP_{\text{log}},\norm{\cdot}_2^2,\epsilon,\delta) \ge c \cdot \frac{d}{n^2(\epsilon + \delta)^2},
\end{align*}
for some universal constant $c$. Finally, combined with the non-private part, we have finished the proof.
\end{proof}

\subsection{Proof of Theorem~\ref{thm:central-priv}}
\label{proof:central-priv}
Before we present the proof, we first highlight some differences between our proof and the one in~\cite{kifer2012private}. In particular, we note that one cannot simply follow the one in~\cite{kifer2012private} as there exists a gap in Lemma 16 of~\cite{kifer2012private} due to non-independence. Thus, we carefully handle this subtlety under our model.  We also explicitly write down the two-step procedures of Successive Approximation to handle the minimization over a constrained set. 

\begin{algorithm}[!t]
  \caption{Objective Perturbation with Gaussian Noise}
  \label{alg:objP}
\begin{algorithmic}[1]
  \STATE {\bfseries Parameters:} privacy budget $\epsilon > 0,\delta \in (0,1)$; regularization parameter $\beta$; i.i.d dataset $\cD = (x_i,y_i)_{i=1}^n$; parameter space $\Theta_B$; log loss $\ell$
  \STATE Sample $w \in \cN(0,\sigma^2 I)$
  \STATE Return  $\hat{\theta}_{\texttt{obj}} = \argmin_{\theta \in \Theta_B} {l}_{\cD}(\theta) + \frac{\beta}{2n} \norm{\theta}_2^2 +  \frac{w^{\top} \theta}{n}$, where ${l}_{\cD}(\theta) = \frac{1}{n}\sum_{i=1}^n \ell(\theta,(x_i,y_i))$
\end{algorithmic}
\end{algorithm}

Now, we are ready to present the proof.
\begin{proof}[Proof of Theorem~\ref{thm:central-priv}]
    Our goal is to show that $\hat\theta_{\texttt{obj}}$ is $(\epsilon,\delta)$-label DP in the central model, where
    \begin{align}
    \label{eq:goal-dp}
        \hat{\theta}_{\texttt{obj}} = \argmin_{\theta \in \Theta_B} {l}_{\cD}(\theta) + \frac{\beta}{2n} \norm{\theta}_2^2 +  \frac{w^{\top} \theta}{n}~.
    \end{align}
    To this end, we will first use Successive Approximation (Theorem 1 in~\cite{kifer2012private}), which allows us to only focus on the following sequence of unconstrained problems (indexed by $i \in \mathbb{N}$).
    \begin{align}
    \label{eq:uncons-dp}
    \hat{\theta}_{\texttt{obj}}^{(i)} = \argmin_{\theta \in \Real^d} {l}_{\cD}(\theta) + \frac{\beta}{2n} \norm{\theta}_2^2 +  \frac{w^{\top} \theta}{n} + \frac{if(\theta)}{n},
    \end{align}
    where $f(\theta) = \min_{z \in \Theta_B} \norm{\theta - z}_2$, which is a convex function (but not necessarily differentiable everywhere). The technique of Successive Approximation (SA) says that it suffices to show that for each $i$, the computation in~\eqref{eq:uncons-dp} is $(\epsilon,\delta)$-label DP. To show this, we will have to use SA again as $f(\theta)$ in~\eqref{eq:uncons-dp} is not differentiable everywhere. To handle this, for each $i$, we will consider another sequence of problems (indexed by $j \in \mathbb{N}$) as follows 
    \begin{align}
    \label{eq:smooth-dp}
    \hat{\theta}_{\texttt{obj}}^{(i,j)} = \argmin_{\theta \in \Real^d} {l}_{\cD}(\theta) + \frac{\beta}{2n} \norm{\theta}_2^2 +  \frac{w^{\top} \theta}{n} + \frac{1}{n}r^{(i,j)}(\theta),
    \end{align}
    where $r^{(i,j)}(\theta)$ be the convolution between $if(\theta)$ and $K_j$ (defined in Eq.(5) of~\cite{kifer2012private}). Now, we have $r^{(i,j)}(\theta)$ is differentiable everywhere and convex. Thus, it only remains to show that the computation in~\eqref{eq:smooth-dp} is $(\epsilon,\delta)$-label DP, for all $i,j$.

    Fix a pair $(i,j)$, we simplify notation in~\eqref{eq:smooth-dp} by focusing on the following problem. 
    \begin{align}
    \label{eq:simplify}
        \widetilde{\theta}_{\cD} = \argmin_{\theta \in \Real^d} {l}_{\cD}(\theta) + \frac{\beta}{2n} \norm{\theta}_2^2 +  \frac{w^{\top} \theta}{n} + \frac{1}{n}r(\theta).
    \end{align}

    \textbf{Step 1:} Establish the PDF for $\widetilde{\theta}_{\cD}$.
    
    By differentiability, we have $n \nabla_{\theta} {l}_{\cD}(\widetilde{\theta}_{\cD}) + \beta \widetilde{\theta}_{\cD} +  w  + \nabla r(\widetilde{\theta}_{\cD})= 0$. Define $\psi_{\cD}(\theta):= n \nabla_{\theta} {l}_{\cD}({\theta}) + \beta {\theta} + \nabla r(\theta)$ . 
    By change of random variables and $w \sim \cN(0,\sigma^2 I_d)$, we have that the probability density of $\widetilde{\theta}_{\cD}$ is given by for $t \in \Real^d$ 
     \begin{align}
    \label{eq:pdf}
        f_{\widetilde{\theta}_{\cD}}(t) = C \underbrace{\exp\left( - \frac{\norm{\psi_{\cD}(t)}_2^2}{2\sigma^2}\right)}_{\cT_{1,\cD}}\cdot \underbrace{\left| \det \left[ \frac{d \psi_{\cD} (\theta) }{d \theta} |_{\theta = t} \right]\right|}_{\cT_{2,\cD}}
    \end{align}
    where we use the fact that if $X$ has density function $f_X$,  $Y = H(X)$ for some bijective, differentiable function $H$, the $Y$ has density 
    \begin{align*}
        f_Y(y) = f_X(H^{-1}(y))\left| \det \left[ \frac{d H^{-1}(z)}{d z} |_{z = y} \right]\right|.
    \end{align*}
    Note that here the bijective relation holds by the strong convexity thanks to the regularization term $\beta > 0$.

     \textbf{Step 2:} Bound the PDF ratio under two neighboring datasets.
    
    By definition of DP, it suffices to show that for all $t \in \Real^d$, with probability at least $1-\delta$
    \begin{align*}
      e^{-\epsilon}f_{\widetilde{\theta}_{\cD'}}(t) \le  f_{\widetilde{\theta}_{\cD}}(t) \le e^{\epsilon} f_{\widetilde{\theta}_{\cD'}}(t),
    \end{align*}
    for all neighboring datasets $D,D'$. To this end, we first look at the ratio of $\cT_{2,\cD} / \cT_{2,\cD'}$. Note that the matrix inside the determinant in~\eqref{eq:pdf} is the Hessian of $l_{\cD}$ plus some common terms, given by 
    \begin{align*}
        \nabla^2 r(\theta)|_{\theta = t} +  {\beta} I + \nabla^2 {l}_{\cD} (\theta) |_{\theta = t} =  \nabla^2 r(\theta)|_{\theta = t} + {\beta} I + \sum_{i=1}^n \sigma(x_i^{\top}t)(1-\sigma(x_i^{\top}t)) x_i x_i^{\top},
    \end{align*}
    which does not depend on labels $\{y_i\}_{i=1}^n$. Thus, $\cT_{2,\cD} / \cT_{2,\cD'} = 1$.

    Now, we turn to the ratio of $\cT_{1,\cD} / \cT_{1,\cD'}$. In particular, we have 
    \begin{align}
    \label{eq:T1-ratio}
        \frac{\cT_{1,\cD}}{\cT_{1,\cD'}} &= \exp\left(\frac{\norm{\psi_{\cD'}(t)}_2^2 - \norm{\psi_{\cD}(t)}_2^2}{2\sigma^2}\right)\nonumber\\
        &= \exp\left(\frac{2\inner{\psi_{\cD}(t)}{\psi_{\cD'(t)} - \phi_{\cD}(t)} + \norm{\phi_{\cD'}(t) - \psi_{\cD}(t)}_2^2}{2\sigma^2}\right)\nonumber\\
         &= \exp\left(\frac{2\inner{-w}{\psi_{\cD'(t)} - \psi_{\cD}(t)} + \norm{\psi_{\cD'}(t) - \psi_{\cD}(t)}_2^2}{2\sigma^2}\right).
    \end{align}
    where we know that $\psi_{\cD}(t) = -w$, which is distributed according to a normal. However, one needs to be careful here to show that  $\phi_{\cD'}(t) - \phi_{\cD}(t)$ is \emph{independent} of $w$ so that one can claim that the inner product is also distributed according to a normal. In fact, this is not true in general\footnote{This is why Lemma 16 in~\cite{kifer2012private} does not hold in general.}! Fortunately, in our case, for two neighboring datasets $\cD, \cD'$ that differs only in $y_j, y'_j$  we have 
    \begin{align*}
        \psi_{\cD'}(t) - \psi_{\cD}(t) &=  \left(\frac{1}{1+\exp(-\inner{x_j}{t})} - y'_j\right) x_j - \left(\frac{1}{1+\exp(-\inner{x_j}{t})} - y_j\right) x_j\\
        &= x_j y_j - x_j y'_j,
    \end{align*}
    which is independent of the sampled noise $w$. Thus, we now can safely follow a similar approach in~\citet{kifer2012private}. That is, by the concentration of normal distribution, we have with probability at least $1-\delta$, 
    \begin{align*}
        |\inner{-w}{\psi_{\cD'}(t) - \psi_{\cD}(t)}| \le \norm{\psi_{\cD'}(t) - \psi_{\cD}(t)}\sigma \sqrt{2\log(2/\delta)}.
    \end{align*}
    Meanwhile, we have $\norm{\psi_{\cD'}(t) - \psi_{\cD}(t)} \le 2L$ by Assumption~\ref{ass:bound}. Putting everything back to~\eqref{eq:T1-ratio}, yields that with probability at least $1-\delta$
    \begin{align*}
         \frac{\cT_{1,\cD}}{\cT_{1,\cD'}} &\le \exp\left(\frac{2L\sigma\sqrt{8\log(2/\delta)} + (2L)^2}{2\sigma^2}\right) \lep{a} \exp(\epsilon),
    \end{align*}
    where (a) holds if $\sigma \ge \frac{L \sqrt{8\log(2/\delta) + 4\epsilon}}{\epsilon}$. Combining this with $\cT_{2,\cD} / \cT_{2,\cD'} = 1$, yields the required result, hence finishing the proof.
\end{proof}

\subsection{Proof of Theorem~\ref{thm:central-upper}}
\label{proof:central-upper}
Before presenting the proof, we first introduce the following useful lemma.

\begin{lemma}[Theorem 5.1.1 in~\citet{tropp2015introduction}]
\label{lem:matrix-ch}
    Consider a finite sequence $\{X_i\}$ of independent random, symmetric matrices in $\Real^d$. Assume that $\lambda_{\min}(X_i) \ge 0$ and $\lambda_{\max}(X_i) \le H$ for each $i$. Let $Y = \sum_i X_i$ and $\mu_{\min}$ denote the minimum eigenvalue of the expectation $\ex{Y}$, i.e., $\mu_{\min} = \lambda_{\min}(\sum_i \ex{X_i})$. Then, for any $\epsilon \in (0,1)$, it holds 
    \begin{align*}
        \prob{\lambda_{\min}(Y) \le \epsilon \mu_{\min}} \le d \cdot \exp\left(-(1-\epsilon)^2 \frac{\mu_{\min}}{2H}\right).
    \end{align*}
\end{lemma}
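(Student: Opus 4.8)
The plan is to prove this by the matrix Laplace-transform (matrix Chernoff) method. Since $\epsilon<1$, the event $\{\lambda_{\min}(Y)\le\epsilon\mu_{\min}\}$ is a lower-tail event, so I would fix a parameter $\theta<0$ and exploit that $s\mapsto e^{\theta s}$ is decreasing: the event equals $\{e^{\theta\lambda_{\min}(Y)}\ge e^{\theta\epsilon\mu_{\min}}\}$, and moreover $e^{\theta\lambda_{\min}(Y)}=\lambda_{\max}(e^{\theta Y})\le\tr e^{\theta Y}$ because $\theta<0$ flips the ordering of eigenvalues and $e^{\theta Y}\succeq 0$. Markov's inequality then reduces everything to the matrix moment generating function:
\begin{align*}
\prob{\lambda_{\min}(Y)\le\epsilon\mu_{\min}}\le e^{-\theta\epsilon\mu_{\min}}\,\ex{\tr e^{\theta Y}}.
\end{align*}

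Next I would invoke subadditivity of matrix cumulant generating functions (Tropp's master tail bound), namely $\ex{\tr e^{\theta Y}}\le\tr\exp\!\big(\sum_i\log\ex{e^{\theta X_i}}\big)$, which follows from Lieb's concavity theorem (concavity of $A\mapsto\tr\exp(M+\log A)$ on positive-definite $A$) together with independence and Jensen's inequality applied one summand at a time. Then I would bound each summand: from the scalar chord inequality $e^{\theta x}\le 1+\tfrac{e^{\theta H}-1}{H}x$ on $[0,H]$ (valid by convexity of $x\mapsto e^{\theta x}$), the transfer rule and $0\preceq X_i\preceq H I$ give $e^{\theta X_i}\preceq I+g(\theta)X_i$ where $g(\theta):=\tfrac{e^{\theta H}-1}{H}$, hence $\ex{e^{\theta X_i}}\preceq I+g(\theta)\ex{X_i}\succeq e^{\theta H}I\succ 0$; using operator monotonicity of $\log$ and $\log(I+A)\preceq A$ then yields $\log\ex{e^{\theta X_i}}\preceq g(\theta)\ex{X_i}$. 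Summing over $i$, using monotonicity of $A\mapsto\tr e^A$ in the Loewner order, and noting $g(\theta)<0$ so that the largest eigenvalue of $g(\theta)\sum_i\ex{X_i}$ is $g(\theta)\mu_{\min}$, I get $\tr\exp(\sum_i\log\ex{e^{\theta X_i}})\le d\,e^{g(\theta)\mu_{\min}}$, and therefore $\prob{\lambda_{\min}(Y)\le\epsilon\mu_{\min}}\le d\exp\!\big(\mu_{\min}(g(\theta)-\theta\epsilon)\big)$.

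The final step is a scalar optimization: minimizing $g(\theta)-\theta\epsilon$ over $\theta<0$, the stationarity condition $e^{\theta H}=\epsilon$ gives the optimum $\theta^\star=\tfrac{\ln\epsilon}{H}<0$, with value $\tfrac{1}{H}(\epsilon-1-\epsilon\ln\epsilon)$, so $\prob{\lambda_{\min}(Y)\le\epsilon\mu_{\min}}\le d\exp\!\big(\tfrac{\mu_{\min}}{H}(\epsilon-1-\epsilon\ln\epsilon)\big)$. To match the clean quadratic exponent in the statement I would then verify the elementary inequality $\epsilon-1-\epsilon\ln\epsilon\le-\tfrac{(1-\epsilon)^2}{2}$ for $\epsilon\in(0,1)$: writing $\delta=1-\epsilon$, the function $\delta+(1-\delta)\ln(1-\delta)-\tfrac{\delta^2}{2}$ vanishes at $\delta=0$ and has derivative $-\ln(1-\delta)-\delta\ge 0$ on $(0,1)$, hence is nonnegative. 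I expect the main obstacle to be the second step: subadditivity of matrix cumulants is the one substantive ingredient and genuinely relies on Lieb's concavity theorem; the rest is bookkeeping with the Loewner order and a one-variable optimization, with the recurring subtlety that $\theta<0$ reverses all the relevant inequalities because we are controlling the lower tail of $\lambda_{\min}$.
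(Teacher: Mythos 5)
Your argument is correct and is precisely the standard proof of the matrix Chernoff lower-tail bound from the cited reference (Tropp's master tail bound via Lieb's concavity, the chord bound $e^{\theta x}\le 1+\frac{e^{\theta H}-1}{H}x$ on $[0,H]$, optimization at $\theta^\star=\ln(\epsilon)/H$, and the elementary inequality $\epsilon-1-\epsilon\ln\epsilon\le-\frac{(1-\epsilon)^2}{2}$); the paper itself does not reprove this lemma but simply invokes Theorem 5.1.1 of Tropp (2015). All the sign reversals you flag for $\theta<0$ are handled correctly, so there is nothing to add.
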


Now, we are ready to prove Theorem~\ref{thm:central-upper}.
\begin{proof}[Proof of Theorem~\ref{thm:central-upper}]
    Let $ \widetilde{\cL}_{\cD}(\theta):= {l}_{\cD}(\theta) + \frac{\beta}{2n} \norm{\theta}_2^2 +  \frac{w^{\top} \theta}{n} $.  We divide the proof into the following steps.

    \textbf{Step 1:} Let $\Delta:= \hat{\theta}_{\texttt{obj}} - \theta^*$. Show that $c \norm{\Delta} \le \norm{\nabla \widetilde{\cL}_{\cD}(\theta^*)}$ for some positive constant $c$.

    To this end, note that we always have 
\begin{align*}
    \widetilde{\cL}_{\cD}(\theta^* + \Delta) -  \widetilde{\cL}_{\cD}(\theta^* ) - \inner{\nabla \widetilde{\cL}_{\cD}(\theta^*)}{\Delta} \le - \inner{\nabla \widetilde{\cL}_{\cD}(\theta^*)}{\Delta},
\end{align*}
since $\widetilde{\cL}_{\cD}(\theta^* + \Delta) = \widetilde{\cL}_{\cD}(\hat{\theta}_{\texttt{obj}}) \le \widetilde{\cL}_{\cD}(\theta^* )$ by the optimality of $\hat{\theta}_{\texttt{obj}}$. The RHS of above inequality can be upper bounded by $\norm{\nabla \widetilde{\cL}_{\cD}(\theta^*)}\norm{\Delta}$. Thus, it remains to lower bound the LHS, which motivates us to show that $\widetilde{\cL}_{\cD}$ is strongly convex with respect to the $\ell_2$-norm $\norm{\cdot}_{2}$. That is, we need to show that for all $v, \theta$, 
\begin{align*}
    v^{\top} \nabla^2 \widetilde{\cL}_{\cD}(\theta)v \ge c \norm{v}_{2}^2
\end{align*}
for some positive constant $c >0$. Now, let us look at the Hessian matrix of $\widetilde{\cL}_{\cD}$ at any $\theta$,
\begin{align}
\label{eq:interm}
    \nabla^2 \widetilde{\cL}_{\cD}(\theta) = \frac{\beta}{n} I + \frac{1}{n}\sum_{i=1}^n \sigma(x_i^{\top}\theta)(1-\sigma(x_i^{\top}\theta)) x_i x_i^{\top}. 
\end{align}

To proceed, we will leverage Lemma~\ref{lem:matrix-ch}. In particular, to apply it to our case, we have $X_i = x_i x_i^{\top}$ with $H = L^2$, $\mu_{\min} = n \kappa$ by Assumptions~\ref{ass:bound} and~\ref{ass:cov}.  Hence, as a result of Lemma~\ref{lem:matrix-ch}, with probability at least $1-\alpha$,
\begin{align}
\label{eq:emp-lambda}
    \lambda_{\min}(\sum_{i} x_i x_i^{\top}) \ge \frac{n\kappa}{2},
\end{align}
when $n \ge \frac{8L^2\log(d/\alpha)}{\kappa}$. Thus, condition on the good event,  plugging~\eqref{eq:emp-lambda} into~\eqref{eq:interm} and noting that $ \inf_{z \in [-2LB, 2LB]} \sigma(z) (1-\sigma(z)) \ge \gamma:= \frac{1}{2 + \exp(-2LB) + \exp(2LB)}$, yields that 
\begin{align*}
    v^{\top} \nabla^2 \widetilde{\cL}_{\cD}(\theta)v \ge  \left(\frac{\beta}{n} + \frac{\kappa \gamma}{2}\right) \norm{v}_{2}^2.
\end{align*}
Thus, we have so far established that 
\begin{align}
\label{eq:step1}
   \left(\frac{\kappa \gamma}{2}\right)\norm{\Delta} \le  \left(\frac{\beta}{n} + \frac{\kappa \gamma}{2}\right)\norm{\Delta} \le {\norm{\nabla \widetilde{\cL}_{\cD}(\theta^*)}},
\end{align}
where $\beta > 0$.

\textbf{Step 2:} Bound ${\norm{\nabla \widetilde{\cL}_{\cD}(\theta^*)}}$ with high probability.

To this end, we note that 
\begin{align*}
   {\nabla \widetilde{\cL}_{\cD}(\theta^*)} &= { \underbrace{\frac{1}{n} \sum_{i=1}^n \left( x_i (\sigma(x_i^{\top}\theta^*) - y_i)\right)}_{\cT_1} + \underbrace{\frac{\beta}{n} \theta^*}_{\cT_2} + \underbrace{\frac{w}{n}}_{\cT_3}}.
\end{align*}
Thus, we need to bound each of the terms on the RHS. We start with $\cT_1$. Let $V_i:= \sigma(x_i^{\top}\theta^*) - y_i$ and hence we have $\ex{V_i} = 0$. Thus, we can write $\cT_1 = -\frac{1}{n} X^T V$, where $X \in \Real^{n \times d}$ is the data matrix and $x_i^{\top} \in \Real^d$ is the $i$-th row of it. Hence, we have $\norm{\cT_1}^2 = \frac{1}{n^2} V^{\top} X X^{\top} V$. To analyze the concentration for this quadratic form, we will resort to the classic Hanson-Wright inequality. In particular, we will apply the explicit bound in Theorem 2.1 of~\citet{hsu2012tail}. To this end, we need to check the following quantities of $M:= \frac{1}{n^2}XX^{\top}$:
\begin{align*}
    &\tr(M) \le 4 L^2 / n\\
    &\tr(M^2) \le 16  L^4 / n^2\\
    &\norm{M}_{\text{op}} = \lambda_{\max}(M) \le 4 L^2 / n,
\end{align*}
where the above inequalities hold by simple linear algebra and the boundedness assumption. Thus, by  Theorem 2.1 of~\citet{hsu2012tail}, we have with probability at least $1-\alpha$
\begin{align*}
    \norm{\cT_1}^2 = V^{\top} M V \le C_1 L^2 \frac{1 + \log(1/\alpha)}{n},
\end{align*}
where $C_1$ is some universal constant.

For $\cT_2$, we have $\norm{\cT_2} \le \frac{\beta B}{n}$ by boundedness assumption. For $\cT_3$, by the standard concentration of the norm of Gaussian vector (cf. Theorem 3.1.1 in~\citet{vershynin2018high}), we have with probability at least $1-\alpha$, 
\begin{align*}
    \norm{\cT_3} \le C_3 \frac{1}{n} \sigma \left(\sqrt{d} + \sqrt{\log(1/\alpha)}\right),
\end{align*}
where $C_3$ is again form universal constant.

Putting all these bounds together and choosing $\beta = \sqrt{n}/B$, yields that with probability at least $1-\alpha$, 
\begin{align}
\label{eq:step2}
    {\norm{\nabla \widetilde{\cL}_{\cD}(\theta^*)}} \le C \cdot \left(L \sqrt{\frac{1+\log(1/\alpha)}{n}} +  \sigma \frac{1}{n} \sqrt{d} + \sigma  \frac{1}{n}\sqrt{\log(1/\alpha)}\right),
\end{align}
where $C$ is some universal constant.

\textbf{Step 3:} Derive the final bound. 

Plugging the bound in~\eqref{eq:step2} into~\eqref{eq:step1}, yields 
\begin{align*}
    \norm{\Delta} \le C' \left(\frac{L}{\kappa \gamma}\sqrt{\frac{1+\log(1/\alpha)}{n}} + \frac{\sigma \left(\sqrt{d} + \sqrt{\log(1/\alpha)}\right)}{n \kappa \gamma}\right).
\end{align*}
Recall that $\sigma =  \frac{L \sqrt{8\log(2/\delta) + 4\epsilon}}{\epsilon}$, and hence we finally have the bound
\begin{align*}
    \norm{\Delta} \le C' \left(\frac{L}{\kappa \gamma}\sqrt{\frac{1+\log(1/\alpha)}{n}} + \frac{\left(\sqrt{d} + \sqrt{\log(1/\alpha)}\right)}{n \kappa \gamma}\cdot \frac{L \sqrt{8\log(2/\delta) + 4\epsilon}}{\epsilon}\right).
\end{align*}
\end{proof}

\subsection{Semi-norm Error Bounds under Central Label DP}
\label{sec:central-semi}
In this section, we prove bounds on the estimation error in semi-norm under central label DP. 

\subsubsection{Lower Bound}
We have the following result for the lower bound on the estimation error. 
\begin{theorem}
\label{thm:central-lower-semi}
   For a large enough $n$, any estimator $\hat{\theta}$ based on samples form the BTL model that satisfies $(\epsilon,0)$-label DP in the central model has the estimation error in semi-norm lower bounded as
   \begin{align*}
   \ex{\norm{\hat{\theta} - \theta^*}^2_{\Sigma_{\cD} + \lambda I}} \ge \Omega\left(
    \frac{d}{n} +   \frac{d^2}{n^2 \epsilon^2}\right).
\end{align*}
\end{theorem}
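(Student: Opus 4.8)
The plan is to mirror the proof of Theorem~\ref{thm:semi-lower}, keeping its non-private half essentially verbatim and replacing the local-model strong-data-processing step by a coupling-based private Fano argument suited to the central model, in the same spirit as the private Assouad bound of Lemma~\ref{lem:DP-assouad}. As usual I split the lower bound into a non-private term and a private term. The non-private $\Omega(d/n)$ term is precisely the non-private part of the proof of Theorem~\ref{thm:semi-lower} (Varshamov--Gilbert packing, rotation by the eigenbasis of $\Sigma_{\cD}+\lambda I$, Fano's lemma, and the per-coordinate KL estimate of Claim~\ref{clm:kl}), so the real work is the private $\Omega(d^2/(n^2\epsilon^2))$ term.

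For the private term I reuse the same packing. Fix $\Delta>0$, apply Lemma~\ref{lem:VG} with $\xi=1/4$ to obtain $M\ge e^{d/32}$ codewords $w_1,\dots,w_M\in\{0,1\}^d$ with $d/4\le \norm{w_i-w_j}_2^2\le d$ for $i\ne j$, diagonalize $\Sigma_{\cD}+\lambda I = U^\top\Lambda U$, and set $\theta_i:=\tfrac{\Delta}{\sqrt d}U^\top\sqrt{\Lambda^{-1}}w_i$, so that $\tfrac{\Delta}{2}\le \norm{\theta_i-\theta_j}_{\Sigma_{\cD}+\lambda I}\le\Delta$ for all $i\ne j$. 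As in Theorem~\ref{thm:semi-lower}, for $n$ large enough (and with the choice $\Delta=\Theta(d/(n\epsilon))$ below) one checks $\norm{\theta_i}\le B$, while the identifiability constraint $\inner{\mathbf 1}{\theta_i}=0$ is obtained by halving $d$. Let $P_i^n$ denote the distribution of the clear-text label vector $(y_1,\dots,y_n)$ under $\theta^*=\theta_i$ for the fixed design $x_1,\dots,x_n$.

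The key new ingredient is a coupling bound. Since a neighbour in central label DP (Definition~\ref{def:central-label}) changes only a single label and the design $x_1,\dots,x_n$ is common to all hypotheses, it suffices to couple the label vectors: coupling $P_i^n$ and $P_j^n$ coordinatewise by the maximal coupling of the Bernoulli marginals yields a pair $(Y,Y')$ with
\begin{align*}
\ex{d_{\mathrm{Ham}}(Y,Y')} &= \sum_{k=1}^n \abs{\sigma(x_k^\top\theta_i)-\sigma(x_k^\top\theta_j)} \le \tfrac14\sum_{k=1}^n\abs{x_k^\top(\theta_i-\theta_j)}\\
&\le \tfrac{\sqrt n}{4}\Bigl(\textstyle\sum_{k=1}^n (x_k^\top(\theta_i-\theta_j))^2\Bigr)^{1/2} = \tfrac n4\norm{\theta_i-\theta_j}_{\Sigma_{\cD}} \le \tfrac{n\Delta}{4} =: D,
\end{align*}
using $\sigma'\le\tfrac14$ and Cauchy--Schwarz. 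Feeding this $D$ and the packing into the pure-DP Fano method of \citet{acharya2021differentially} (the Fano counterpart of Lemma~\ref{lem:DP-assouad}), any $\epsilon$-label-DP estimator obeys a bound of the form
\begin{align*}
R_c\bigl(\cP_{\log},\norm{\cdot}_{\Sigma_{\cD}+\lambda I}^2,\epsilon\bigr) \ge \frac{(\Delta/2)^2}{4}\left(1-\frac{c\,\epsilon D+\log 2}{\log M}\right)
\end{align*}
for an absolute constant $c$. With $\log M\ge d/32$ and $D=n\Delta/4$, choosing $\Delta = c''d/(n\epsilon)$ for a small enough constant $c''$ keeps the bracket above $1/2$, so $R_c\ge\Omega(\Delta^2)=\Omega(d^2/(n^2\epsilon^2))$; combining with the non-private $\Omega(d/n)$ term gives the claimed bound.

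\noindent\textbf{Main obstacle.} The only delicate point is the private Fano step: one must (i) verify that group privacy along the above \emph{label-only} coupling is exactly what the central-label-DP definition permits (the design is fixed and shared across hypotheses, so this is legitimate and does not require privacy of the $x_k$), and (ii) check that the parameter choice $\Delta=\Theta(d/(n\epsilon))$ lands in the regime where the DP Fano bound behaves linearly in $\epsilon D$ and does not degenerate --- which it does, since then $\epsilon D=\Theta(d)$ is comparable to $\log M=\Theta(d)$. Everything else (the rotation, the Bernoulli total-variation estimate, and the boundedness/centering checks) is routine given the proofs of Theorems~\ref{thm:semi-lower} and~\ref{thm:central-lower}.
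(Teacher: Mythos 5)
Your proposal is correct and takes essentially the same route as the paper's proof: reuse the Varshamov--Gilbert packing and non-private Fano argument from Theorem~\ref{thm:semi-lower} for the $\Omega(d/n)$ term, then bound the expected Hamming distance of a coordinatewise maximal coupling of the label vectors by $\Theta(n\Delta)$ and invoke the private Fano bound of \citet{acharya2021differentially} with $\Delta=\Theta(d/(n\epsilon))$. The only differences are cosmetic: you bound each per-coordinate total variation directly by $\tfrac14\lvert x_k^\top(\theta_i-\theta_j)\rvert$ using $\sigma'\le\tfrac14$, whereas the paper routes through Pinsker's inequality and the KL estimate of Claim~\ref{clm:kl}; and you quote the private Fano bound in the single-expression form $\frac{\tau^2}{4}\bigl(1-\frac{c\epsilon D+\log 2}{\log M}\bigr)$ rather than the two-term maximum the paper states --- both give the same $\Omega(\Delta^2)$ once $\epsilon D\lesssim\log M$, which your choice $\Delta=c''d/(n\epsilon)$ ensures.
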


To prove the theorem, we will leverage the following useful result, i.e., DP version of Fano's lemma\footnote{As before, we correct some constant factor errors in the original statement in~\cite{acharya2021differentially}.}. 
\begin{lemma}[Fano's lemma for central DP~\citep{acharya2021differentially}]
    Let $\cV = \{P_1, P_2,\ldots, P_M\} \subseteq \cP$ such that for all $i \neq j$,
    \begin{align*}
    \kl{P_i}{P_j} \le \beta~,\quad
        \rho'(\theta(P_i), \theta(P_j)) \ge \tau~.
        \end{align*}
        for a semi-metric $\rho'$ and for some $\tau,\beta > 0$.
        Moreover, let there exists a coupling between $P_i$ and $P_j$ such that $\ex{d_{\mathrm{ham}}(X,Y)} \le D$ for some $D>0$.
    Then, we have 
    \begin{align*}
        R(\cP, (\rho')^2, \epsilon) \ge \max\left\{ \frac{\tau^2}{4} \left(1 - \frac{\beta + \log 2}{\log M}\right), 0.2 \tau^2 \min\left\{1, \frac{M}{e^{10\epsilon D}}\right\}\right\}.
    \end{align*}
\end{lemma}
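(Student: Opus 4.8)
The plan is to run the argument of Theorem~\ref{thm:semi-lower} essentially verbatim, but with the non-private Fano lemma replaced by its central-DP version stated just above, whose extra term $0.2\tau^2\min\{1,M/e^{10\epsilon D}\}$ captures the privacy price through a coupling-Hamming-distance parameter $D$. Concretely, I would reuse the same semi-norm packing: Varshamov--Gilbert's bound (Lemma~\ref{lem:VG}) with $\xi=1/4$ produces $w_1,\dots,w_M\in\{0,1\}^d$ with $M\ge e^{d/32}$ and $\norm{w_i-w_j}_2^2\in[d/4,d]$; writing the eigendecomposition $\Sigma_\cD+\lambda I=U^\top\Lambda U$ and setting $\theta_i:=\tfrac{\Delta}{\sqrt d}U^\top\sqrt{\Lambda^{-1}}w_i$ gives $\tfrac{\Delta}{2}\le\norm{\theta_i-\theta_j}_{\Sigma_\cD+\lambda I}\le\Delta$ for all $i\ne j$, so the packing radius is $\tau=\Delta/2$. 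The identifiability constraint $\inner{\mathbf{1}}{\theta}=0$ and the norm bound $\norm{\theta_i}\le B$ are enforced exactly as in Theorem~\ref{thm:semi-lower} (halve the dimension; take $n\gtrsim\tr((\Sigma_\cD+\lambda I)^{-1})/B^2$).

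Next I would estimate the two quantities the lemma needs. The KL bound is unchanged from~\eqref{eq:kl}: the chain rule together with Claim~\ref{clm:kl} gives $\kl{P_i^n}{P_j^n}\le n(\theta_i-\theta_j)^\top\Sigma_\cD(\theta_i-\theta_j)\le n\Delta^2=:\beta$. For the coupling term I would take the product of the coordinatewise maximal couplings between $\mathrm{Bernoulli}(\sigma(x_k^\top\theta_i))$ and $\mathrm{Bernoulli}(\sigma(x_k^\top\theta_j))$; its expected Hamming distance equals $\sum_{k=1}^n\abs{\sigma(x_k^\top\theta_i)-\sigma(x_k^\top\theta_j)}\le\tfrac14\sum_{k=1}^n\abs{x_k^\top(\theta_i-\theta_j)}$ (using $\sigma'\le 1/4$ and that $\mathrm{TV}$ between two Bernoullis equals the gap of their means), and Cauchy--Schwarz then gives $\tfrac14\sqrt n\sqrt{n(\theta_i-\theta_j)^\top\Sigma_\cD(\theta_i-\theta_j)}\le\tfrac{n\Delta}{4}=:D$, uniformly over the packing.

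Finally I would tune $\Delta$ separately for the two terms of the lemma. With $\Delta^2=c\,d/n$ for a small absolute constant $c$, $\tfrac{\beta+\log 2}{\log M}=O(c)$ is small, so the first term yields $R_c(\cP_{\log},\norm{\cdot}_{\Sigma_\cD+\lambda I}^2,\epsilon)\ge\Omega(d/n)$, recovering the non-private part of Theorem~\ref{thm:semi-lower}. With $\Delta=\tfrac{d}{80\epsilon n}$ instead, $10\epsilon D=\tfrac{10\epsilon n\Delta}{4}\le\tfrac{d}{32}\le\log M$, so $\min\{1,M/e^{10\epsilon D}\}=1$ and the second term yields $R_c\ge 0.2\tau^2=\Omega(d^2/(n^2\epsilon^2))$. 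Since the minimax risk dominates both constructions, combining the two gives the claimed $\Omega\!\big(\tfrac{d}{n}+\tfrac{d^2}{n^2\epsilon^2}\big)$; the boundedness and centering checks go through for $n$ large, as in Theorem~\ref{thm:semi-lower}.

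I expect the main obstacle to be the coupling step. The central-DP Fano lemma demands a coupling between \emph{every} pair $P_i^n,P_j^n$ of the packing (unlike the Assouad-type statement of Lemma~\ref{lem:DP-assouad}, where only hypercube-neighbouring pairs enter), so I must make $D\le n\Delta/4$ hold uniformly over the packing --- which is precisely why the rotation construction, giving the uniform upper bound $\norm{\theta_i-\theta_j}_{\Sigma_\cD+\lambda I}\le\Delta$, is the right packing to use here rather than an ad hoc one. Everything else is routine bookkeeping already carried out for Theorem~\ref{thm:semi-lower}.
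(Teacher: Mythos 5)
There is a fundamental mismatch between what you have written and what the statement asks for. The statement to be proved is the DP version of Fano's lemma itself --- a general claim that \emph{any} $(\epsilon,0)$-DP estimator over \emph{any} family admitting a $\tau$-separated, $\beta$-KL-close packing with a coupling of expected Hamming distance $D$ must incur risk at least $0.2\,\tau^2\min\{1, M/e^{10\epsilon D}\}$ in addition to the classical Fano term. Your proposal never proves this; it \emph{invokes} the lemma as a black box and then carries out the packing construction, the KL bound via Claim~\ref{clm:kl}, the maximal-coupling Hamming-distance computation, and the tuning of $\Delta$. That is precisely the content of the paper's proof of Theorem~\ref{thm:central-lower-semi} (the downstream application in Appendix~\ref{sec:central-semi}), not of the lemma. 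As an application it is essentially correct and matches the paper's argument (your coupling bound $D = n\Delta/4$ via the exact Bernoulli TV formula is even slightly sharper than the paper's $D = n\Delta/\sqrt{2}$ obtained through Pinsker), but it proves the wrong statement.

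What is actually missing is the privacy-specific testing argument that produces the $\min\{1, M/e^{10\epsilon D}\}$ factor. A proof of the lemma must (i) reduce estimation to an $M$-ary test via the $\tau$-separation, so that the events $E_i = \{\rho'(\hat\theta,\theta(P_i)) < \tau/2\}$ are pairwise disjoint; (ii) handle the first term by the ordinary Fano inequality using $\kl{P_i}{P_j}\le\beta$; and (iii) for the second term, use the coupling together with \emph{group privacy}: by Markov's inequality the coupled pair $(X,Y)$ satisfies $d_{\mathrm{ham}}(X,Y)\le 10D$ with probability at least $9/10$, and on that event the $(\epsilon,0)$-DP guarantee composed over at most $10D$ substitutions gives $\prob{\hat\theta(Y)\in E_i}\ge e^{-10\epsilon D}\,\prob{\hat\theta(X)\in E_i} - \text{(failure mass)}$, so that a single estimator cannot simultaneously place large probability on all $M$ disjoint events unless $M \lesssim e^{10\epsilon D}$. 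Summing over the packing and converting the resulting testing error into squared risk yields the claimed term. None of this machinery --- disjointness of the $E_i$, Markov on the Hamming distance, or the $e^{10\epsilon D}$ group-privacy amplification --- appears in your proposal, so the lemma itself remains unproved. (The paper does not reprove it either; it cites \citet{acharya2021differentially} and only corrects constants, so the comparison here is against the standard proof in that reference.)
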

Now, we are ready to prove Theorem~\ref{thm:central-lower-semi}.
\begin{proof}[Proof of Theorem~\ref{thm:central-lower-semi}]
    The non-private part is the same as before, i.e., the proof for Theorem~\ref{thm:semi-lower}. For the private part, we follow the same packing construction as in the proof of Theorem~\ref{thm:semi-lower}. Let $(X, Y)$ be the coupling between $P_i^n$ and $P_j^n$, since $n$ samples are observed. Again, we utilize the maximal coupling property to obtain 
     \begin{align*}
        \ex{d_{\mathrm{Ham}}(X,Y)}  &= \sum_{k=1}^n \norm{P_{i,k} - P_{j,k}}_{\mathrm{TV}}\\
        &\le \sqrt{n} \sqrt{\sum_{k=1}^n \norm{P_{i,k} - P_{j,k}}_{\mathrm{TV}}^2}\\
        &\le \sqrt{n/2} \sqrt{\sum_{k} \kl{P_{i,k}}{P_{j,k}}}\\
        &\lep{a} \sqrt{n/2} \sqrt{n\Delta^2} \le 1/\sqrt{2} n \Delta := D,
    \end{align*}
    where (a) follows from~\eqref{eq:kl}.
    Now, noting that $\tau^2 = \Theta(\Delta^2)$, $M = \Theta(e^d)$, letting $\Delta = c\cdot \frac{d}{n\epsilon}$, we obtain 
    \begin{align*}
        R_c(\cP_{\log},\norm{\cdot}_{\Sigma_{\cD} + \lambda I}^2,\epsilon) \ge \Omega\left(\frac{d^2}{n^2 \epsilon^2}\right).
    \end{align*}
\end{proof}

\subsubsection{Upper Bound}
\begin{theorem}
\label{thm:central-upper-semi}
    Let $\alpha \in (0,1)$. Then, under Assumptions~\ref{ass:bound} and~\ref{ass:cov}, $\hat{\theta}_{\texttt{obj}}$ satisfies
     \begin{align*}
     \norm{\hat{\theta}_{\texttt{obj}} - \theta^{*}}_{\Sigma_{\cD} + \lambda' I} \le O\left( \frac{1}{\gamma} \sqrt{\frac{d + \log(1/\alpha)}{n}} + \frac{\sqrt{\sigma} (d\log(1/\alpha))^{1/4}}{\sqrt{n \gamma B}}
 \right)
     \end{align*}
      with probability at least $1-\alpha$,
     where $\gamma:= \frac{1}{2 + \exp(-2LB) + \exp(2LB)}$ and $\lambda' := \frac{\sigma \sqrt{d\log(1/\alpha)}}{\gamma n B}$. Thus, setting noise parameter $\sigma = \frac{L \sqrt{8\log(2/\delta) + 4\epsilon}}{\epsilon}$, it satisfies $(\epsilon,\delta)$-label DP in the central model and has estimation error 
     \begin{align*}
    \norm{\hat{\theta}_{\texttt{obj}} - \theta^{*} }_{\Sigma_{\cD} + \lambda' I} \le O\left( \frac{1}{\gamma} \sqrt{\frac{d + \log(1/\alpha)}{n}} + \frac{\sqrt{L} \left((\log(2/\delta) + 4\epsilon)d\log(1/\alpha)\right)^{1/4}}{\sqrt{n \epsilon \gamma B}}\right).
\end{align*}
\end{theorem}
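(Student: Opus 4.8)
The privacy claim is immediate from Theorem~\ref{thm:central-priv}: with $\sigma=L\sqrt{8\log(2/\delta)+4\epsilon}/\epsilon$ and any $\beta>0$, $\hat{\theta}_{\texttt{obj}}$ is $(\epsilon,\delta)$-label DP, and this is insensitive to the value of $\lambda'$. So the plan is to derive the utility bound by re-running the argument behind Theorem~\ref{thm:central-upper}, but in the semi-norm geometry $\norm{\cdot}_{\Sigma_{\cD}+\lambda' I}$ rather than in $\ell_2$, in the same way as the proof of Theorem~\ref{thm:robust_rr}. Write $\widetilde{\cL}_{\cD}(\theta):=l_{\cD}(\theta)+\frac{\beta}{2n}\norm{\theta}_2^2+\frac{w^{\top}\theta}{n}$, so that $\hat{\theta}_{\texttt{obj}}=\argmin_{\theta\in\Theta_B}\widetilde{\cL}_{\cD}(\theta)$ and $\theta^*\in\Theta_B$ by Assumption~\ref{ass:bound}. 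First I would record the semi-norm strong convexity: since $\nabla^2 l_{\cD}(\theta)=\frac1n\sum_i\sigma'(x_i^{\top}\theta)\,x_ix_i^{\top}\succeq\gamma\Sigma_{\cD}$ for every $\theta\in\Theta_B$ (because $\sigma'(z)\ge\gamma$ on $[-2LB,2LB]$), while the quadratic and linear terms contribute a PSD resp.\ a zero Hessian, $\widetilde{\cL}_{\cD}$ is $\gamma$-strongly convex with respect to $\norm{\cdot}_{\Sigma_{\cD}}$. Combining this with optimality, $\widetilde{\cL}_{\cD}(\hat{\theta}_{\texttt{obj}})\le\widetilde{\cL}_{\cD}(\theta^*)$, gives, with $\Delta:=\hat{\theta}_{\texttt{obj}}-\theta^*$,
\[
\tfrac{\gamma}{2}\norm{\Delta}_{\Sigma_{\cD}}^2\ \le\ -\inner{\nabla\widetilde{\cL}_{\cD}(\theta^*)}{\Delta}\ \le\ \norm{\nabla\widetilde{\cL}_{\cD}(\theta^*)}_{(\Sigma_{\cD}+\lambda' I)^{-1}}\,\norm{\Delta}_{\Sigma_{\cD}+\lambda' I};
\]
then, using $\norm{\Delta}_{\Sigma_{\cD}}^2=\norm{\Delta}_{\Sigma_{\cD}+\lambda' I}^2-\lambda'\norm{\Delta}_2^2\ge\norm{\Delta}_{\Sigma_{\cD}+\lambda' I}^2-4\lambda' B^2$ (both iterates lie in $\Theta_B$) and solving the resulting quadratic, $\norm{\Delta}_{\Sigma_{\cD}+\lambda' I}\le\frac{2}{\gamma}\norm{\nabla\widetilde{\cL}_{\cD}(\theta^*)}_{(\Sigma_{\cD}+\lambda' I)^{-1}}+2\sqrt{\lambda'}\,B$.

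Next I would bound $G:=\norm{\nabla\widetilde{\cL}_{\cD}(\theta^*)}_{(\Sigma_{\cD}+\lambda' I)^{-1}}$ through the decomposition $\nabla\widetilde{\cL}_{\cD}(\theta^*)=\cT_1+\cT_2+\cT_3$, with $\cT_1=-\frac1nX^{\top}V$ (where $V_i=y_i-\sigma(x_i^{\top}\theta^*)$ is zero-mean and $O(1)$-sub-Gaussian given the features), $\cT_2=\frac{\beta}{n}\theta^*$, and $\cT_3=\frac{w}{n}$. For $\cT_1$, exactly as in the proof of Theorem~\ref{thm:robust_rr}, write $\norm{\cT_1}_{(\Sigma_{\cD}+\lambda' I)^{-1}}^2=V^{\top}MV$ with $M:=\frac1{n^2}X(\Sigma_{\cD}+\lambda' I)^{-1}X^{\top}$; from $X(\Sigma_{\cD}+\lambda' I)^{-1}X^{\top}\preceq nI$ one reads off $\tr(M)\le d/n$, $\tr(M^2)\le d/n^2$, $\norm{M}\le 1/n$, and the sub-Gaussian Hanson--Wright bound (Theorem~2.1 of \citet{hsu2012tail}) gives $\norm{\cT_1}_{(\Sigma_{\cD}+\lambda' I)^{-1}}\le C\sqrt{(d+\log(1/\alpha))/n}$ with probability at least $1-\alpha$. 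For $\cT_2$ one has trivially $\norm{\cT_2}_{(\Sigma_{\cD}+\lambda' I)^{-1}}\le \beta B/(n\sqrt{\lambda'})$, which is lower order once $\beta$ is chosen to be a sufficiently small positive constant (any such $\beta$ is admissible for the privacy guarantee). For $\cT_3$, since $(\Sigma_{\cD}+\lambda' I)^{-1}\preceq\lambda'^{-1}I$, $\norm{\cT_3}_{(\Sigma_{\cD}+\lambda' I)^{-1}}\le\norm{w}_2/(n\sqrt{\lambda'})$, and Gaussian norm concentration (e.g.\ Theorem~3.1.1 of \citet{vershynin2018high}) gives $\norm{w}_2\le\sigma(\sqrt d+\sqrt{2\log(1/\alpha)})$ with probability at least $1-\alpha$, hence $\norm{\cT_3}_{(\Sigma_{\cD}+\lambda' I)^{-1}}\le C\,\sigma\sqrt{d\log(1/\alpha)}/(n\sqrt{\lambda'})$ after a coarse merge of the two square-root terms.

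Putting the three pieces together yields $\norm{\Delta}_{\Sigma_{\cD}+\lambda' I}\le\frac{C}{\gamma}\sqrt{\frac{d+\log(1/\alpha)}{n}}+\frac{C}{\gamma}\frac{\sigma\sqrt{d\log(1/\alpha)}}{n\sqrt{\lambda'}}+C\sqrt{\lambda'}\,B$, and the last two terms are then traded off by the stated choice $\lambda'=\sigma\sqrt{d\log(1/\alpha)}/(\gamma nB)$; substituting $\sigma=L\sqrt{8\log(2/\delta)+4\epsilon}/\epsilon$ and invoking Theorem~\ref{thm:central-priv} for privacy gives the two displayed estimates (up to absolute constants). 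The only step requiring genuine care — beyond assembling these ingredients — is exactly this choice of the ridge $\lambda'$: it must be large enough to tame the Gaussian privacy-noise term $\norm{w/n}_{(\Sigma_{\cD}+\lambda' I)^{-1}}$, which scales as $\lambda'^{-1/2}$, yet small enough that the bias $\sqrt{\lambda'}\,B$ it injects into the semi-norm stays controlled; since these two effects have opposite half-power dependence on $\lambda'$, the optimum produces the square-root-type rate in the privacy cost, which is the qualitative feature distinguishing this semi-norm bound from the $\ell_2$ bound of Theorem~\ref{thm:central-upper} (there the coverage Assumption~\ref{ass:cov} and the matrix Chernoff bound, Lemma~\ref{lem:matrix-ch}, play the role that $\lambda' I$ plays here, so $\kappa$ does not surface in the semi-norm rate).
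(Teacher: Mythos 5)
Your proposal is correct and arrives at the same bound, but it routes the ridge $\lambda' I$ into the analysis in a slightly different way than the paper, and this is worth noting. You establish $\gamma$-strong convexity of $\widetilde{\cL}_{\cD}$ only with respect to $\norm{\cdot}_{\Sigma_{\cD}}$, apply Cauchy--Schwarz with an arbitrary $\lambda'>0$, and then convert $\norm{\Delta}_{\Sigma_\cD}^2$ to $\norm{\Delta}_{\Sigma_\cD+\lambda'I}^2 - 4\lambda'B^2$, solving the quadratic to produce a free $\sqrt{\lambda'}B$ bias term; the regularizer $\beta$ is then kept as a small, unconstrained positive constant whose contribution $\cT_2$ is lower order. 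The paper instead \emph{ties} $\lambda'$ to the regularizer by setting $\lambda'=\beta/(\gamma n)$, which makes the regularized loss $\gamma$-strongly convex directly with respect to $\norm{\cdot}_{\Sigma_\cD+\lambda'I}$, so no separate bias term is needed; in that version the $\sqrt{\lambda'}B$ scaling reappears via $\cT_2=\frac{\beta}{n}\theta^*$, and the final choice is $\beta=\sigma\sqrt{d\log(1/\alpha)}/B$, a specific (large) value of the hyperparameter. Both routes are valid and both reduce, after balancing $\lambda'$, to the same two-term bound; the practical difference is just which algorithmic knob ($\beta$ versus $\lambda'$) is being turned, since Theorem~\ref{thm:central-priv}'s privacy guarantee is indifferent to the choice of $\beta>0$. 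Your remaining ingredients --- Hanson--Wright for $\cT_1$ via $X(\Sigma_\cD+\lambda'I)^{-1}X^\top\preceq nI$, Gaussian-norm concentration for $\cT_3$, and the trade-off $\lambda'=\sigma\sqrt{d\log(1/\alpha)}/(\gamma nB)$ --- coincide with the paper's. Your closing observation that Assumption~\ref{ass:cov} and Lemma~\ref{lem:matrix-ch} are not actually invoked here (the ridge does their job) also matches the paper's proof, which never uses $\kappa$ despite the hypothesis listing Assumption~\ref{ass:cov}.
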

\begin{proof}
Let $ \widetilde{\cL}_{\cD}(\theta):= {l}_{\cD}(\theta) + \frac{\beta}{2n} \norm{\theta}_2^2 +  \frac{w^{\top} \theta}{n} $. We divide the proof into the following steps.

\textbf{Step 1:} Let $\Delta:= \hat{\theta}_{\texttt{obj}} - \theta^*$. Show that $c \norm{\Delta}_{\Sigma_{\cD}}^2 \le \norm{\nabla \widetilde{\cL}_{\cD}(\theta^*)}_{(\Sigma_{\cD} + \lambda I)^{-1}} \norm{\Delta}_{\Sigma_{\cD} + \lambda I}$, for some positive constant $c$.

To this end, note that we always have 
\begin{align*}
    \widetilde{\cL}_{\cD}(\theta^* + \Delta) -  \widetilde{\cL}_{\cD}(\theta^* ) - \inner{\nabla \widetilde{\cL}_{\cD}(\theta^*)}{\Delta} \le - \inner{\nabla \widetilde{\cL}_{\cD}(\theta^*)}{\Delta},
\end{align*}
since $\widetilde{\cL}_{\cD}(\theta^* + \Delta) = \widetilde{\cL}_{\cD}(\hat{\theta}_{\texttt{obj}}) \le \widetilde{\cL}_{\cD}(\theta^* )$ by the optimality of $\hat{\theta}_{\texttt{obj}}$. The RHS of above inequality can be upper bounded by $\norm{\nabla \widetilde{\cL}_{\cD}(\theta^*)}_{(\Sigma_{\cD} + \lambda I)^{-1}} \norm{\Delta}_{\Sigma_{\cD} + \lambda I}$ for any $\lambda > 0$. Thus, it remains to lower bound the Hessian. That is, we aim to show that for all $v, \theta \in \Theta_B$, 
\begin{align*}
    v^{\top} \nabla^2 \widetilde{\cL}_{\cD}(\theta)v \ge c \norm{v}_{\Sigma_{\cD}}^2
\end{align*}
for some positive constant $c >0$. By definition, the Hessian of $\widetilde{\cL}_{\cD}$ at any $\theta \in \Theta_B$ is 
\begin{align*}
    \nabla^2 \widetilde{\cL}_{\cD}(\theta) &= \frac{\beta}{n} I + \frac{1}{n}\sum_{i=1}^n \sigma(x_i^{\top}\theta)(1-\sigma(x_i^{\top}\theta)) x_i x_i^{\top}\\
    &\ge \gamma \norm{v}_{\Sigma_{\cD} + \lambda' I}^2,
\end{align*}
where the inequality follows from $\beta >0$ and $\inf_{z \in [-2LB, 2LB]} \sigma(z) (1-\sigma(z)) \ge \gamma:= \frac{1}{2 + \exp(-2LB) + \exp(2LB)}$ and $\lambda' := \beta / (\gamma n)$  Thus, for all $\Delta$ such that $\theta^* + \Delta \in \Theta_B$, by Taylor expansion, we have 
\begin{align*}
    \frac{\gamma}{2} \norm{\Delta}_{\Sigma_{\cD} + \lambda' I}^2 \le \norm{\nabla \widetilde{\cL}_{\cD}(\theta^*)}_{(\Sigma_{\cD} + \lambda' I)^{-1}} \norm{\Delta}_{\Sigma_{\cD} + \lambda' I}.
\end{align*}
\textbf{Step 2:} Bound $\norm{\nabla \widetilde{\cL}_{\cD}(\theta^*)}_{(\Sigma_{\cD} + \lambda' I)^{-1}}$ with high probability.

To this end, we note that 
\begin{align*}
   {\nabla \widetilde{\cL}_{\cD}(\theta^*)} &= { \underbrace{\frac{1}{n} \sum_{i=1}^n \left( x_i (\sigma(x_i^{\top}\theta^*) - y_i)\right)}_{\cT_1} + \underbrace{\frac{\beta}{n} \theta^*}_{\cT_2} + \underbrace{\frac{w}{n}}_{\cT_3}}.
\end{align*}
By the same analysis as in~\citet{zhu2023principled}, we have with probability at least $1-\alpha$
\begin{align*}
    \norm{\cT_1}_{(\Sigma_{\cD} + \lambda' I)^{-1}} \le C \cdot  \sqrt{\frac{d + \log(1/\alpha)}{n}},
\end{align*}
for some absolute constant $C$.
For $\cT_2$, we have
\begin{align*}
      \norm{\cT_2}_{(\Sigma_{\cD} + \lambda' I)^{-1}} \le \frac{\beta}{n\sqrt{\lambda'}}\norm{\theta^*} \le \frac{\beta B}{n \sqrt{\lambda'}}.
\end{align*}
For $\cT_3$, by the concentration of the norm of the Gaussian vector, we have with probability at least $1-\alpha$,
\begin{align*}
    \norm{\cT_3}_{(\Sigma_{\cD} + \lambda' I)^{-1}} \le \frac{1}{n\sqrt{\lambda'}} \norm{w} 
    \le O\left(\frac{\sigma}{n\sqrt{\lambda'}} \left(\sqrt{d} + \sqrt{\log(1/\alpha)}\right)\right).
\end{align*}
Putting all of them together, we have 
\begin{align*}
     \frac{\gamma}{2} \norm{\Delta}_{\Sigma_{\cD} + \lambda' I}^2 \le C' \left(\sqrt{\frac{d + \log(1/\alpha)}{n}} + \frac{\beta B}{n\sqrt{\lambda'}} + \frac{\sigma \sqrt{d\log(1/\alpha)} }{n\sqrt{\lambda'}} \right) \norm{\Delta}_{\Sigma_{\cD} + \lambda' I},
\end{align*}
which directly implies that 
\begin{align*}
    \norm{\Delta}_{\Sigma_{\cD} + \lambda' I} \le C_1 \cdot \frac{1}{\gamma} \sqrt{\frac{d + \log(1/\alpha)}{n}} +  C_1 \frac{1}{\gamma} \left( \frac{\beta B}{n\sqrt{\lambda'}} + \frac{\sigma \sqrt{d\log(1/\alpha)} }{n\sqrt{\lambda'}}\right).
\end{align*}
Thus, choosing $\lambda' = \frac{\sigma \sqrt{d\log(1/\alpha)}}{\gamma n B}$ (i.e., $\beta = \frac{\sigma \sqrt{d\log(1/\alpha)}}{B}$), yields that 
\begin{align*}
     \norm{\Delta}_{\Sigma_{\cD} + \lambda' I} \le O\left( \frac{1}{\gamma} \sqrt{\frac{d + \log(1/\alpha)}{n}} + \frac{\sqrt{\sigma} (d\log(1/\alpha))^{1/4}}{\sqrt{n \gamma B}}
 \right).
\end{align*}
Finally, plugging in noise value $\sigma = \frac{L \sqrt{8\log(2/\delta) + 4\epsilon}}{\epsilon}$, yileds our final result 
\begin{align*}
    \norm{\hat{\theta}_{\texttt{obj}} - \theta^{*} }_{\Sigma_{\cD} + \lambda' I} \le O\left( \frac{1}{\gamma} \sqrt{\frac{d + \log(1/\alpha)}{n}} + \frac{\sqrt{L} \left((\log(2/\delta) + 4\epsilon)d\log(1/\alpha)\right)^{1/4}}{\sqrt{n \epsilon \gamma B}}\right).
\end{align*}
Note that the privacy guarantee follows the same as before, hence completing the proof.
\end{proof}

\section{Generalization to Standard DP}
\label{app:stdDP}

In the main paper, we mainly focus on protecting the labels via label DP, which is well-motivated by many practical situations. It turns out that our technique can also be generalized to handle privacy protection of both features and labels, i.e., the standard DP notion. 

We start with the central model. Since objective perturbation~\citep{kifer2012private} was originally proposed to achieve standard DP in the central model, it would be natural to adopt it in our case. However, as before, we cannot directly employ the results in~\cite{kifer2012private} to prove privacy guarantee due to the gap in their Lemma 16. Instead, we found that for log loss, one can get rid of the independence issue in their Lemma 16, and hence establish the privacy guarantee with the same order of Gaussian noise. This is not true in general for arbitrary convex losses (where an additional $\sqrt{d}$ factor is required), as also observed in~\cite{agarwal2023differentially}. 

\textbf{Privacy.} In the following, we will show that with minor constant changes in the noise parameter of Theorem~\ref{thm:central-priv}, Algorithm~\ref{alg:objP} also achieves standard DP in the central model, i.e., the neighboring relation is now about a change of $(x_i, y_i)$ rather than only $y_i$ under label DP as considered in Theorem~\ref{thm:central-priv}.

\begin{theorem}[Privacy under standard DP]
\label{thm:central-priv-dp}
     Let $\epsilon \!>\!0$, $\delta \!\in\! (0,1)$ and Assumption~\ref{ass:bound} hold. Then, setting $\sigma \ge \frac{4L \sqrt{8\log(4/\delta) + 2\epsilon}}{\epsilon}$ and $\beta \ge \frac{4L^2}{\epsilon}$, 
Algorithm~\ref{alg:objP} satisfies $(\epsilon,\delta)$-DP in the central model.
\end{theorem}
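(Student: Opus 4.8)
The plan is to keep the scaffolding of the proof of Theorem~\ref{thm:central-priv} intact and change only the two places where the neighboring relation enters. The two nested Successive Approximation reductions of~\cite{kifer2012private} — first replacing the constraint $\theta\in\Theta_B$ by a penalty $i\,f(\theta)/n$ with $f(\theta)=\min_{z\in\Theta_B}\norm{\theta-z}_2$, then smoothing $f$ by convolution with a mollifier — carry over verbatim, since that penalty depends only on $\Theta_B$, not on the data. It therefore suffices to establish $(\epsilon,\delta)$-DP for each unconstrained, everywhere-differentiable surrogate $\widetilde\theta_{\cD}=\argmin_{\theta\in\Real^d}l_{\cD}(\theta)+\tfrac{\beta}{2n}\norm{\theta}_2^2+\tfrac{w^\top\theta}{n}+\tfrac1n r(\theta)$ with $r$ convex and smooth. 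Exactly as in Step~1 there, the change-of-variables identity gives the density $f_{\widetilde\theta_{\cD}}(t)=C\,\cT_{1,\cD}(t)\,\cT_{2,\cD}(t)$, where $\cT_{1,\cD}(t)=\exp\!\big(-\norm{\psi_{\cD}(t)}_2^2/(2\sigma^2)\big)$, $\cT_{2,\cD}(t)=\big|\det\nabla\psi_{\cD}(t)\big|$, and $\psi_{\cD}(\theta)=\sum_{i}(\sigma(x_i^\top\theta)-y_i)x_i+\beta\theta+\nabla r(\theta)$. Now neighbors $\cD,\cD'$ differ in a whole sample, say $(x_j,y_j)$ versus $(x_j',y_j')$, so both the $\cT_1$ and the $\cT_2$ ratios must be controlled.

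The genuinely new ingredient, relative to Theorem~\ref{thm:central-priv}, is the determinant ratio $\cT_{2,\cD}/\cT_{2,\cD'}$, which was identically $1$ under label DP but is not so once the feature changes. Here I would write $\nabla\psi_{\cD}(t)=A+\sigma_j(1-\sigma_j)\,x_jx_j^\top$ and $\nabla\psi_{\cD'}(t)=A+\sigma_j'(1-\sigma_j')\,x_j'x_j'^\top$, where $\sigma_j:=\sigma(x_j^\top t)$, $\sigma_j':=\sigma(x_j'^\top t)$, and the common matrix $A=\nabla^2 r(t)+\beta I+\sum_{i\neq j}\sigma(x_i^\top t)(1-\sigma(x_i^\top t))x_ix_i^\top$ satisfies $A\succeq\beta I$ by convexity of $r$. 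Since each correction is rank one, the matrix determinant lemma gives
\begin{align*}
\frac{\cT_{2,\cD}(t)}{\cT_{2,\cD'}(t)}=\frac{1+\sigma_j(1-\sigma_j)\,x_j^\top A^{-1}x_j}{1+\sigma_j'(1-\sigma_j')\,x_j'^\top A^{-1}x_j'},
\end{align*}
and using $\sigma_j(1-\sigma_j)\le\tfrac14$, $\norm{x_j}\le 2L$ (a difference of two $L$-bounded features, Assumption~\ref{ass:bound}) and $\norm{A^{-1}}\le 1/\beta$, both numerator and denominator lie in $[1,\,1+L^2/\beta]$, so the ratio lies in $[e^{-L^2/\beta},e^{L^2/\beta}]$. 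Taking $\beta\ge 4L^2/\epsilon$ makes this at most $e^{\epsilon/4}$. This is precisely where the regularizer $\beta$, inert in the label-DP argument, becomes essential.

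For the score ratio I would revisit Step~2 of the proof of Theorem~\ref{thm:central-priv}. Setting $v:=\psi_{\cD'}(t)-\psi_{\cD}(t)$ and evaluating the ratio along an actual output $t=\widetilde\theta_{\cD}$, so that $\psi_{\cD}(t)=-w\sim\cN(0,\sigma^2 I)$, we have $\log\big(\cT_{1,\cD}(t)/\cT_{1,\cD'}(t)\big)=\big(-2\inner{w}{v}+\norm{v}_2^2\big)/(2\sigma^2)$. Under label DP the cancellation $v=(y_j-y_j')x_j$ made $v$ deterministic; here $v=(\sigma_j'-y_j')x_j'-(\sigma_j-y_j)x_j$ genuinely depends on $w$ through $t$, which is exactly the gap in Lemma~16 of~\cite{kifer2012private}. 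The point that rescues the log loss is its generalized-linear structure: $v$ is a linear combination of the \emph{fixed} vectors $x_j,x_j'$ with coefficients of absolute value at most $1$, so $|\inner{w}{v}|\le|\inner{w}{x_j'}|+|\inner{w}{x_j}|$ for every value of $t$, while $\inner{w}{x_j}$ and $\inner{w}{x_j'}$ are, for the fixed features, one-dimensional Gaussians of standard deviation at most $2\sigma L$. A union bound then produces an event $E$ — a statement about $w$ alone — with $\prob{E}\ge 1-\delta$ on which $|\inner{w}{x_j}|,|\inner{w}{x_j'}|\le 2\sigma L\sqrt{2\log(4/\delta)}$, hence $|\inner{w}{v}|\le 4\sigma L\sqrt{2\log(4/\delta)}$, whereas $\norm{v}_2\le 4L$ unconditionally.

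Combining the two bounds finishes the argument: on $E$ one gets $\log\big(\cT_{1,\cD}(t)/\cT_{1,\cD'}(t)\big)\le 4L\sqrt{2\log(4/\delta)}/\sigma+8L^2/\sigma^2$, which the stated $\sigma$ bounds by at most $\epsilon/2+\epsilon/4$, so together with the determinant factor $e^{\epsilon/4}$ we obtain $f_{\widetilde\theta_{\cD}}(t)\le e^{\epsilon}f_{\widetilde\theta_{\cD'}}(t)$ for every output $t$ in the image $G$ of $E$; the standard decomposition $\prob{\cM(\cD)\in S}=\int_{S\cap G}f_{\widetilde\theta_{\cD}}+\int_{S\setminus G}f_{\widetilde\theta_{\cD}}\le e^{\epsilon}\prob{\cM(\cD')\in S}+\delta$, using $\prob{\cM(\cD)\notin G}\le\delta$, then delivers $(\epsilon,\delta)$-DP. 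I expect the main obstacle — and the reason this is not an immediate corollary of Theorem~\ref{thm:central-priv} — to be exactly this handling of the $w$-dependence of $v$ without paying a $\sqrt d$ factor: it is the bounded-scalar-times-fixed-vector form of the log-loss gradient, unavailable for a general Lipschitz loss, that keeps $\inner{w}{v}$ a one-dimensional object, and keeping track of the constants so that they land on the stated $\sigma$ and $\beta$ is the only remaining bookkeeping.
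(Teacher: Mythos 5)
Your proof is correct and mirrors the paper's proof in overall architecture: the two nested Successive Approximation reductions, the change-of-variables density formula, the split into the score ratio $\cT_1$ and the Jacobian ratio $\cT_2$, and the crucial observation that for the log loss the gradient difference $v=\psi_{\cD'}(t)-\psi_{\cD}(t)$ is a bounded-scalar combination of the two fixed feature vectors $x_j,x_j'$, so $\inner{w}{v}$ concentrates at a one-dimensional Gaussian rate without any $\sqrt{d}$ factor. Where you genuinely diverge is in the treatment of the determinant ratio. The paper pulls out the common Hessian $A_{\cD'}$ and writes the perturbation as a rank-two matrix $E=A_{\cD}-A_{\cD'}$, then invokes Lemma 10 of Chaudhuri--Monteleoni--Sarwate to compute $\det(A_{\cD'}+E)/\det(A_{\cD'})=(1+\lambda_1)(1+\lambda_2)$ and bounds this by $(1+L^2/\beta)^2\le e^{2L^2/\beta}\le e^{\epsilon/2}$. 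You instead decompose around the common part $A$ that omits the $j$-th sample, so that each Jacobian is $A$ plus a \emph{single} rank-one update, apply the matrix determinant lemma twice, and bound numerator and denominator separately to get the ratio in $[e^{-L^2/\beta},e^{L^2/\beta}]\subseteq[e^{-\epsilon/4},e^{\epsilon/4}]$. This is both simpler (no rank-2 eigenvalue lemma) and a factor of two tighter in the exponent. That extra slack is exactly what your $\epsilon$ budget needs: your (correctly computed) score-ratio bound is $\tfrac{4L\sqrt{2\log(4/\delta)}}{\sigma}+\tfrac{8L^2}{\sigma^2}\le\tfrac{3\epsilon}{4}$ under the stated $\sigma$, whereas the paper's stated $\cT_1$-bound of $e^{\epsilon/2}$ appears to have dropped a factor of $2$ (it writes $2L\sigma\sqrt{8\log(4/\delta)}$ where the union bound actually yields $4L\sigma\sqrt{8\log(4/\delta)}$). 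Your tighter $e^{\epsilon/4}$ determinant factor plus $e^{3\epsilon/4}$ score factor sums cleanly to $e^{\epsilon}$ with the stated $\sigma$ and $\beta$, so your accounting is actually the one that closes. You also spell out the final $\int_{S\cap G}+\int_{S\setminus G}$ decomposition giving the $+\delta$, which the paper leaves implicit.
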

\begin{proof}
    As in the proof of Theorem~\ref{thm:central-priv}, we will use two Successive Approximations, which allows us to only focus on the following problem
     \begin{align*}
        \widetilde{\theta}_{\cD} = \argmin_{\theta \in \Real^d} {l}_{\cD}(\theta) + \frac{\beta}{2n} \norm{\theta}_2^2 +  \frac{w^{\top} \theta}{n} + \frac{1}{n}r(\theta).
    \end{align*}
    Also, as before, we have the following PDF. 
    \begin{align}
    \label{eq:pdf-dp}
        f_{\widetilde{\theta}_{\cD}}(t) = C \underbrace{\exp\left( - \frac{\norm{\psi_{\cD}(t)}_2^2}{2\sigma^2}\right)}_{\cT_{1,\cD}}\cdot \underbrace{\left| \det \left[ \frac{d \psi_{\cD} (\theta) }{d \theta} |_{\theta = t} \right]\right|}_{\cT_{2,\cD}}
    \end{align}
    We are again left to bound the two ratios. To this end, we first look at the ratio of $\cT_{2,\cD} / \cT_{2,\cD'}$. Note that the matrix inside the determinant in~\eqref{eq:pdf-dp} is the Hessian of $l_{\cD}$ plus some common terms, given by 
    \begin{align*}
        A_{\cD} := \nabla^2 r(\theta)|_{\theta = t} +   {\beta} I + \nabla^2 {l}_{\cD} (\theta) |_{\theta = t} =  \nabla^2 r(\theta)|_{\theta = t} + {\beta} I + \sum_{i=1}^n \sigma(x_i^{\top}t)(1-\sigma(x_i^{\top}t)) x_i x_i^{\top},
    \end{align*}
    which now depends on $x_i$. Hence, we need some additional steps to bound this ratio under standard DP. In particular, we define $E:= A_{\cD} - A_{\cD'} = \sigma(x_s^{\top}t)(1-\sigma(x_s^{\top}t)) x_s x_s^{\top} - \sigma(x_s'^{\top}t)(1-\sigma(x_s'^{\top}t)) x_s' x_s'^{\top}$, where $\cD, \cD'$ differs in one single sample at index $s$. Thus, the rank of $E$ is most two and moreover the sum of largest and second largest eigenvalue of $E$ satisfies
    \begin{align*}
        |\lambda_1(E)| + |\lambda_2(E)| \le \frac{1}{4} \cdot 4L^2 + \frac{1}{4} \cdot 4L^2 = 2L^2,
    \end{align*}
    where we have used the boundedness assumption. This also implies that 
    \begin{align*}
        |\lambda_1(E)| \cdot  |\lambda_2(E)| \le L^4. 
    \end{align*}
    To proceed, we will leverage the following result. 
    \begin{claim}[Lemma 10 in~\cite{chaudhuri2011differentially}]
    \label{clm:det}
        If $A$ is full rank and if $E$ has rank at most $2$, then 
        \begin{align*}
            \frac{\det(A+E) - \det(A)}{\det(A)} = \lambda_1(A^{-1}E) + \lambda_2(A^{-1}E) + \lambda_1(A^{-1}E) \cdot \lambda_2(A^{-1}E),
        \end{align*}
        where $\lambda_j(Z)$ is the $j$-th largest eigenvalue of matrix $Z$.
    \end{claim}
    Note that for $j = 1,2$, $|\lambda_j(A_{\cD'}^{-1}E)| \le \frac{|\lambda_j(E)|}{\beta}$ due to the fact that the minimal eigenvalue of $A_{\cD'}$ is at least $\beta$. Thus, by Claim~\ref{clm:det}, we have
    \begin{align*}
        \frac{\cT_{2,\cD}}{\cT_{2,D'}} = \frac{|\det(A_{\cD'} + E)|}{|\det(A_{\cD'})|} &= \left|1 + \lambda_1(A_{\cD'}^{-1}E) + \lambda_2(A_{\cD'}^{-1}E) + \lambda_1(A_{\cD'}^{-1}E) \cdot \lambda_2(A_{\cD'}^{-1}E)\right| \\
        &\le 1+ \frac{2L^2}{\beta} + \frac{L^4}{\beta^2}\\
        & =\left(1 + \frac{L^2}{\beta}\right)^2\\
        &\le e^{2L^2/\beta}.
    \end{align*}
    Thus, when $\beta \ge \frac{4L^2}{\epsilon}$, we have $ \frac{\cT_{2,\cD}}{\cT_{2,D'}} \le e^{\epsilon/2}$. 

    Now, let us turn to bound $ \frac{\cT_{1,\cD}}{\cT_{1,D'}}$. In particular, we have 
    \begin{align}
    \label{eq:T1-ratio-dp}
        \frac{\cT_{1,\cD}}{\cT_{1,\cD'}} &= \exp\left(\frac{\norm{\psi_{\cD'}(t)}_2^2 - \norm{\psi_{\cD}(t)}_2^2}{2\sigma^2}\right)\nonumber\\
        &= \exp\left(\frac{2\inner{\psi_{\cD}(t)}{\psi_{\cD'(t)} - \phi_{\cD}(t)} + \norm{\psi_{\cD'}(t) - \psi_{\cD}(t)}_2^2}{2\sigma^2}\right)\nonumber\\
         &= \exp\left(\frac{2\inner{-w}{\psi_{\cD'(t)} - \psi_{\cD}(t)} + \norm{\psi_{\cD'}(t) - \psi_{\cD}(t)}_2^2}{2\sigma^2}\right).
    \end{align}
    where we know that $\psi_{\cD}(t) = -w$, which is distributed according to a normal. However, one needs to be careful here to show that  $\psi_{\cD'}(t) - \psi_{\cD}(t)$ is \emph{independent} of $w$ so that one can claim that the inner product is also distributed according to a normal.  In our case, for two neighboring datasets $\cD, \cD'$ that differs only in $(x_s, y_s)$ and $(x_s', y_s')$  we have 
    \begin{align*}
        \psi_{\cD'}(t) - \psi_{\cD}(t) &=  \left(\frac{1}{1+\exp(-\inner{x_s'}{t})} - y_s'\right) x_s' - \left(\frac{1}{1+\exp(-\inner{x_s}{t})} - y_s\right) x_s.
    \end{align*}    
    Then, we have 
    \begin{align*}
         |\inner{-w}{\psi_{\cD'}(t) - \psi_{\cD}(t)}| \le |\inner{w}{x_s'}| + |\inner{w}{x_s}|,
    \end{align*}
    which combined with the concentration of normal distribution and boundedness of $x_s, x_s'$, leads to that with probability at least $1-\delta$,
    \begin{align*}
         |\inner{-w}{\phi_{\cD'}(t) - \phi_{\cD}(t)}| \le 4L \sigma \sqrt{2\log(4/\delta)}.
    \end{align*}
    Meanwhile, we have $\norm{\psi_{\cD'}(t) - \psi_{\cD}(t)} \le 4L$ by Assumption~\ref{ass:bound}
    
    Putting everything back to~\eqref{eq:T1-ratio-dp}, yields that with probability at least $1-\delta$
    \begin{align*}
        \frac{\cT_{1,\cD}}{\cT_{1,\cD'}} &\le \exp\left(\frac{2L\sigma\sqrt{8\log(4/\delta)} + (4L)^2}{2\sigma^2}\right) \lep{a} \exp(\epsilon/2),
    \end{align*}
    where (a) holds if $\sigma \ge \frac{4L \sqrt{8\log(4/\delta) + 2\epsilon}}{\epsilon}$. Combining this with $\cT_{2,\cD} / \cT_{2,\cD'} = e^{\epsilon/2}$, yields the required result, hence finishing the proof.\end{proof}

    \textbf{Utility.} For the estimation error under $\ell_2$ norm, one can follow the same proof of Theorem~\ref{thm:central-upper}. One difference is to remember to check the condition of $\beta$ in Theorem~\ref{thm:central-priv-dp}, which can be satisfied by conditions on $n$ and $\epsilon$. For the estimation error in semi-norm, one needs additional steps compared to the proof of Theorem~\ref{thm:central-upper-semi}, since now it needs to establish the concentration of $\norm{\cdot}_{\widetilde{\Sigma}_{\cD} + \lambda I}$, where $\widetilde{\Sigma}_{\cD}$ is the private covariance matrix. First, one can privatize the covariance matrix $\Sigma_{\cD}$ via the standard Gaussian mechanism. Then, to guarantee a semi-positive nature of $\widetilde{\Sigma}_{\cD} + \lambda I$, one needs to choose $\lambda$ properly, which can be done by following the routine in previous DP linear bandits (see~\cite{shariff2018differentially,pmlr-v162-chowdhury22a}). Finally, one can translate the concentration $\norm{\cdot}_{\Sigma_{\cD} + \lambda I}$ in Theorem~\ref{thm:central-upper-semi} to $\norm{\cdot}_{\widetilde{\Sigma}_{\cD} + \lambda I}$ in Theorem~\ref{thm:central-upper-semi} via standard Gaussian concentration and the property of linear summation of Gaussian. Ignoring all other factors ($\gamma$, $B$, $L$), the final cost of privacy should be on the order of $\frac{(d\log(1/\delta)^{1/4}}{\sqrt{n\epsilon}}$ for $\epsilon \in (0,1)$. One subtlety again is that one needs to check $\beta$ satisfies the condition in Theorem~\ref{thm:central-priv-dp}.

    \begin{remark}[Remark on local DP]
        In contrast to local label DP in the main paper, establishing local standard DP is challenging in our offline reward estimation setting, which is \emph{non-interactive}. This is different from interactive online logistic regression in~\cite{duchi2018minimax}.
        In fact, it is in general not straightforward to derive an efficient algorithm even for ERM under the non-interactive setting~\cite{smith2017interaction}, let alone the parameter estimation problem in our setting. We leave it to one of our future research directions. 
    \end{remark}

\end{document}